\colorlet{shadecolor}{orange!15}
\definecolor{mydarkblue}{rgb}{0,0.08,0.45}
\definecolor{rliableolive}{HTML}{BBCC33}
\definecolor{rliableorange}{HTML}{9A5F51}
\definecolor{rliableblue}{HTML}{77AADD}
\definecolor{rliablered}{HTML}{EE8866}
\definecolor{LightCyan}{rgb}{0.88,1,1}
\definecolor{darkblue}{HTML}{2878D9}
\definecolor{navyblue}{HTML}{0000FF}
\newtcolorbox{AIbox}[2][]{aibox,title=#2,colback=rliableblue!10!white,#1}
\newenvironment{olivebox}{%
    \begin{tcolorbox}[colback=rliableolive!10!white,colframe=black,boxsep=3pt,top=4pt,bottom=4pt,left=3pt,right=3pt]
}{%
    \end{tcolorbox}
}
\def\##1\#{\begin{align}#1\end{align}}
\def\$#1\${\begin{align*}#1\end{align*}}
\newtheorem{hypothesis}{Hypothesis}
\acrodef{llm}[LLM]{Large Language Models}
\acrodef{ffn}[FFN]{Feed-Forward Networks}
\acrodef{mha}[MHA]{Multi-Head Attention}
\acrodef{svd}[SVD]{Singular Value Decomposition}
\acrodef{gd}[GD]{Gradient Descent}
\acrodef{oco}[OCO]{Online Convex Optimization}
\newcommand{\ff}{\texttt{ff}}
\newcommand{\sm}{\texttt{sm}}
\newcommand{\gate}{\text{gate}}
\newcommand{\head}{\text{head}}
\newcommand{\inn}{\text{in}}
\newcommand{\out}{\text{out}}
\newcommand{\gd}{\text{GD}}
\newcommand{\adam}{\text{SignGD}}
\newcommand{\muon}{\text{Muon}}
\newcommand{\nor}{\text{norm}}
\newcommand{\opt}{\text{opt}}
\newcommand{\maxd}{\varrho}
\definecolor{lightroyalblue}{HTML}{F6F8FD} 
\definecolor{lightorange}{RGB}{252,236,219}
\definecolor{royalblue}{HTML}{4169E1}
\definecolor{lighterblue}{HTML}{f2fafd}  
\newtcolorbox{abox}{colback=lightorange,frame empty}
\definecolor{LightCyan}{rgb}{.9, .95, 1.}
\definecolor{rowblue}{RGB}{232,241,255}
\title{\textbf{Muon Outperforms Adam in Tail-End\\ Associative Memory Learning}}
\author{Shuche Wang\textsuperscript{1,$\ast$}\quad Fengzhuo Zhang\textsuperscript{1,$\ast$,$\dagger$}\quad Jiaxiang Li\textsuperscript{2,$\ast$}\quad Cunxiao Du\textsuperscript{3} \\\quad Chao Du\textsuperscript{3}\quad Tianyu Pang\textsuperscript{3}\quad Zhuoran Yang\textsuperscript{4} \quad Mingyi Hong\textsuperscript{2} \quad Vincent Y.~F.~Tan\textsuperscript{1}\\
\textsuperscript{1}National University of Singapore\quad \textsuperscript{2}University of Minnesota \quad \\\textsuperscript{3}Sea AI Lab  \quad \textsuperscript{4}Yale University\\
\texttt{shuche.wang@u.nus.edu}\quad \texttt{fzzhang@u.nus.edu} \quad \texttt{li003755@umn.edu}\quad\\ 
\texttt{cnsdunm@gmail.com}\quad \texttt{ \{duchao, tianyupang\}@sea.com}\\ \texttt{zhuoran.yang@yale.edu}\quad\texttt{mhong@umn.edu}\quad \texttt{vtan@nus.edu.sg}
}
\date{}
\begin{document}
    \maketitle
    \begingroup
    \renewcommand\thefootnote{}\footnotetext{$\ast$ Equal contribution.}\footnotetext{$\dagger$ Project Lead.}\footnotetext{Fengzhuo conducted this work during his internship at Sea AI Lab.}
    \endgroup

    \begin{abstract}
        The Muon optimizer is consistently faster than Adam in training Large Language Models (LLMs), yet the mechanism underlying its success remains unclear. This paper demystifies this mechanism through the lens of associative memory. By ablating the transformer components optimized by Muon, we reveal that the associative memory parameters of LLMs, namely the Value and Output (VO) attention weights and Feed-Forward Networks (FFNs), are the primary contributors to Muon’s superiority. Motivated by this associative memory view, we then explain Muon’s superiority on real-world corpora, which are intrinsically heavy-tailed: a few classes (tail classes) appear far less frequently than others. The superiority is explained through two key properties: (i) its update rule consistently yields a more isotropic singular spectrum than Adam; and as a result, (ii) on heavy-tailed data, it optimizes tail classes more effectively than Adam. Beyond empirical evidence, we theoretically confirm these findings by analyzing a one-layer associative memory model under class-imbalanced data. We prove that Muon consistently achieves balanced learning across classes regardless of feature embeddings, whereas Adam can induce large disparities in learning errors depending on embedding properties. In summary, our empirical observations and theoretical analyses reveal Muon’s core advantage: its update rule aligns with the outer-product structure of linear associative memories, enabling more balanced and effective learning of tail classes in heavy-tailed distributions than Adam. \looseness=-1 
        
    \end{abstract}

    \section{Introduction}
The effectiveness of Adam~\citep{kingma2014adam} across diverse training scenarios
has made it one of the most widely used optimizers for neural networks, serving
as a cornerstone of the tremendous successes of Large Language Models (LLMs). Building
on this foundation, Muon~\citep{jordan6muon} has emerged as a matrix-parameter optimizer
designed to surpass Adam. Empirical studies show that Muon is nearly $2$ times
faster than Adam across a wide range of model sizes and architectures~\citep{liu2025muon,jordan6muon}.
Its key innovation is to replace the raw gradient with the sum of its normalized
orthogonal factors, which can be interpreted as performing steepest descent with
respect to the spectral norm~\citep{bernstein2024old}. \looseness=-1

However, despite its empirical success, a rigorous understanding of why and how Muon
outperforms Adam in transformers remains incomplete. In particular, the steepest
gradient descent interpretation 
does not clarify why optimization with respect to the spectral norm, as in Muon,
should outperform optimization with respect to the infinity norm (for vectors), as in Adam. 
Consequently, convergence analyses of Muon derived from this interpretation fail to account for its observed superiority
over Adam~\citep{li2025note,shen2025convergence}.


This paper takes the first step toward understanding the mechanisms underlying
Muon’s superiority over Adam in training LLMs. Specifically, we first ask the question:
\begin{quote}
    \textit{Which transformer components benefit most from Muon’s matrix-norm–based
    optimization compared to Adam?} \vspace{-.5em}


\end{quote}
To address this question, we apply Muon to different transformer components. Our experiments consistently show
that the more rapid convergence of the validation loss using the Muon optimizer compared
to Adam is primarily due to the former's focus on the value-output (VO) matrices
of the attention mechanism and the \ac{ffn} blocks. 
This leads to our first key insight: VO and \ac{ffn} blocks, which serve as the primary
associative memory stores in the model \citep{geva2020transformer,bietti2023birth},
are the main beneficiaries of Muon’s optimization strategy. This naturally raises the following question:
\begin{quote}
    \textit{What structural features of the transformer allow Muon to
    optimize these components more effectively?}
\end{quote}

Building on the previous finding, we address this question by linking Muon’s update mechanism
to the learning dynamics of associative memory. Prior work suggests that the
behavior of these memory components can be modeled as a sum of outer products
representing stored facts~\citep{meng2022locating}. Since Muon's update
assigns equal update magnitudes to each outer product  of the gradient corresponding to orthogonal singular directions, 
we hypothesize that it optimizes associative memories more effectively than Adam because: (i) Muon’s spectral normalization procedure balances the rates of learning
of these outer products. (ii) Thus, when training on heavy-tailed data
(i.e., where a few facts appear much more frequently than the rest),  
Muon reduces 
the dominance of frequent (head) facts and enables more effective learning
from infrequent (tail) facts compared to Adam.



We validate these hypotheses through a combination of empirical analysis and theoretical
modeling.
Empirically, we conduct two experiments. First,  we   measure the singular value spectra of the weight matrices and show that
Muon consistently yields more isotropic representations than Adam, indicating that
its normalization prevents spectral energy from concentrating in dominant components.
Second, we evaluate the performance of both optimizers 
on a knowledge-intensive, heavy-tailed task to demonstrate the practical benefit of Muon’s more balanced updates: while both optimizers perform well on head classes (frequent in
training data), Muon outperforms Adam on tail classes (rare in training data), leading
to more stable and uniform convergence. 

Theoretically, we focus on a one-layer linear associative memory model
to rigorously explain these empirical findings. Under class imbalance in the
training data, mimicking a heavy-tailed distribution, we show that Muon
maintains balanced learning across classes, regardless of the feature embeddings.
In contrast, we prove that Adam’s performance is unstable and strongly dependent
on the embedding structure, which can lead to large disparities in learning
error across classes. By closely examining the parameter updates, we find that
the singular spectrum of weight matrices trained by Muon is nearly isotropic, whereas Adam’s is uneven. \looseness=-1

Summarizing  the empirical and theoretical findings, we identify a clear mechanism underlying Muon’s superiority: 
\begin{olivebox}
    The Muon update rule is aligned with the outer-product structure of linear associative memories, enabling more balanced and effective learning of tail classes in heavy-tailed distributions as compared with Adam.
\end{olivebox}



    \section{Related Works}\label{sec:related_works}
Adam, proposed by \cite{kingma2014adam}, was designed to make \ac{gd} adaptive to the complex optimization landscape of neural networks. Existing works analyze Adam from two primary perspectives: online optimization and feature learning. The online convex optimization view focuses on Adam's properties when optimizing convex or non-convex loss functions. From this perspective, \cite{chen2018convergence} and \cite{zhou2018convergence} derive non-convex convergence results for Adam, and a series of subsequent works continuously relaxed the required assumptions for Adam's convergence while tightening its convergence rate. For instance, 
\cite{zou2019sufficient} proposes a set of easy-to-verify sufficient conditions for Adam's update rules to guarantee convergence. \cite{defossez2020simple} derives the tightest dependency on the heavy ball momentum parameters. More recently, \cite{zhang2022adam} demonstrates that Adam can converge without modification of its procedures, and \cite{li2023convergence} relaxes the smoothness assumption by employing an adaptive Lipschitz constant for gradients. The feature learning view, on the other hand, highlights the relationship between deep learning characteristics and Adam, focusing more on how Adam's mechanisms influence the properties of learned features within deep networks. For example, \cite{pan2023toward} examines the sharpness of \ac{gd} and Adam and relates Adam's superiority to its low sharpness. \cite{kunstner2024heavy} finds that Adam is better at learning heavy-tailed distributions than \ac{gd}. Furthermore, \cite{zhang2024transformers} shows that Adam is adaptive to heterogeneous Hessian structures, thus optimizing faster than \ac{gd}. More literature on Adam is included in the survey by \cite{abdulkadirov2023survey}.

Muon, proposed by \cite{jordan6muon}, applies spectral normalization of the gradient to update parameters. At a high level, Muon can be understood as steepest descent with respect to the matrix operator norm~\citep{bernstein2024old}. Alternatively, it can be viewed as maximizing the feature update subject to a parameter update constraint~\citep{yang2023spectral}. Experiments show that Muon consistently outperforms Adam across diverse model sizes and architectures, including dense transformers and Mixture-of-Experts~\citep{liu2025muon,jordan6muon}. Building on this, \cite{si2025adamuon} introduces an adaptive variant of Muon. To explain its advantages, \cite{lau2025polargrad} introduces a unifying preconditioning framework, distinguishing optimizers that address curvature anisotropy (like Adam) from those that address gradient anisotropy (like Muon), and proposes a generalized optimizer class named PolarGrad. \cite{sato2025analysis} and \cite{shah2025practical} examine the critical batch size of Muon, while other works analyze its convergence in convex and non-convex settings~\citep{li2025note,an2025asgo,kovalev2025understanding,pethick2025training,shen2025convergence}. Concurrently, \cite{vasudeva2025generalization} study Muon on shallow ViTs for computer vision, grounding their results for gradient descent and Muon in linear regression. In contrast, we investigate Muon in the context of LLMs, focusing on its effects on associative memory in next-token prediction.

Associative Memories have a long history in neural network design and knowledge storage~\citep{hopfield1982neural,kohonen2009correlation,willshaw1969non}. They have inspired architectures capable of retaining long histories, including RNNs~\citep{orvieto2023resurrecting} and Mamba~\citep{zhang2024motion}. With the success of transformers, recent work has examined them through the lens of associative memories. \cite{geva2020transformer} and \cite{dai2021knowledge} show that feed-forward modules store knowledge in $W_\text{out}$, while \cite{bietti2023birth} demonstrates that the attention output matrix $W_O$ also encodes associations of knowledge. Building on these findings, a series of works edit knowledge directly by modifying these weights~\citep{meng2022mass,fang2024alphaedit}. Beyond empirical results, theoretical analyses have further clarified how transformers leverage associative memories: \cite{bietti2023birth} conducts a dynamic analysis of memory formation, while \cite{nichani2024understanding} constructs explicit associative memory mechanisms in both attention and feed-forward modules.

\section{Preliminaries}
\label{sec:prelim}

In this section, we first introduce the notations and then present the Muon optimizer, the transformer architecture, and their associative memory components.

\textbf{Notations.} Let $[N]$ for the set $\{1,\dots,N\}$. For a matrix $X \in \mathbb{R}^{d \times N}$, $X_i$ is its $i$-th column and $X_{:,-1}$ is its last column. $I_{K,K}$ is the $K \times K$ identity matrix, $\bbI_{K}$ is all-ones vector and $J_{K,K}$ is the all-ones matrix. $\odot$ denotes the element-wise product. 

\textbf{Muon} is an optimizer tailored for matrix parameters
that replaces the raw (or momentum) gradient with the sum of its \emph{normalized
orthogonal factors}, producing a scale-invariant, norm-controlled update direction~\citep{jordan6muon}.
For a weight matrix $W\in\mathbb{R}^{m\times n}$ at step $t$, let $G_{t}=\nabla_{W}
\mathcal{L}(W_{t})$ denote its gradient. Muon maintains a momentum accumulator
of gradients as
$B_{t} = \mu B_{t-1}+ G_{t}\text{ with }B_{0} = 0, \text{ and }\mu\in[0,1).$ At
each step, Muon computes the \ac{svd} of $B_{t}$ as
$B_{t} = U_{t} S_{t} V_{t}^{\top} \text{ with }U_{t}\in\mathbb{R}^{m\times r_t},\;
V_{t}\in\mathbb{R}^{n\times r_t},$
where $r_{t} = \operatorname{rank}(B_{t})$, and form the nearest (semi)–orthogonal
matrix $O_{t} = U_{t} V_{t}^{\top}$. Then Muon updates the parameter as $W_{t+1}=
W_{t} - \eta_{t} O_{t}.$ In practice, one can approximate $O_{t}$ using a fixed
number (e.g., $5$) of Newton--Schulz iterations applied to $B_{t}(B_{t}^{\top} B_{t}
)^{-1/2}$, which avoids the full SVD while preserving the scale normalization
effect. \cite{bernstein2024old} interprets Muon as steepest gradient descent with respect to the matrix operator norm. Concretely, the Muon update $O_t$ can be characterized (up to a scalar factor) as the solution to
\begin{equation*}
    \argmin_{W} \left[ \langle B_t, W\rangle_F + \frac{\lambda}{2} \|W\|_{\ell_2 \to \ell_2}^2 \right],
\end{equation*}
where $\|\cdot\|_{\ell_2 \to \ell_2} $ denotes the matrix operator norm, i.e., the largest singular value, and $\lambda \in \mathbb{R}$ determines the step size. By contrast, as explained in Appendix~\ref{app:dgd_understand}, Adam can be viewed as steepest gradient descent with respect to the vector norm. However, this perspective alone does not explain why using the matrix operator norm rather than the vector norm leads to better performance. 

\textbf{Transformers} serve as the backbone of LLMs. It predicts the probability of
the next token given a sequence of $N$ tokens~\citep{radford2019language}. 
A sequence of $N$ tokens is embedded into a matrix $X^{(0)}\in\mathbb{R}^{d\times N}$. The first layer takes $X^{(0)}$ as the
input, and each subsequent layer takes the previous layer's output as its input.
Every layer $\ell\in[L]$ processes its input through two sequential components: an
attention module and a \ac{ffn} module. The attention module computes
\begin{align}
	H^{(\ell)} 
	          & =X^{(\ell-1)}+\sum_{h=1}^{H}W_{O,h}^{(\ell)}W_{V,h}^{(\ell)}X^{(\ell-1)}\sm\big(X^{(\ell-1),\top}W_{K,h}^{(\ell),\top}W_{Q,h}^{(\ell)}X^{(\ell-1)}\big),\label{eq:attn}
\end{align}
where $\sm(\cdot)$ is the column-wise softmax operator, $H$ is the number of
attention heads, $W_{Q,h}^{(\ell)},W_{K,h}^{(\ell)}\in\mathbb{R}^{d_k\times d}$ capture
token relationships, and
$W_{V,h}^{(\ell)}\in\mathbb{R}^{d_v\times d},W_{O,h}^{(\ell)}\in\mathbb{R}^{d\times
d_v}$
apply linear transformations. The feed-forward module then updates the representation
as 
\begin{align}
	X^{(\ell)}= H^{(\ell)}+ \ff(H^{(\ell)}, W_{\inn}^{(\ell)},W_{\out}^{(\ell)})=H^{(\ell)}+ W_{\out}^{(\ell)}\sigma(W_{\inn}^{(\ell)}H^{(\ell)}),\label{eq:ff}
\end{align}
where $\sigma(\cdot)$ is the element-wise activation function, and $W_{\inn}^{(\ell)}
\in\mathbb{R}^{d_f\times d}, W_{\out}^{(\ell)}\in\mathbb{R}^{d\times d_f}$ are learnable
parameters. In addition to the \ac{ffn} in Eqn.~\eqref{eq:ff}, a gated variant is widely used in \ac{llm}s~\citep{touvron2023llama,hui2024qwen2}, which replaces the standard form with
\begin{align*}
	\ff_{\gate}(H^{(\ell)}, W_{\inn}^{(\ell)},W_{\out}^{(\ell)}, W_{\gate}^{(\ell)})= W_{\out}^{(\ell)}\big(\sigma(W_{\inn}^{(\ell)}H^{(\ell)})\odot (W_{\gate}^{(\ell)}H^{(\ell)}) \big),
\end{align*}
where $\odot$ is the Hadamard product, and $W_{\gate}^{(\ell)}\in\mathbb{R}^{d_f\times
d}$ is an additional mapping. After $L$ layers, the final hidden state of the last
token, $X_{-1}^{(L)}$, is projected by the language model head $E_{\head}\in\mathbb{R}
^{K\times d}$ to produce logits $E_{\head}X_{-1}^{(L)}$, which has a vocabulary
of size $K$.

\textbf{Associative memory} refers to architectures that store and retrieve patterns
based on learned associations between inputs and outputs. Recent research has
examined \emph{linear} associative memory in LLMs. Specifically, consider a triplet
$(s,r,o)$, where $s$ is the subject, $r$ the relation, and $o$ the object (e.g.,
$s=$``The United Nations headquarters'', $r=$``is located in'', $o=$``New York
City''). A linear associative memory $W$ maps a key vector $e_{s}$ encoding
$(s,r)$ to a value vector $e_{o}$ encoding $o$, such that $e_{o} = W e_{s}$ holds
for all possible $(s,r,o)$~\citep{nichani2024understanding}. Under the orthogonality of embeddings $e_{s}$ and
$e_{o}$, $W$ can be expressed as $W=\sum_{i=1}^{K}e_{o_i}e_{s_i}^{\top}$, where the
summation is taken over the index $i$ of $K$ facts. Prior work has investigated associative
memory in both attention and \ac{ffn} modules. For the attention module, \citet{bietti2023birth}
showed that the parameter $W_{O}$ can serve as a linear associative memory when $W
_{V}$ is fixed. Since $W_{O}$ and $W_{V}$ play symmetric roles, we also treat $W_{V}$
as part of the associative memory parameters. For \ac{ffn}, prior work on knowledge editing~\citep{geva2020transformer,dai2021knowledge,meng2022locating,meng2022mass} has shown that this module functions as an associative memory and can be well approximated by linear associative memory models. Thus, throughout this
paper, we refer to $W_{O}$, $W_{V}$, and \ac{ffn} in LLMs as the
{\em associative memory parameters}.

    \section{Main Results}
\label{sec:empirical}


\subsection{Associative Memories Are Main Beneficiaries of Muon}
\label{sec:am}

In the Muon implementation~\citep{jordan6muon}, the token embedding and language model head parameters are optimized with Adam rather than Muon. This observation motivates a closer examination of the efficacy of Muon across different components of the transformer architecture. In this section, we identify the transformer components that benefit most from Muon by measuring validation loss on the FineWeb dataset~\citep{penedo2024fineweb} using a 160M NanoGPT
model. We adopt a two–stage protocol. First, in the ``Independent Blocks'' setting,
we apply Muon to a single block at a time while keeping all other blocks on Adam,
covering the attention projections $W_{Q}, W_{K}, W_{V}, W_{O}$ and the feed-forward
matrices $W_{\inn}$, $W_{\out}$. Second, in the ``Combined Configurations'' setting,
we apply Muon to the most impactful subsets identified in the first stage to
examine whether a partial application can recover the performance gains of full
Muon. As introduced in Section~\ref{sec:prelim}, we evaluate both gated and non-gated
\ac{ffn} variants of NanoGPT. The experimental details are in Appendix~\ref{app:exp_details}.

\begin{figure}[t]
    \centering
    \subfigure[Independent Blocks with Non-gated \ac{ffn}]{ \includegraphics[width=0.48\textwidth]{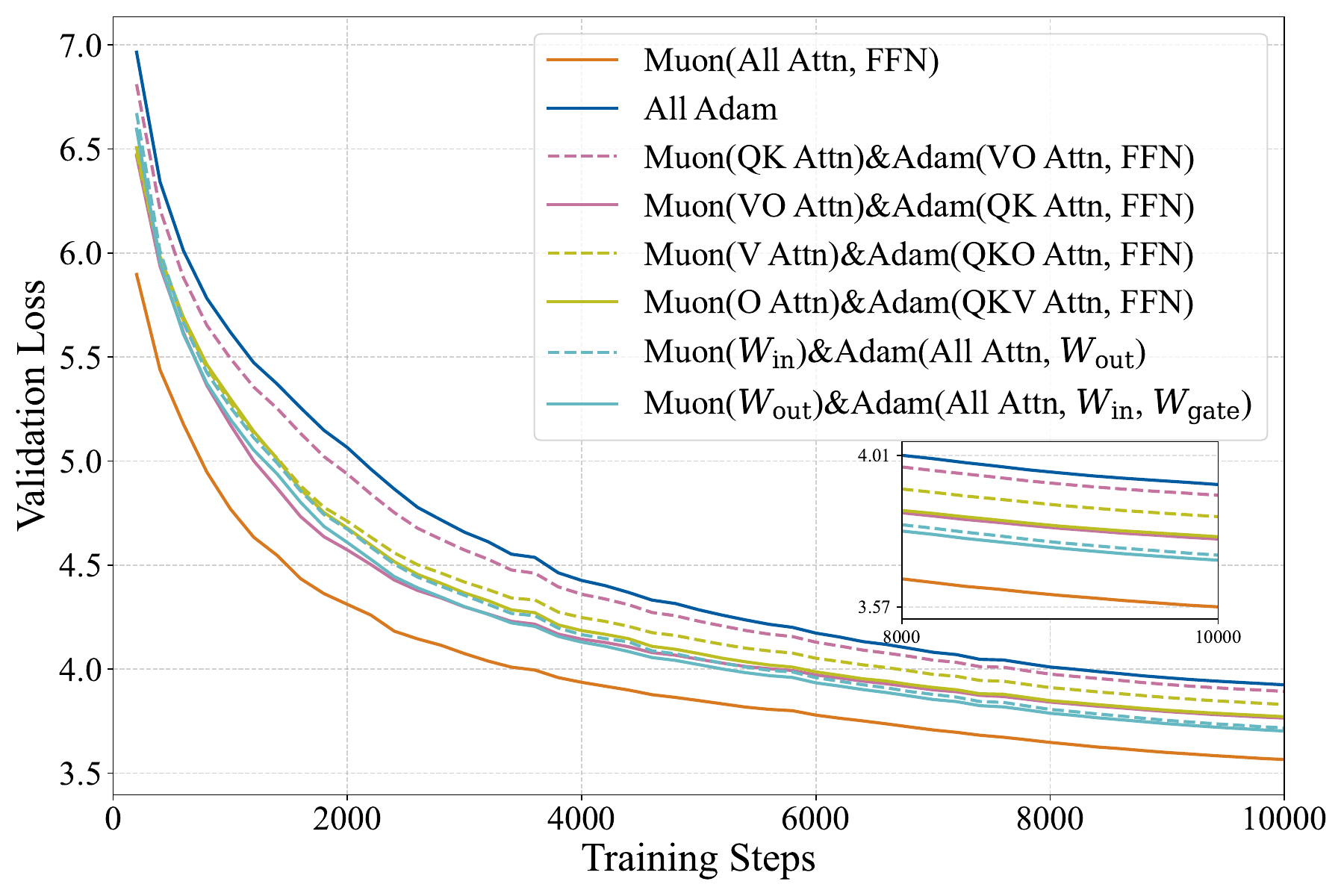}\label{fig:non_indep}}
    \subfigure[Independent Blocks with Gated \ac{ffn}]{ \includegraphics[width=0.48\textwidth]{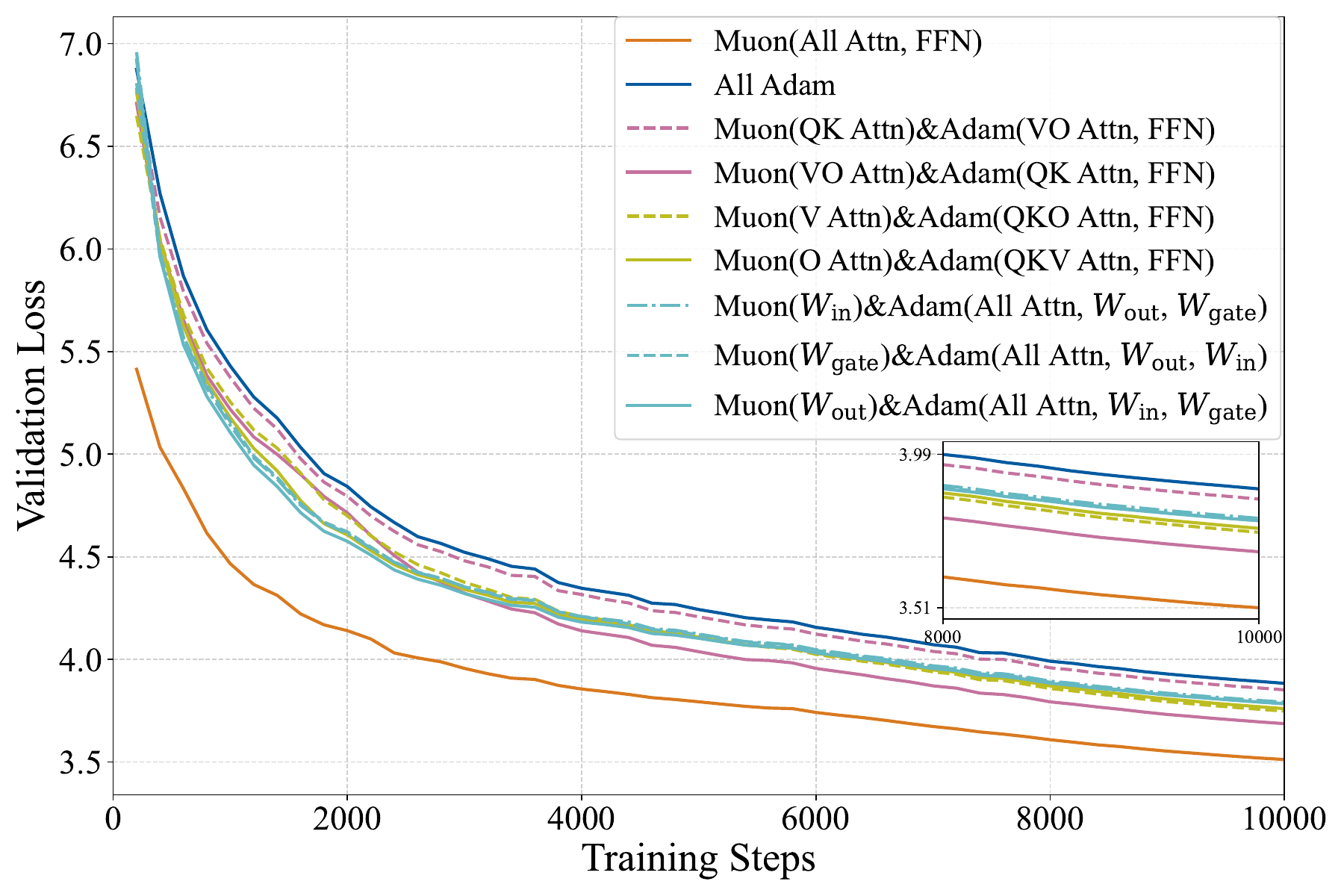}\label{fig:gate_indep}}


    \centering
    \subfigure[Combined Configuration with Non-gated \ac{ffn}]{ \includegraphics[width=0.48\textwidth]{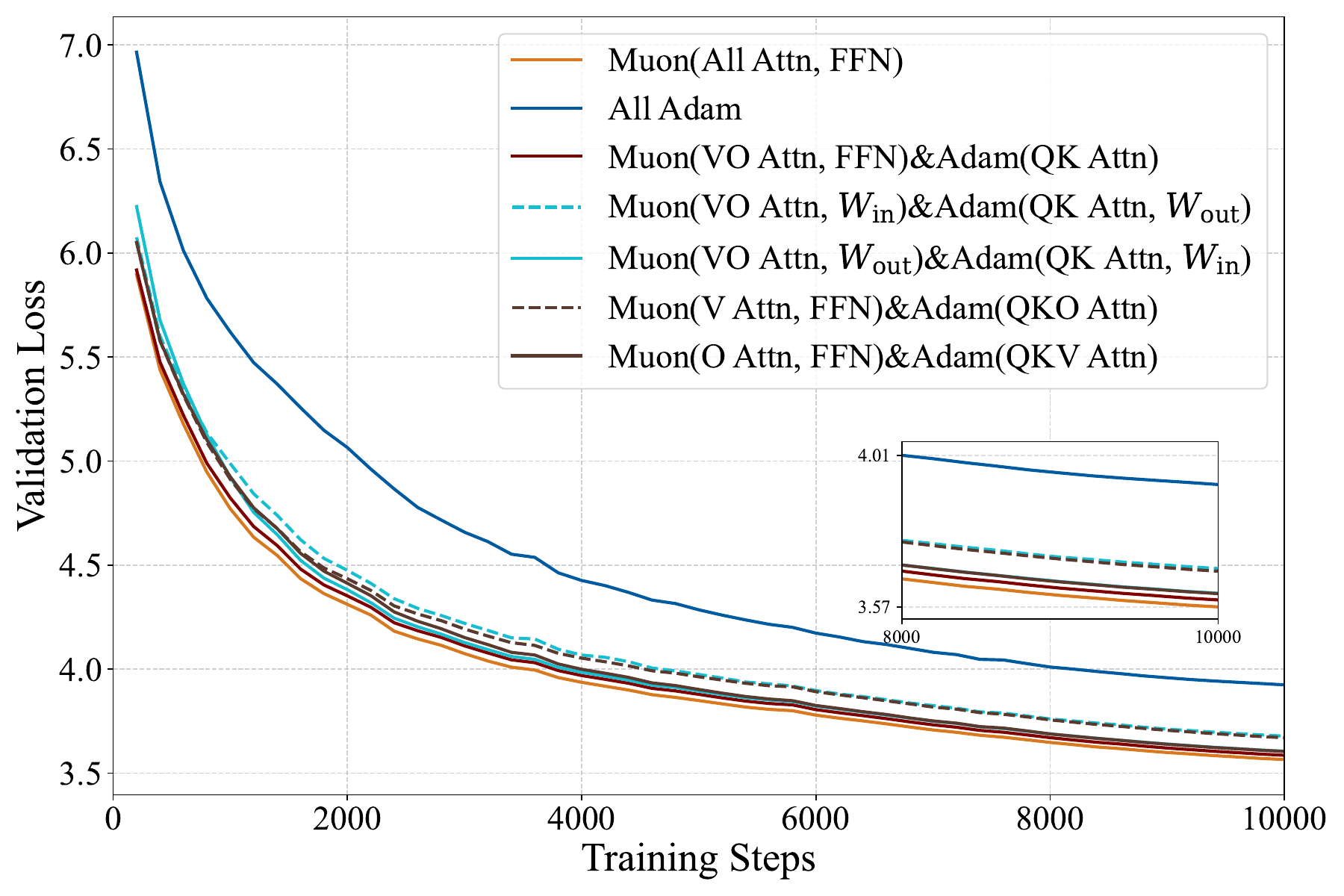}\label{fig:non_comb}}
    \subfigure[Combined Configuration with Gated \ac{ffn}]{ \includegraphics[width=0.48\textwidth]{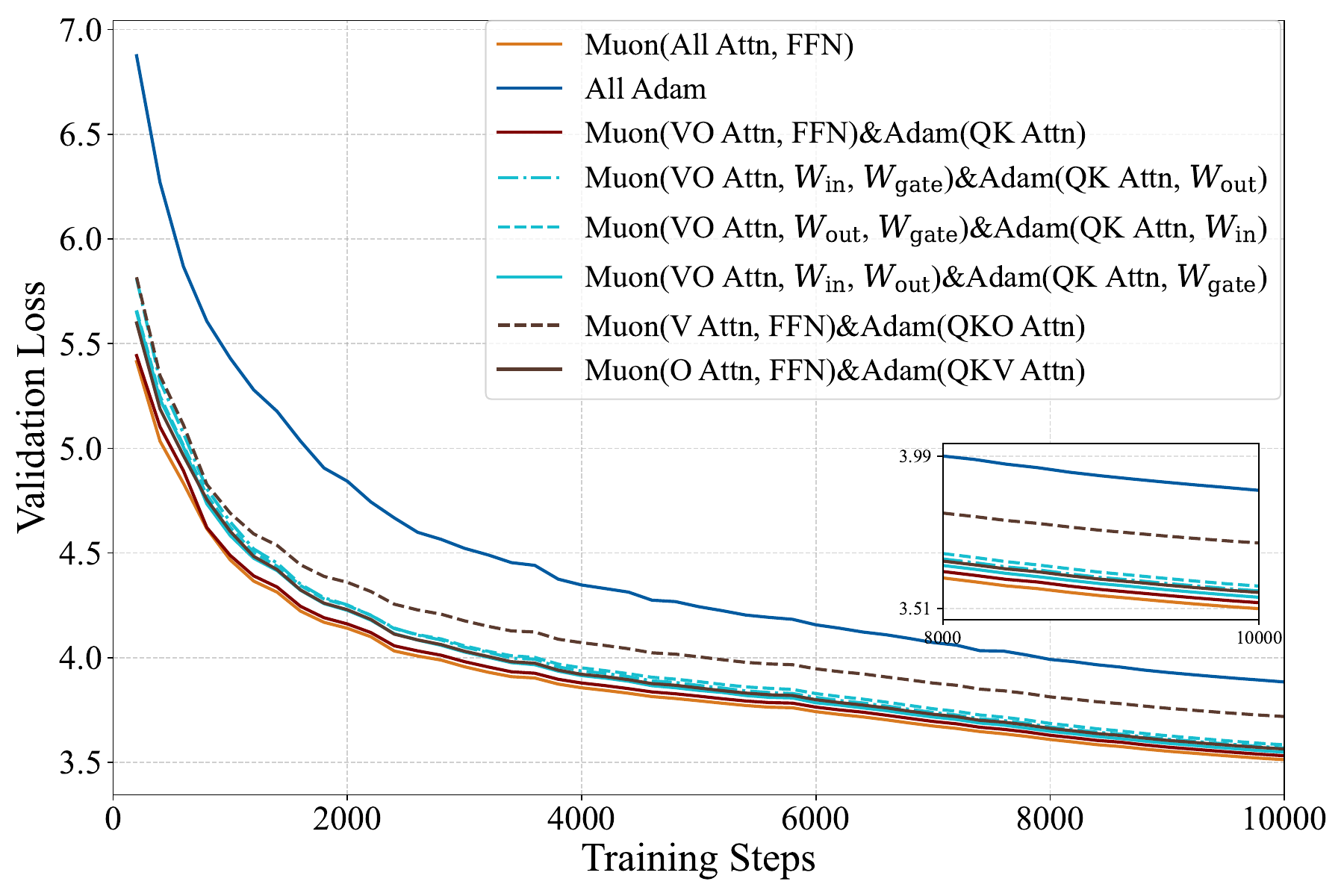}\label{fig:gate_comb}}
    \caption{Validation loss comparison on the 160M NanoGPT model with ungated
    and gated \ac{ffn}. Panels (a) and (b) show the ``Independent Blocks''
    results, where individual components are optimized separately, for models
    with ungated and gated \ac{ffn}, respectively. Panels (c) and (d) show the ``Combined
    Configurations'' results, where multiple components are optimized jointly, again
    for ungated and gated \ac{ffn} models.}
    \label{fig:fineweb_160m}
\end{figure}

\begin{table}[ht!]
\centering
\caption{Validation loss at 10000 training steps. The order of methods follows the legends in the original figure. The best result within each configuration block is highlighted in \textbf{bold}. A dash (—) indicates the configuration was not compatible for that FFN type.}
\label{tab:valloss_summary}
\sisetup{detect-weight=true, detect-family=true} 
\begin{tabular}{@{} l S[table-format=1.4] S[table-format=1.4] @{}}
\toprule
\textbf{Method Description} & {\textbf{Non-gated FFN}} & {\textbf{Gated FFN}} \\
& {\small(from Fig.~\ref{fig:non_indep} and \ref{fig:non_comb})} & {\small(from Fig.~\ref{fig:gate_indep} and \ref{fig:gate_comb})} \\
\midrule
{\textit{\textbf{Baselines}}} \\
Muon(All Attn, FFN) & \textbf{3.5654} & \textbf{3.5125} \\
All Adam & 3.9242 & 3.8837 \\
\midrule

{\textit{\textbf{Independent Blocks}}} \\
Muon(QK Attn)\& Adam(VO Attn, FFN) & 3.8925 & 3.8518 \\
Muon(VO Attn)\& Adam(QK Attn, FFN) & 3.7644 & \textbf{3.6874} \\
Muon(V Attn)\& Adam(QKO Attn, FFN) & 3.8301 & 3.7482 \\
Muon(O Attn)\& Adam(QKV Attn, FFN) & 3.7712 & 3.7604 \\
Muon($W_{\inn}$)\& Adam(All Attn, $W_{\out}$) & 3.7170 & {---} \\
Muon($W_{\out}$)\& Adam(All Attn, $W_{\inn}$, $W_{\gate}$) & \textbf{3.7023} & 3.7843 \\
Muon($W_{\inn}$)\& Adam(All Attn, $W_{\out}$, $W_{\gate}$) & {---} & 3.7918 \\
Muon($W_{\gate}$)\& Adam(All Attn, $W_{\out}$, $W_{\inn}$) & {---} & 3.7847 \\
\midrule

{\textit{\textbf{Combined Configuration}}} \\
Muon(VO Attn, FFN)\& Adam(QK Attn) & \textbf{3.5858} & \textbf{3.5312} \\
Muon(VO Attn, $W_{\inn}$)\& Adam(QK Attn, $W_{\out}$) & 3.6778 & {---} \\
Muon(VO Attn, $W_{\out}$)\& Adam(QK Attn, $W_{\inn}$) & 3.6054 & {---} \\
Muon(VO Attn, $W_{\inn}$, $W_{\gate}$)\& Adam(QK Attn, $W_{\out}$) & {---} & 3.5681 \\
Muon(VO Attn, $W_{\out}$, $W_{\gate}$)\& Adam(QK Attn, $W_{\inn}$) & {---} & 3.5833 \\
Muon(VO Attn, $W_{\inn}$, $W_{\out}$)\& Adam(QK Attn, $W_{\gate}$) & {---} & 3.5482 \\
Muon(V Attn, FFN)\& Adam(QKO Attn) & 3.6702 & 3.7185 \\
Muon(O Attn, FFN)\& Adam(QKV Attn) & 3.6042 & 3.5634 \\
\bottomrule
\end{tabular}
\end{table}

Figure~\ref{fig:fineweb_160m} and Table~\ref{tab:valloss_summary} present our results. We first examine the
independent-block experiments for attention. In both non-gated and gated \ac{ffn}
settings (Figures~\ref{fig:non_indep} and~\ref{fig:gate_indep}), the VO weights
$W_{V}, W_{O}$ (Muon on VO \& Adam on QK and FFN) show substantially larger gains
under Muon than the QK weights $W_{Q}, W_{K}$ (Muon on QK \& Adam on VO and
FFN). Notably, applying Muon to only $W_{V}$ or only $W_{O}$ already yields much
larger gains than applying it to QK. Between the two, $W_{O}$ performs comparably
in the gated \ac{ffn} setting and better in the non-gated setting. For the \ac{ffn},
we find that $W_{\inn}$, $W_{\gate}$, and $W_{\out}$ all benefit from Muon, with
$W_{\out}$ yielding stronger improvements than $W_{\inn}$. 

After identifying the importance of each module, the combined configurations aim
to quantify their contributions to the full Muon. Guided by the independent-block
findings, we first observe that VO+\ac{ffn} already closely tracks—and in our
runs nearly recovers—the full-Muon trajectory in Figures~\ref{fig:non_comb} and~\ref{fig:gate_comb}.
This indicates that applying Muon to QK contributes little to its overall performance. The small remaining gap between full Muon and VO+\ac{ffn} may arise because VO+\ac{ffn} uses the same learning rate as full Muon without further tuning. This gap could likely be reduced by adjusting the learning rate specifically for VO+\ac{ffn}. Importantly, the underperformance of QK is not attributable to the logit explosion observed by \citet{team2025kimi} in large Mixture-of-Experts (MoE) models; in our setting, logit values remain stable, as shown in Appendix~\ref{app:max_logit}.

To isolate the contributions of $W_{O}$ and $W_{V}$ within VO+\ac{ffn}, we perform
ablations starting from the VO+\ac{ffn} setting: we keep Muon on \ac{ffn} and on
only one of $W_{O}$ or $W_{V}$, reverting the other to Adam (i.e., V+\ac{ffn} and
O+\ac{ffn}). Both ablations degrade performance, with the V+\ac{ffn} variant dropping more,
indicating that $W_{O}$ is more influential than $W_{V}$. We apply the same analysis
to \ac{ffn}. The results reveal architectural sensitivity: in the ungated
setting (Figure~\ref{fig:non_comb}), VO+$W_{\out}$ nearly recovers the full-Muon
trajectory, whereas in the gated setting (Figure~\ref{fig:gate_comb}) the same
combination falls short. Nevertheless, both analyses underscore the central role
of $W_{\out}$ in \ac{ffn}. Overall, applying Muon to VO+\ac{ffn} is critical for
recovering full-Muon performance, though the extent of recovery still depends on
architectural design (ungated vs gated). The results from training a $0.7$B model
in Appendix~\ref{app:scaleup} show similar findings.



\begin{olivebox}
    {\bf Observation 1:} Muon is most effective when applied to VO and \ac{ffn};
    in particular, applying Muon to only VO+\ac{ffn} almost recovers the full-Muon trajectory.
\end{olivebox}
We emphasize that this observation is not a trivial consequence of parameter counting; although QK and VO have the same number of parameters, VO proves substantially more influential in our results.

As introduced in Section~\ref{sec:prelim}, prior works discover that the common
role of VO and \ac{ffn} is that they both serve as the associative memories for transformers,
which store facts and knowledge. Furthermore, \cite{bietti2023birth} and
\cite{meng2022locating} show that the linear
associative memories well approximate them. Specifically, for a set of facts represented by key-value
pairs $\{(e_{s_i}, e_{o_i})\}$, the memory matrix $W$ can be constructed as a sum
of outer products, i.e., $W = \sum_{i=1}^{K}e_{o_i}e_{s_i}^{\top}$, where the
summation is taken over the index $i$ of $K$ facts. \looseness=-1 

Learning linear associative memories is particularly well-suited to Muon’s
update mechanism. Intuitively, the gradient $G\in\bbR^{d\times d}$ of the loss with respect to the
linear associative memory weight $W$ can be expressed as a sum of outer
products. Muon computes its update (without momentum) by taking the \ac{svd} of
the gradient, $G = U S V^{\top}= \sum_{i=1}^{d}s_{i}u_{i}v_{i}^{\top}$, and forming
the orthogonal factor $O = U V^{\top}= \sum_{i=1}^{d} u_{i}v_{i}^{\top}$. Comparing this
with the linear associative memory $\sum_{i=1}^{K}e_{o_i}e_{s_i}^{\top}$, we see that
Muon updates all ``orthogonal'' facts at the same rate. Later, we will see that
the singular values $S$ encode the frequencies of knowledge in the training data
in Sections~\ref{sec:knowledge} and \ref{sec:theory}. This implies that Muon can
learn both frequent and infrequent facts uniformly.


We verify this insight from two perspectives. First, from the view of
weight spectra, the weight matrices learned with Muon exhibit a more isotropic
singular-value spectrum than those learned with Adam, indicating that
knowledge, regardless of its frequency, is represented with comparable magnitude. Second,
at the level of overall knowledge acquisition, Muon yields more balanced learning
across entities and frequencies (head and tail) than Adam. We examine these two
consequences in the following sections.


\subsection{Muon Consistently Learns More Isotropic Weights Than Adam}

To validate that Muon can shape the weight
matrices more evenly across directions, we conducted a spectral analysis of them.
For a weight matrix with $n$ non-zero singular values
$\sigma = (\sigma_{1}, \sigma_{2}, \dots , \sigma_{n})$ arranged in descending order, we define the normalized
singular energy distribution $q = (q_{1}, q_{2}, \dots, q_{n})$, where each
component $q_{i}$ is $q_{i}= \sigma_{i}^{2}/{\sum_{j=1}^{n}\sigma_{j}^{2}}$. This distribution represents the fraction of energy captured by each corresponding singular
vector. Based on this, we introduce several metrics to characterize the isotropy
of the spectrum: 
\begin{itemize}
    \item \textbf{Normalized SVD Entropy.} This metric, adapted from \cite{alter2000singular}, quantifies the uniformity of the singular energy distribution. A higher entropy value indicates a more isotropic matrix where energy is distributed evenly across many directions. It is defined as the Shannon entropy of the distribution $q$, normalized by the maximum possible entropy: $H_{\text{norm}} (\sigma) = -\frac{1}{\log n}\sum_{i=1}^{n}q_{i}\log q_{i}$.

    \item \textbf{Effective Rank.} The effective rank \citep{roy2007effective} provides a continuous measure of the number of significant singular dimensions used by the matrix. It is calculated as the exponentiation of the unnormalized Shannon entropy, which corresponds to the perplexity of the energy distribution: $\text{eRank}(\sigma) = \exp\left(-\sum_{i=1}^{n}q_{i}\log q_{i}\right)$.
    
    \item \textbf{Top-$k$ Energy Fraction.} This metric measures the concentration of energy within the Top-$k$ principal singular components. Assuming the singular values are sorted in descending order ($\sigma_{1}\ge \sigma_{2}\ge \dots \ge \sigma_{n}$), it is the cumulative sum of the first $k$ energy fractions: $\text{TopE}_{k}(\sigma) = \frac{\sum_{i=1}^{k}\sigma_{i}^{2}}{\sum_{j=1}^{n}\sigma_{j}^{2}}$.
    
    \item \textbf{Eigenvalue Quantile Ratio.} To measure the spread of the singular energy distribution while being robust to extreme outliers, we compute the ratio of the 75th percentile ($Q_{3}$) to the 25th percentile ($Q_{1}$) of the eigenvalues $\{ \sigma_{i}^{2}\}_{i=1}^{n}$: $Q_{75/25}(\sigma) = \frac{Q_{3}(\{\sigma_{i}^{2}\})}{Q_{1}(\{\sigma_{i}^{2}\})}$.
\end{itemize}


These metrics assess the isotropy of the distribution by capturing both the evenness of values (normalized SVD entropy, effective rank) and the rate of decay (Top-$k$ energy fraction, quantile ratio). Intuitively, more isotropic weights correspond to larger values of normalized
\ac{svd} entropy and effective rank, and smaller Top-$k$ energy fraction and eigenvalue
quantile ratio.

\begin{figure}[t]
    \centering
    \subfigure[VO(Non-gated \ac{ffn})]{ \includegraphics[width=0.46\textwidth]{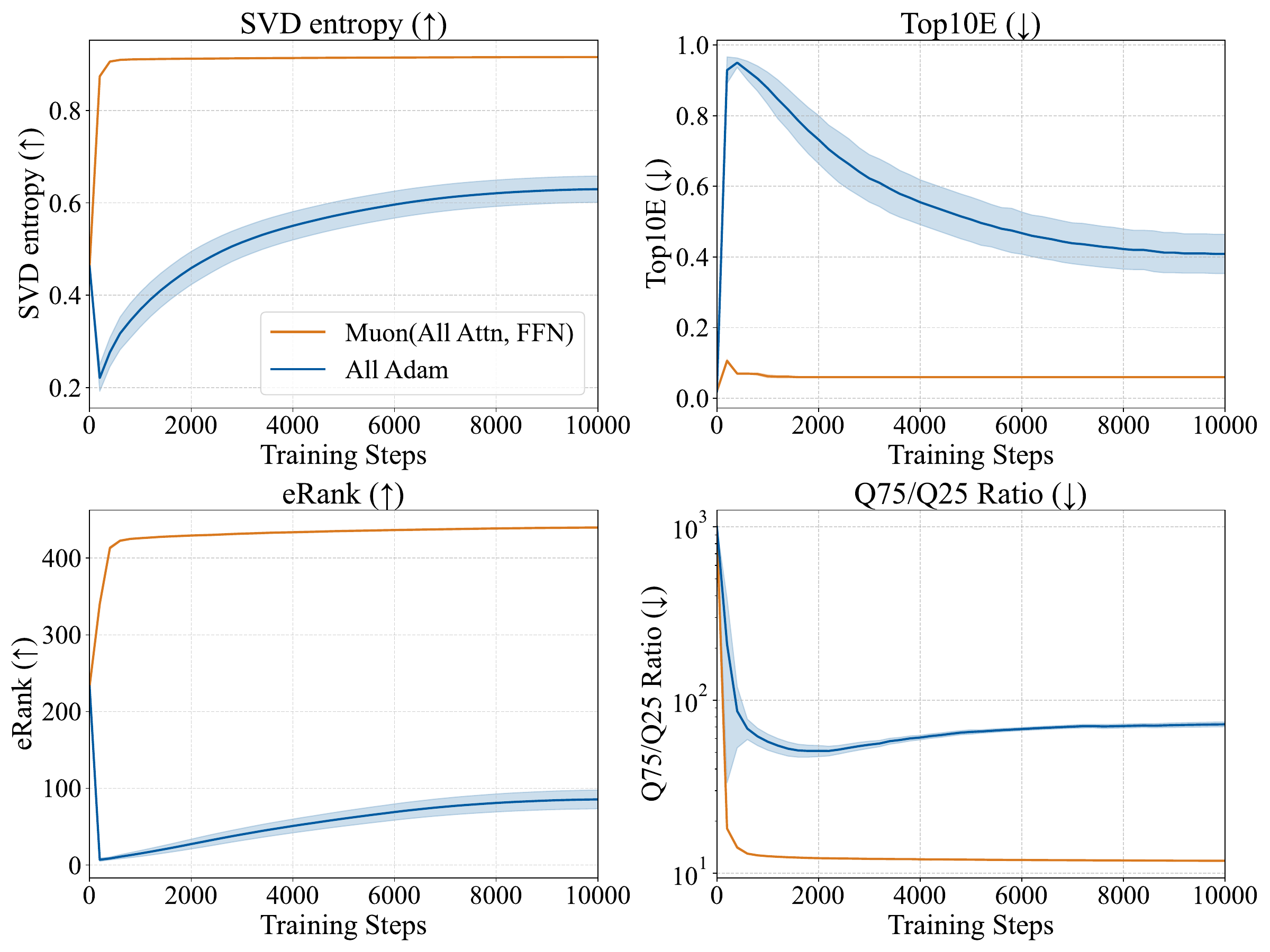}}
    \subfigure[VO(Gated \ac{ffn})]{ \includegraphics[width=0.46\textwidth]{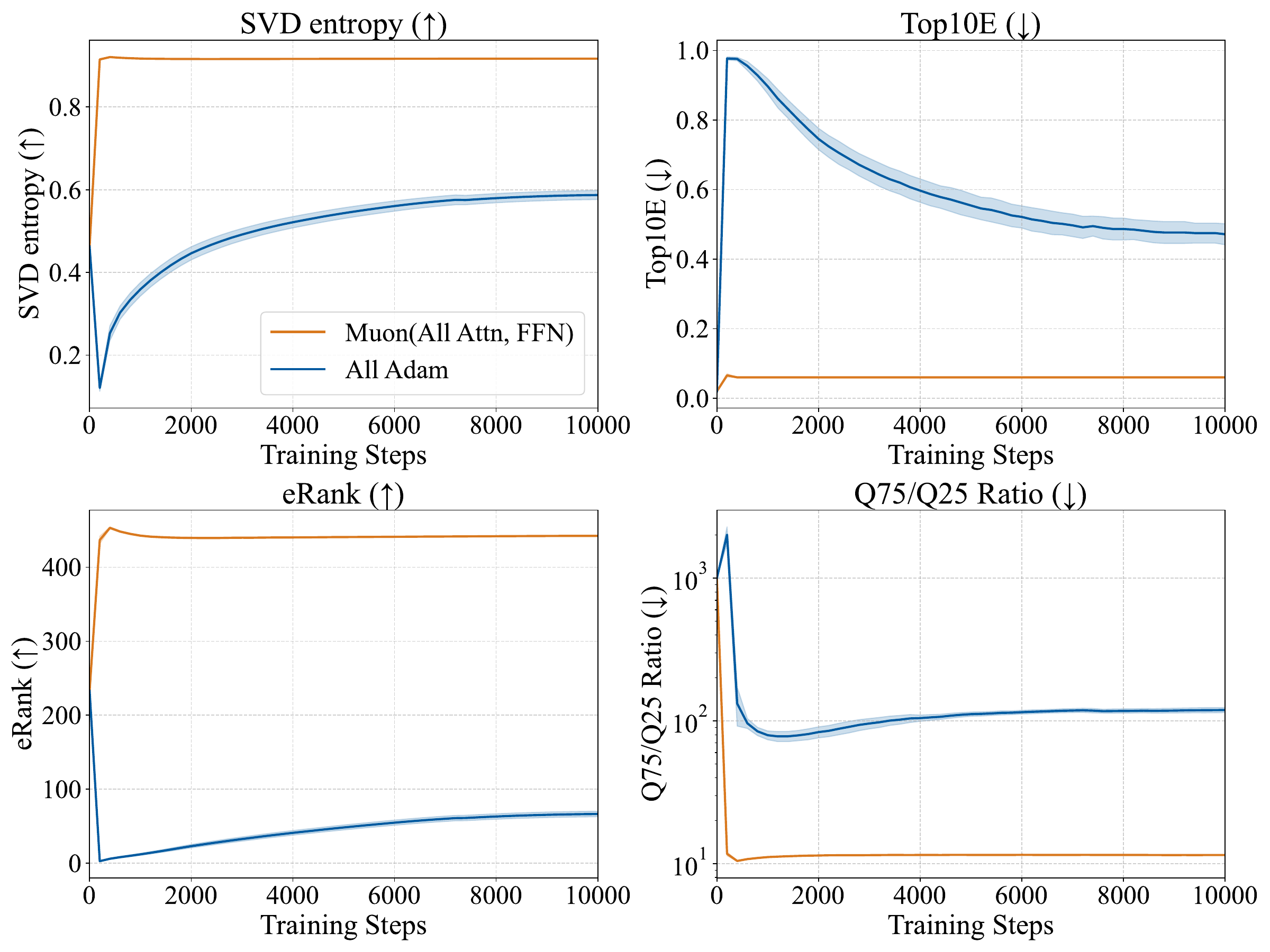}}\\[-0.3cm]
    \subfigure[$W_{\out}$(Non-gated \ac{ffn})]{ \includegraphics[width=0.46\textwidth]{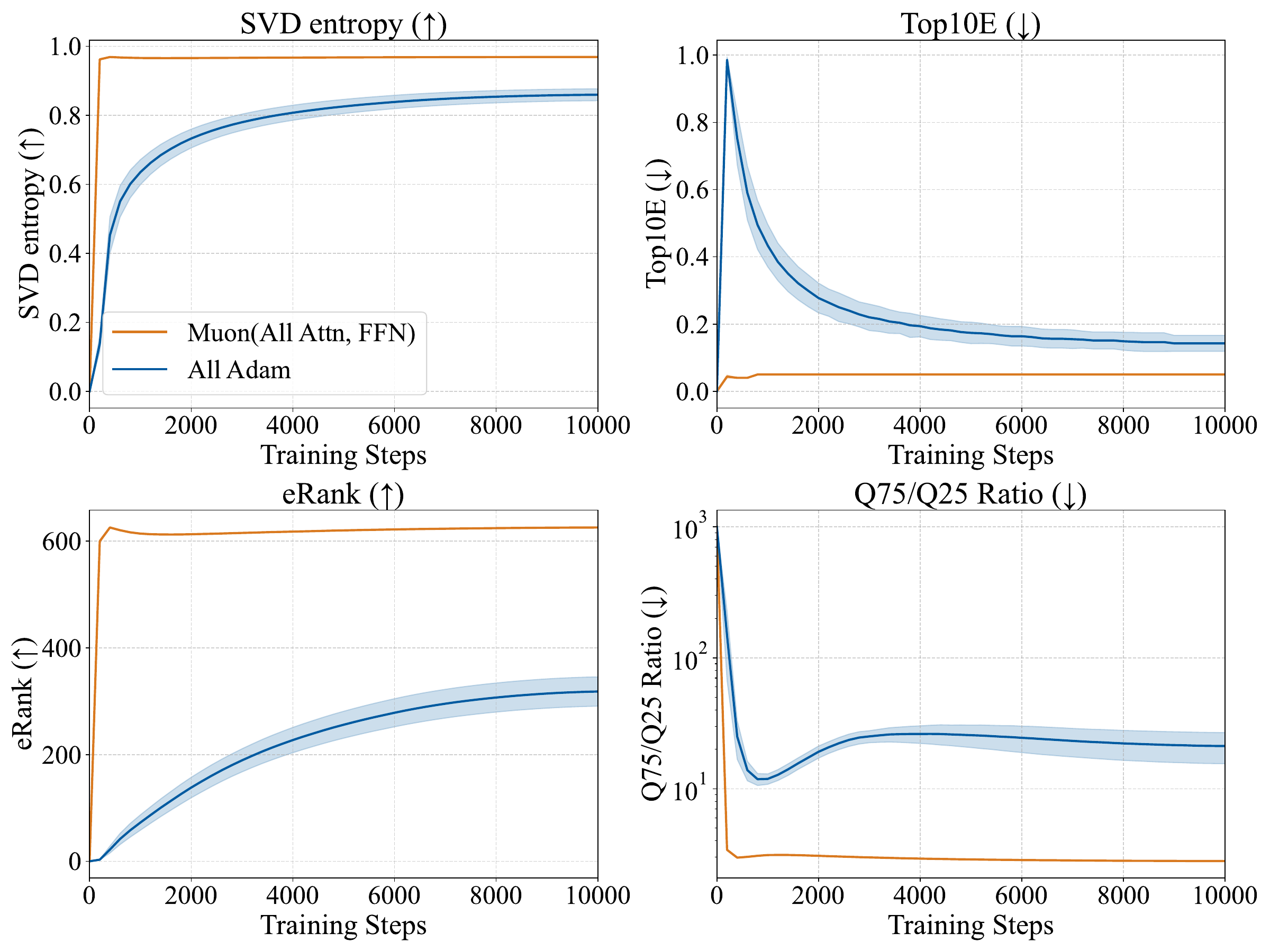}}
    \subfigure[$W_{\out}$(Gated \ac{ffn})]{ \includegraphics[width=0.46\textwidth]{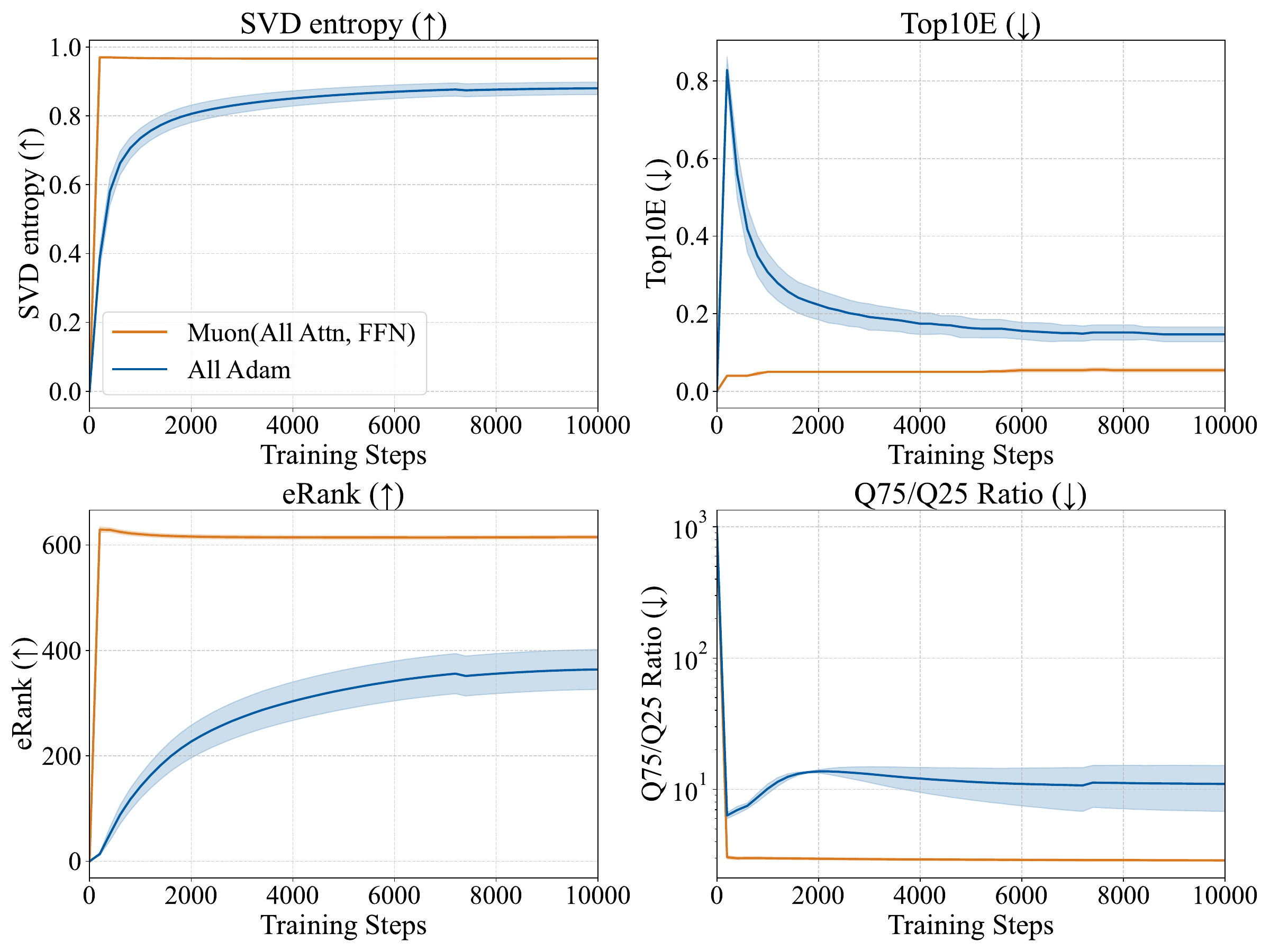}}\\
    \caption{Spectral Dynamics of Transformer Weight Matrices During Training. Each
    panel reports four metrics characterizing singular value distributions: SVD
    entropy, Top10E, eRank, and Q75/Q25 ratio. The four subplots correspond to
    different weight matrix groups: (a) VO, (b) VO (Gated \ac{ffn}), (c)
    $W_{\out}$, and (d) $W_{\out}$ (Gated \ac{ffn}).}
    \label{fig:fineweb_160m_svd}
\end{figure}

The spectral analysis in Figure~\ref{fig:fineweb_160m_svd}, focusing on the key associative
memory components from Observation~1, shows that Muon systematically reshapes
the learned weight matrices relative to Adam. The results, averaged over 10
random seeds, demonstrate that: (i) In both gated and ungated \ac{ffn} architectures,
Muon produces a much more isotropic singular spectrum than Adam from the start
of training, whereas Adam’s isotropy fluctuates significantly over the course of
optimization. (ii) The isotropy of Muon is stable across random initializations,
as indicated by the negligible error bars in Figure~\ref{fig:fineweb_160m_svd},
while Adam is highly sensitive to initialization. These findings suggest that Muon
consistently promotes richer and more diverse features in the model’s most critical
memory components, a conclusion we summarize below. The results for other weights
are in Appendix~\ref{app:svd}.

\begin{olivebox}
    {\bf Observation 2:} Muon consistently yields more isotropic weight matrices
    with broadly distributed spectral energy than Adam, both throughout training
    and across random initializations, thereby supporting richer feature representations.
\end{olivebox}
Empirically, we also find that Muon learns more isotropic QK weights than Adam.
However, as discussed in Section~\ref{sec:am}, QK weights are not part of the
linear associative memory mechanism and are therefore not expected to benefit
from the isotropic property of the weight matrices. 

Our results differ fundamentally from the spectral analysis in \cite{liu2025muon} for three reasons. First, we decompose the parameters according to associative memories, whereas \cite{liu2025muon} aggregates them, obscuring the essential components driving Muon’s behavior. Second, we investigate the instability of Adam under random initialization (i.e., random seeds), which we further establish theoretically in Section~\ref{sec:theory}. Finally, our analysis focuses on dense architectures, while \cite{liu2025muon} centers on Mixture-of-Experts (MoE) models.\looseness=-1

\begin{figure}[t]
    \centering
    \subfigure[Sample/class]{ \includegraphics[width=0.32\textwidth]{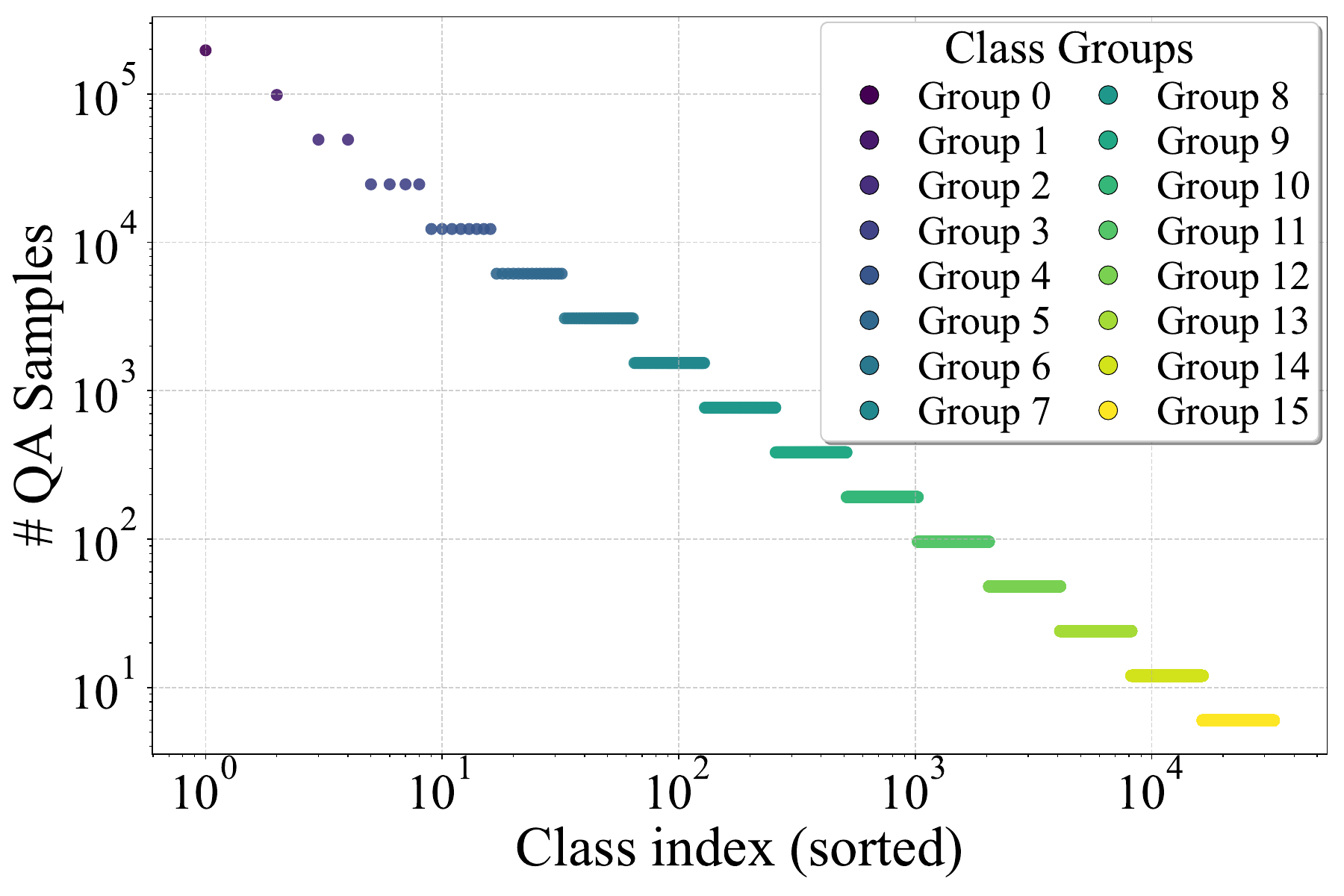}\label{fig:class}}
    \subfigure[Muon]{ \includegraphics[width=0.32\textwidth]{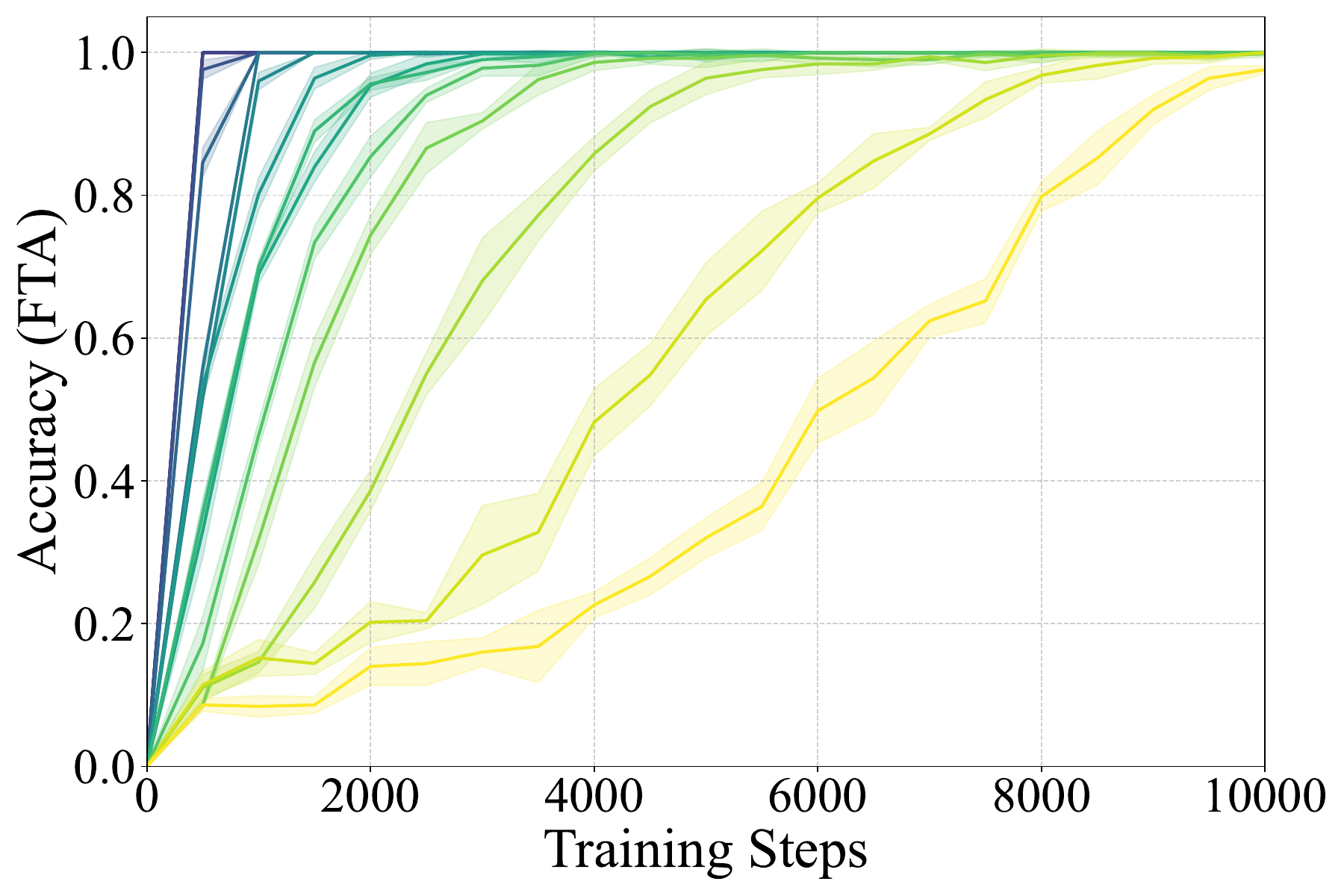}\label{fig:qa_muon}}
    \subfigure[Adam]{ \includegraphics[width=0.32\textwidth]{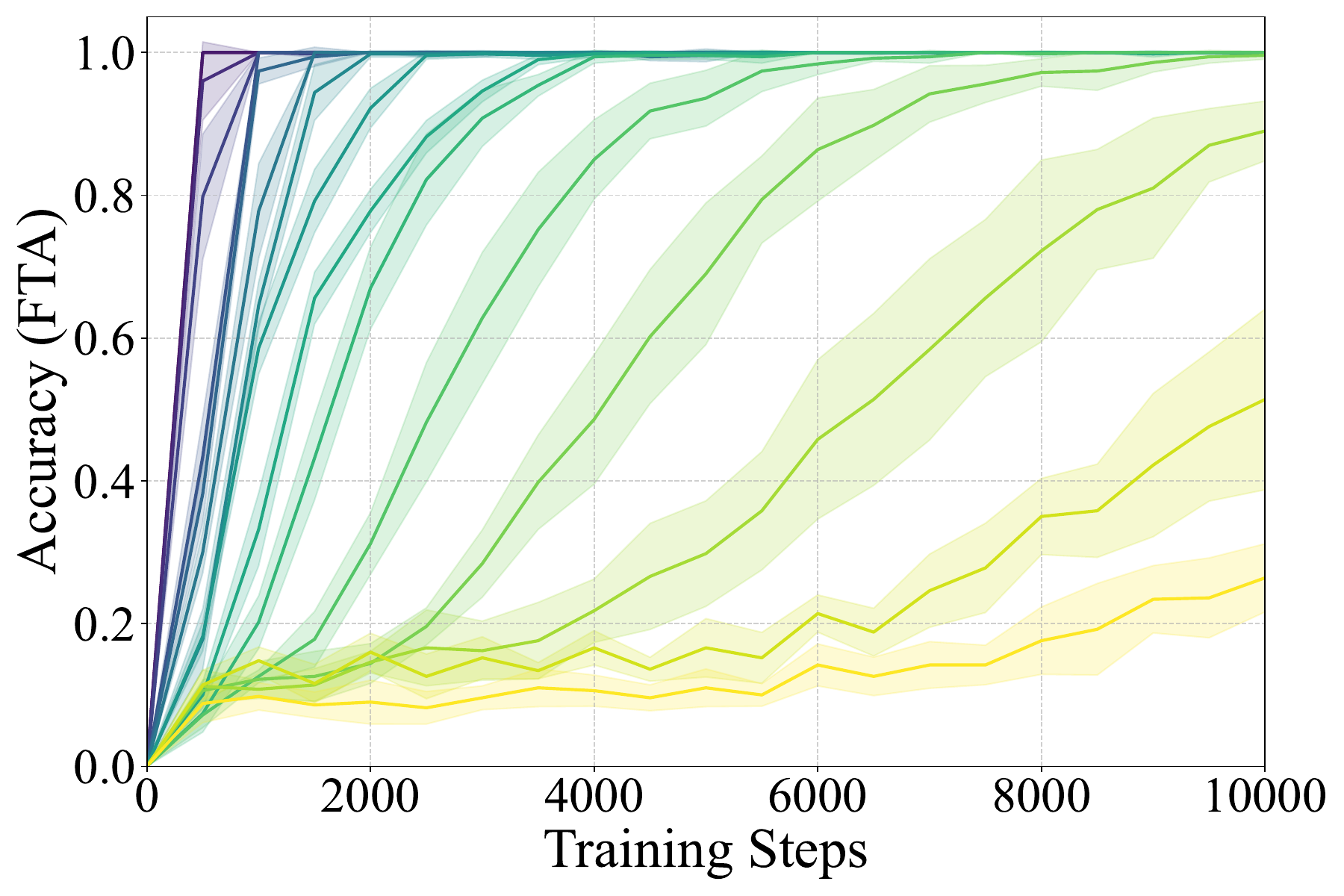}\label{fig:qa_adam}}\\[-0.3cm]
    \subfigure[SGD+Momentum]{ \includegraphics[width=0.32\textwidth]{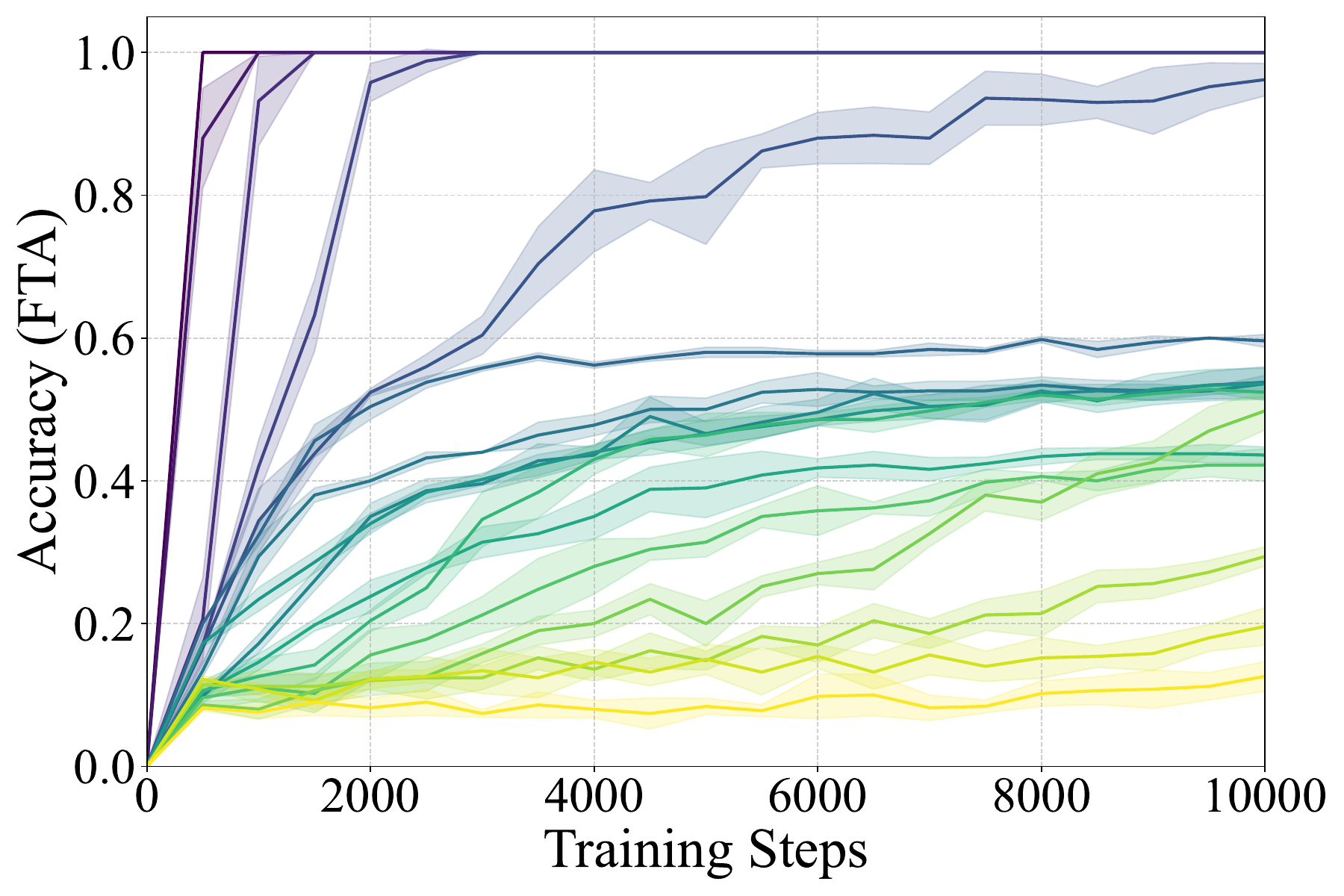}\label{fig:qa_sgd}}
    \subfigure[Muon(VO,\ac{ffn})\& Adam(QK)]{ \includegraphics[width=0.32\textwidth]{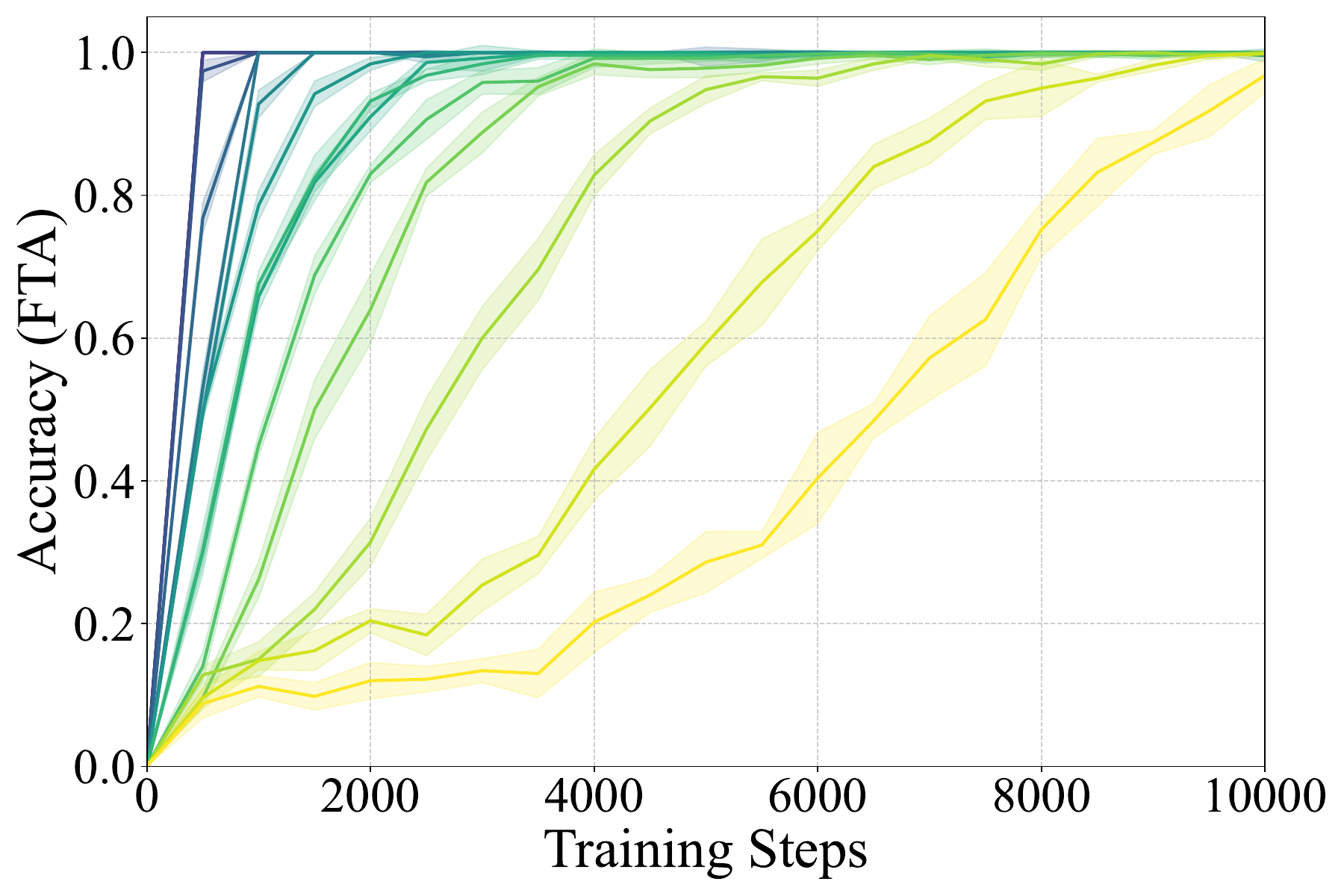}}
    \subfigure[Muon(QK)\& Adam(VO,\ac{ffn})]{ \includegraphics[width=0.32\textwidth]{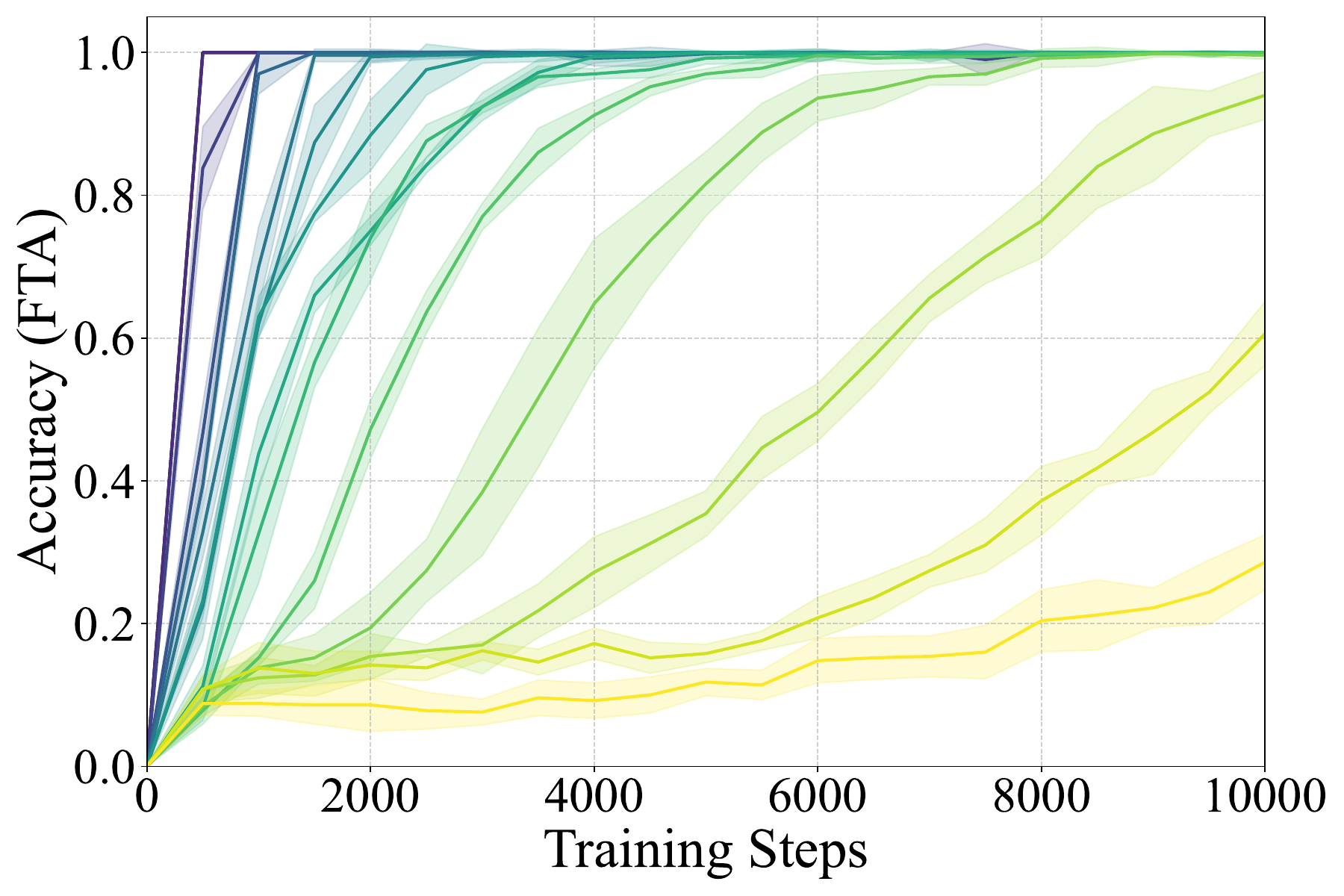}}\\
    \caption{Performance comparison of different optimizers for transformers
    with non-gated \ac{ffn} on a heavy-tailed knowledge task. (a) Sample
    distribution per class, following a power law. (b–d) Performance of Muon, Adam,
    and SGD+Momentum. (e) Muon applied to VO and \ac{ffn}, with Adam on QK. (f)
    Muon applied to QK, with Adam on VO and \ac{ffn}.}
    \label{fig:heavy_tail_qa}
\end{figure}

\subsection{Muon Acquires Knowledge More Evenly Compared To Adam}
\label{sec:knowledge} 

Our previous findings indicate that the Muon optimizer is particularly important
for the associative memory components of the model, where it learns more
isotropic weights. To examine the overall effects of learning associative memories,
we turn to a knowledge-intensive question-answering (QA) task. The task is based
on a synthetic QA dataset containing biographical information (e.g., name, birthday,
and company) for over $200{,}000$ individuals~\citep{allen2024physics}. To capture
the heavy-tailed nature of real-world knowledge, we control the frequency of
each individual’s appearance in the training set so that it follows a power-law
distribution (Figure~\ref{fig:class}), thereby inducing varying levels of difficulty
in learning knowledge about different individuals. A 160M NanoGPT model is
trained to answer questions about this biographical information. The performance
is evaluated via the First Token Accuracy (FTA) on the answers, following
\cite{allen2024physics}. Further details on the dataset are provided in Appendix~\ref{app:qadataset_details}. We include SGD as a baseline for Adam and Muon.

The results in Figure~\ref{fig:heavy_tail_qa}
lead to an unequivocal conclusion about the efficacy of different optimizers under data
imbalance. In high-frequency (head) classes, all optimizers perform well, with
Muon, Adam, and even SGD+Momentum rapidly reaching near-perfect accuracy (Figure~\ref{fig:heavy_tail_qa}(b–d)).
Consistent with prior work on heavy-tailed distributions~\citep{kunstner2024heavy},
Adam maintains a clear advantage over SGD, which struggles with tail classes. Our
key finding, however, is that Muon substantially outperforms Adam on low-frequency
(tail) data, achieving faster and more uniform convergence across all frequencies.
Moreover, the consistently tighter error bars for Muon—especially relative to Adam—reflect
lower variance and a more stable learning process.

Furthermore, the hybrid configurations in Figure~\ref{fig:heavy_tail_qa}(e–f) clarify
where Muon matters most. Applying Muon to VO+\ac{ffn} (with QK on Adam) yields strong
gains on rare classes and markedly reduces the head–tail gap, whereas applying
Muon only to QK (with VO+\ac{ffn} on Adam) yields only limited improvement. This
mirrors Observation~1: VO+\ac{ffn} is the most effective target set, as it concentrates
the model’s associative memory. Results for the gated \ac{ffn}, which show the
same pattern, are provided in Appendix~\ref{app:qa_gated}. We summarize these
findings as Observation~3.

\begin{olivebox}
    {\bf Observation 3:} In heavy-tailed, knowledge-intensive tasks, Muon matches
    Adam's strong performance in the head classes while substantially improving learning
    on tail classes, narrowing the head-tail gap and accelerating convergence.
\end{olivebox}
    \section{Case Study of One-Layer Models}
\label{sec:theory}
We now analyze three optimizers—Adam, Muon, and \ac{gd} (as a baseline)—to complement the preceding empirical observations. We first introduce an abstraction that captures their key dynamics and then present both empirical and theoretical results. As shown in Eqns.~\eqref{eq:attn} and~\eqref{eq:ff}, a structural property of associative memory parameters is that their output is added directly to the hidden states, which are subsequently processed by the language model head. Motivated by this property, our abstraction retains the associative memory and language model head, while replacing all preceding modules with given feature embeddings.\looseness=-1


Consider $K$ triplets $\{(s_{i},r_{i},o_{i})\}_{i=1}^{K}$, where subject-relation pairs $(s_i, r_i)$ and objects $o_i$ are embedded into the columns of matrices $E\!\in\!\mathbb{R}^{d_s \times K}$ and $\tilE\!\in\! \mathbb{R}^{d_o \times K}$, respectively. A linear associative memory $W\!\in\!\mathbb{R}^{d_o\!\times\!d_s}$ predicts the object for a query $E_k$ with probabilities $f_{W}(E_{k}) = \sm(\tilE^{\top}W E_{k})\in\bbR^{K}$. The objective is to minimize the population cross-entropy loss $\mathcal{L}(W)\!=\!-\sum_{k=1}^{K}p_{k} \log [f_{W}(E_{k})]_{k}$, where $p_k$ is the frequency or probability of the $k$-th triplet. We consider three optimizers: \ac{gd},
Adam, and Muon. 
\begin{itemize}
    \item \ac{gd} updates the parameters according to the gradient $\nabla_{W}
\mathcal{L}(W)$ as $W_{t+1}^{\gd}=W_{t}^{\gd}-\eta_{t+1}\nabla_{W}\calL(W_{t}^{\gd}
).$
\item For Adam, we switch off the exponential moving averages (EMA), i.e., $\beta_{1}
=\beta_{2}=0$, following the practice in existing theoretical works~\citep{kunstner2024heavy,bernstein2024old}.
Under this setting, Adam reduces to sign-\ac{gd} as
$W_{t+1}^{\adam}=W_{t}^{\adam}-\eta_{t+1}\sign\big(\nabla_{W}\calL(W_{t}^{\adam})
\big),$
where $\sign(\cdot)$ denotes the element-wise sign operator.
\item For Muon, we also
disable its momentum and analyze the update
$W_{t+1}^{\muon}=W_{t}^{\muon}-\eta_{t+1}U_{t}\nor(\Sigma_{t})V_{t}^{\top},$
where $\nor(\cdot)$ normalizes all non-zero elements to $1$ (element-wise), and $U
_{t}\Sigma_{t}V_{t}^{\top}$ is the \ac{svd} of the gradient $\nabla_{W}\mathcal{L}(W_{t}^{\muon})$.
\end{itemize}
 All these optimizers adopt the zero initialization
that $W_{0}=0_{d_o, d_s}$. We then state the assumptions for our results.

\begin{figure}[t]
    \centering
    \subfigure[Average Angles Between $E_{i}$/$\tilE_{i}$]{ \includegraphics[width=0.32\textwidth]{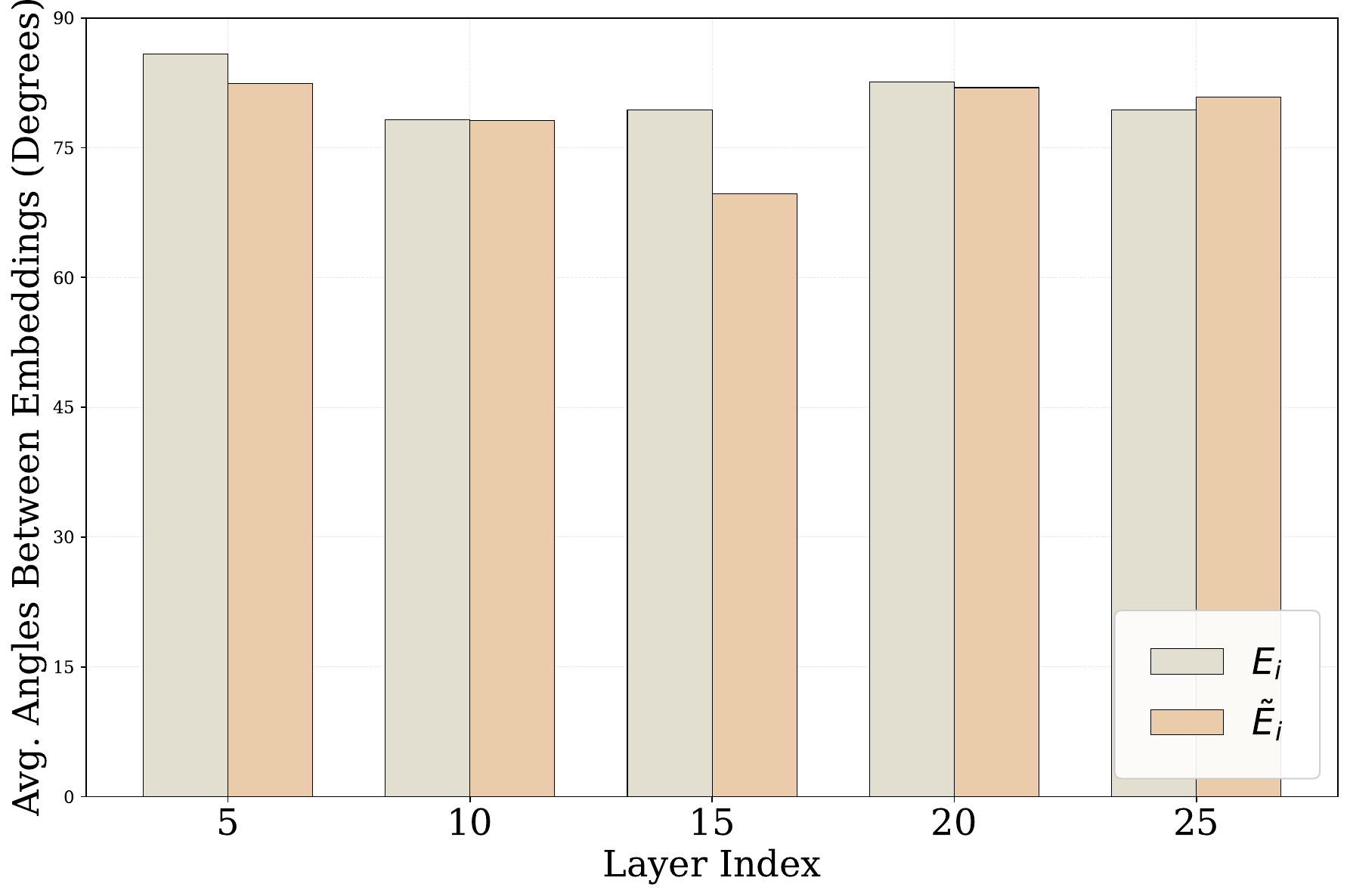}\label{fig:kv_angles_mcf}}
    \subfigure[One-step Optimization Results]{ \includegraphics[width=0.32\textwidth]{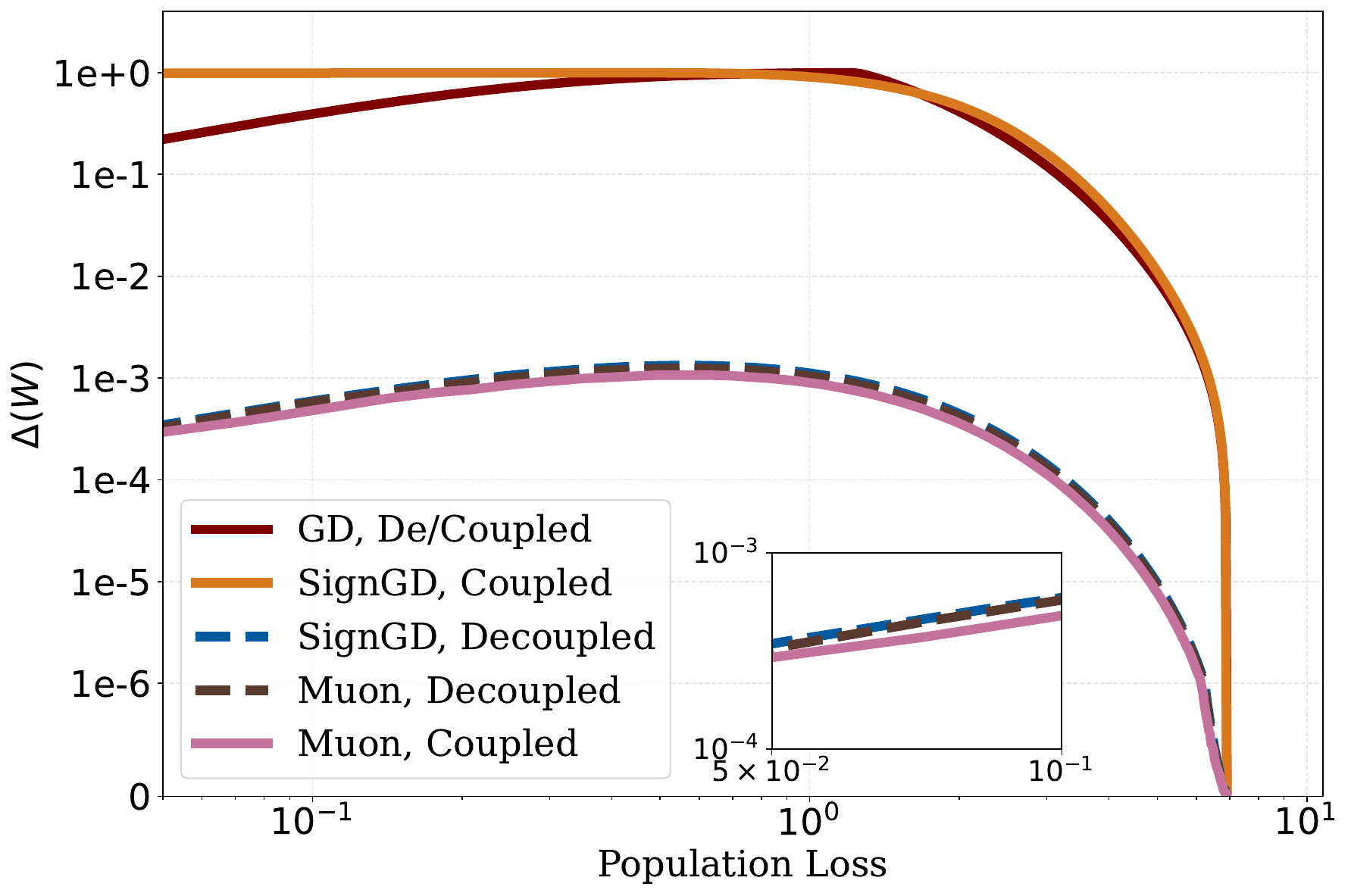}\label{fig:toy_one_step}}
    \subfigure[Multi-step Optimization Results]{ \includegraphics[width=0.32\textwidth]{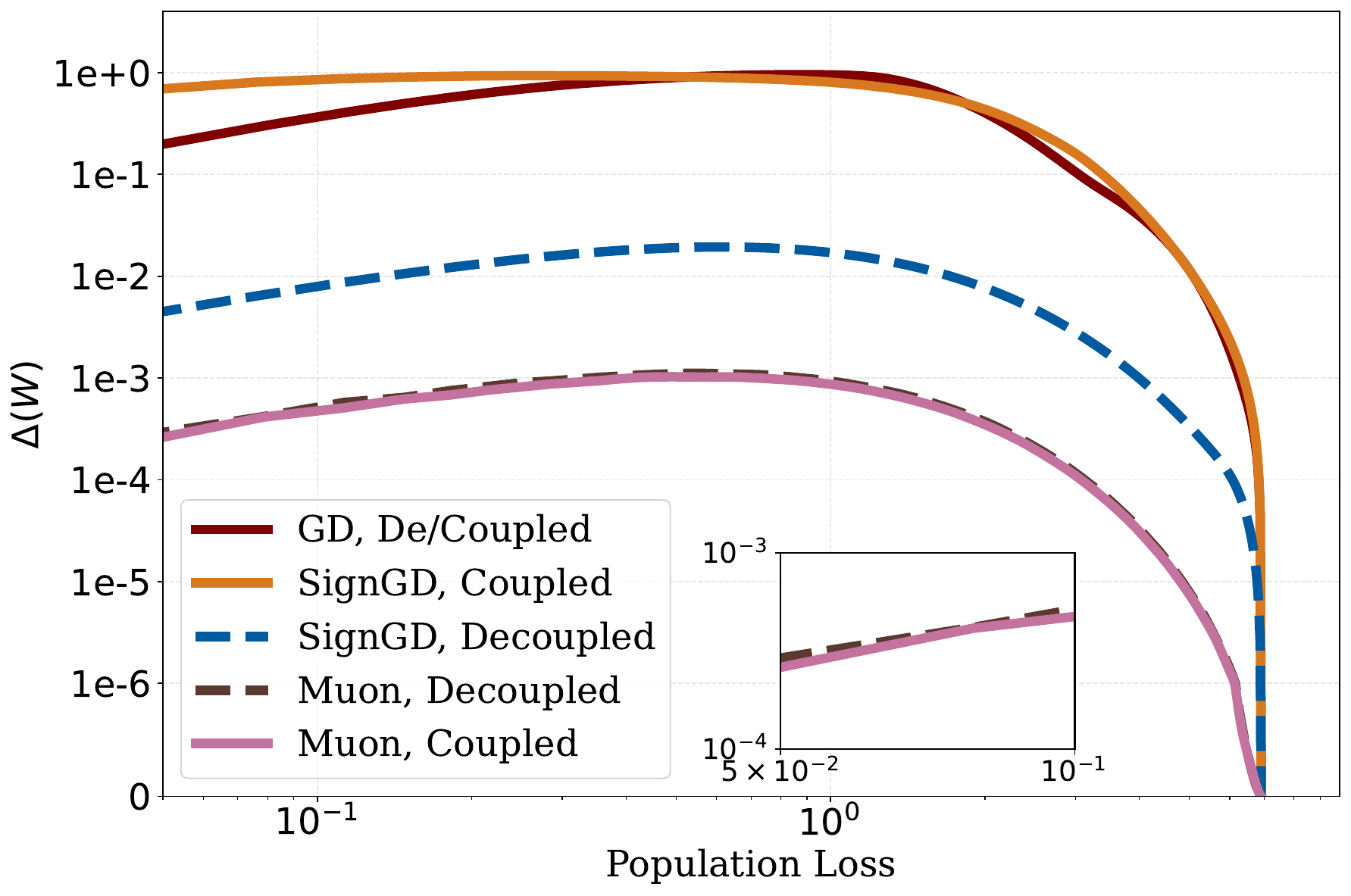}\label{fig:toy_multi_step}}
    \caption{(a) Average angles between $E_{i}$ or $\tilE_{i}$ in \ac{ffn} at layers
    $5$, $10$, $15$, $20$, $25$ of Llama3-8b-instruct. (b) Results of one-step
    \ac{gd}, Sign\ac{gd}, and Muon with both coupled and decoupled embeddings.
    For \ac{gd}, the outcomes under the two embedding types coincide. (c) Results
    of multi-step \ac{gd}, Sign\ac{gd}, and Muon with both coupled and decoupled
    embeddings.} 
\end{figure}
\vspace{-0cm}
\begin{assumption}
    \label{assump:ortho} The embeddings $E$ and $\tilE$ are orthonormal, i.e., $E^{\top}E = \tilE^{\top}\tilE =I_{K,K}$.  
\end{assumption}

The unit-norm
requirement rules out feature-level imbalance, which would otherwise couple with
the imbalance induced by $p_{k}$ and complicate the analysis. Our techniques can
be directly applied even without this unit-norm requirement. The orthogonality
assumption is intuitively plausible, as different concepts are independent and do not influence one another. We empirically verify this on Llama3-8b-instruct~\citep{dubey2024llama}.
Following \cite{fang2024alphaedit}, we extract $E_{i}$ and $\tilE_{i}$ in \ac{ffn}
across layers for $3,000$ knowledge items of Counterfact~\citep{meng2022locating}
and compute average angles between them (see Appendix~\ref{app:exp_setting_angles} for details). As shown in Figure~\ref{fig:kv_angles_mcf}, these angles are near $9
0^{\circ}$, confirming approximate orthogonality. For $K$ independent concepts,
orthogonality requires $d_{r}, d_{s}\geq K$. For simplicity, we set
$d_{r}= d_{s}= K$ in what follows.

\begin{assumption}
    \label{assump:two_class} The first $L$ triplets share the same probability and
    together contribute a total mass of $\alpha$, i.e., $p_{k}= \alpha / L$ for
    $k \in [L]$. The remaining triplets also share the same probability and together
    contribute a total mass of $1 - \alpha$, i.e.,
    $p_{k}= (1 - \alpha) / (K - L)$ for $k > L$.
\end{assumption}
This assumption states that the data imbalance is between two classes among the
$K$ triplets. Defining $\beta = L/K$, the ratio $\alpha / \beta$ quantifies the
degree of balance: if $\alpha > \beta$, the first $L$ triplets appear more
frequently during learning, and vice versa. This simplified two-class setting is
sufficient to capture the primary differences between optimizers; the multi-class case follows directly from our proof by extending the \ac{svd} calculation.

\subsection{Experimental Results}
Under Assumptions~\ref{assump:ortho} and \ref{assump:two_class}, we   evaluate
\ac{gd}, Sign\ac{gd}, and Muon for $\alpha=0.8$, $\beta=0.2$,  considering two
embeddings for $E$ and $\tilde E$: (i) support-decoupled: the supports (indices
of non-zero entries) of different $E_{i}$ or $\tilE_{i}$ are disjoint; (ii) support-coupled:
supports may overlap. We study two optimization protocols,   initializing
$W_{0}= 0_{d_o\times d_s}$: (i) one-step: take a single update with a scaled
step size to obtain a range of $\mathcal{L}(W)$ values; (ii) multi-step: run
multiple updates to reduce $\mathcal{L}(W)$, varying the number of steps. Experimental details are in Appendix~\ref{app:toy_details}. To quantify \emph{learning
imbalance} across $K$ knowledge items, we examine the relationship between  population
loss $\mathcal{L}(W)$ and \emph{maximal probability gap}
$\Delta(W) := \max_{i,j\in[K]}[f_{W}(E_{i})]_{i}- [f_{W}(E_{j})]_{j},$ where
$[f_{W}(E_{i})]_{i}$ denotes the probability assigned to the correct item
$i$. A larger $\Delta(W)$ indicates greater imbalance.

Across both optimization-step protocols and embeddings (Figures~\ref{fig:toy_one_step},
\ref{fig:toy_multi_step}), we observe that
\begin{itemize}
    \item For all optimizers, $\Delta(W)$
first \emph{increases} and then \emph{decreases} as $\mathcal{L}(W)$ decreases. Early
in training, when correct probabilities are near $0$, imbalance is pronounced; later, when all items are well learned (e.g., probabilities $\geq 0.9$), imbalance diminishes.
    \item For both embedding regimes, \ac{gd} and Muon behave
consistently: \ac{gd} exhibits a substantial imbalance, whereas Muon remains much
more balanced across items.
    \item Sign\ac{gd} also demonstrates unstable
behavior; its imbalance resembles \ac{gd} in the coupled embedding case and Muon
in the decoupled embedding case.
\end{itemize}   

Because one-step and multi-step experiments align qualitatively, we first analyze
the \textbf{one-step} setting for clarity. This simplification is common in theoretical
studies of neural network dynamics~\citep{ba2022high,dandi2023two}, and our techniques
extend directly---albeit with more algebra---to the multi-step case. As a
demonstration, Theorem~\ref{thm:multistep} provides a multi-step analysis of Muon.

\subsection{Theoretical Results}

For the one-step analysis, define the smallest  correct-class probability
across all knowledge items, under the condition that at least one item achieves the correct-class probability of at least $1-\epsilon$ as
\begin{align}
    \maxd_{\opt}^{\epsilon}= \inf_{\eta \ge 0}\Big\{ \min_{k\in[K]}[f_{W_\eta}(E_{k})]_{k}\,\Big|\, \max_{k\in[K]}[f_{W_\eta}(E_{k})]_{k}\geq 1-\epsilon,\  W_{\eta}= W_{0}- \eta \cdot G_{\opt}(W_{0}) \Big\}, \label{eq:inf_correct_prob}
\end{align}
where $\opt \in \{\gd, \adam, \muon\}$ and $G_{\opt}(W_{0})$ denotes the
parameter update of the optimizer ``$\opt$'' at $W_{0}$; and $W_{\eta}$ denotes the parameter obtained after one step of optimizer ``$\opt$'' with step size $\eta$ starting from $W_{0}$, i.e., $W_{\eta}= W_{0}- \eta \cdot G_{\opt}(W_{0})$. 
Specifically, we denote 
\begin{align*}
    G_{\gd}(W_{0}) = \nabla_{W}\calL(W_{0}), \qquad  G_{\adam}(W_{0}) = \sign(\nabla_{W}\calL (W_{0})), \qquad G_{\muon}(W_{0})= U_{0}
\nor(\Sigma_{0})V_{0}^{\top}, 
\end{align*}
 where $U_{0}\Sigma_{0}V_{0}^{\top}$ is the \ac{svd}
of $\nabla_{W}\mathcal{L}(W_{0})$. 
Note that $\maxd_{\opt}^{\epsilon}\in[0,1-\varepsilon]$ and   $\Delta(W)$  are related as $\Delta(W)=1-\epsilon-\maxd_{\opt}
^{\epsilon}\geq 0$. 
When $\maxd_{\opt}^{\epsilon}\approx 1-\epsilon$, $\opt$ achieves balanced learning across facts; in contrast, when $\maxd_{\opt}^{\epsilon}\approx 0$,  imbalanced learning ensues. 

\begin{theorem}
    \label{thm:comp} If Assumptions~\ref{assump:ortho} and \ref{assump:two_class}
    hold, with fixed $\alpha,\beta$ such that $\alpha\neq\beta$, and $K$ goes to
    infinity, 
    we obtain the following results for one-step \ac{gd}, Muon, and Adam.
    \begin{itemize}
        \item  For \ac{gd},  for any $\tilE$ and $E$ satistifying
    Assumption~\ref{assump:ortho}, we have 
    \begin{align*}
        \maxd_{\gd}^{\epsilon}= O(\epsilon^{-r(\alpha,\beta)}K^{r(\alpha,\beta)-1}), \text{ where }r(\alpha,\beta) = \min \bigg\{\frac{\alpha(1-\beta)}{\beta(1-\alpha)},\frac{\beta(1-\alpha)}{\alpha(1-\beta)}\bigg\}< 1.
    \end{align*}

    \item For Muon,  for any $\tilE$ and $E$ satistifying
    Assumption~\ref{assump:ortho}, we have
    \begin{align*}
        \maxd_{\muon}^{\epsilon}\geq 1- \epsilon\bigg(1+O\bigg(\frac{\log K}{K}\bigg)\bigg),\text{ and }G_{\muon}(W_{0})= -\tilE E^{\top}+O\bigg(\frac{1}{K}\tilE J_{K,K}E^{\top}\bigg),
    \end{align*}
    where $J_{K,K}\in\bbR^{K\times K}$ is the matrix with all elements equal to
    $1$. The big-$O$ notation for matrices means that for $A=O(B)$, each entry
    satisfies $A_{ij}= O(B_{ij})$ for all $i,j$.

    \item For Adam, there exist $\tilE$ and $E$ satisfying Assumption~\ref{assump:ortho}
    such that $\maxd_{\adam}^{\epsilon}\geq 1-\epsilon.$ There also exist $\tilE^{\prime}$ and
    $E^{\prime}$ satisfying Assumption~\ref{assump:ortho} such that
    \begin{align*}
        \maxd_{\adam}^{\epsilon}=O( \epsilon^{-0.7}K^{-0.3}),\text{ and }\frac{\sigma_{\min}\big(G_{\adam}(W_{0})\big)}{\sigma_{\max}\big(G_{\adam}(W_{0})\big)}\leq 25\%,
    \end{align*}
    where $\sigma_{\max}$ and $\sigma_{\min}$ are the largest and smallest singular
    values, respectively.
    \end{itemize}
\end{theorem}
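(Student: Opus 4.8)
The plan is to compute, for each optimizer, the one-step update $G_{\opt}(W_0)$ at $W_0 = 0$, then track the correct-class probabilities $[f_{W_\eta}(E_k)]_k$ as functions of the step size $\eta$, and finally extract the worst-case/best-case behavior of $\maxd_\opt^\epsilon$ over the choice of embeddings. At $W_0=0$ the softmax output is uniform, $[f_{W_0}(E_k)]_j = 1/K$, so the gradient has the clean form $\nabla_W \calL(W_0) = -\sum_{k=1}^K p_k (\tilE_k - \tfrac1K \tilE \bbI_K) E_k^\top$, i.e. $\nabla_W\calL(W_0) = -\tilE D_p E^\top + \tfrac1K \tilE \bbI_K \bbI_K^\top D_p E^\top$ where $D_p = \mathrm{diag}(p_1,\dots,p_K)$. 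Using Assumption~\ref{assump:ortho} (orthonormal $E,\tilE$), for \ac{gd} the step $W_\eta = \eta\,\tilE D_p E^\top + O(\eta/K)$ is diagonal in the $(\tilE,E)$ bases, so $[f_{W_\eta}(E_k)]_k = \mathrm{sm}(\eta p_k + \text{small})_k$ decouples per class; plugging in $p_k = \alpha/L$ vs.\ $(1-\alpha)/(K-L)$ from Assumption~\ref{assump:two_class} and asking for the step size at which the \emph{head} class first reaches probability $1-\epsilon$ while evaluating the \emph{tail} class gives a ratio that behaves like a power $K^{r(\alpha,\beta)-1}$, where the exponent $r(\alpha,\beta)$ comes from comparing $e^{\eta p_{\text{head}}}$ with $e^{\eta p_{\text{tail}}}$ at the critical $\eta$. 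This recovers the stated \ac{gd} bound, and because the update is diagonal regardless of which orthonormal $E,\tilE$ are chosen, the bound holds \emph{for any} embeddings.

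**The Muon branch.**
For Muon I would compute the \ac{svd} of $\nabla_W\calL(W_0)$. Write it as $\tilE(-D_p + \tfrac1K \bbI_K\bbI_K^\top D_p)E^\top$; since $E,\tilE$ are orthonormal, the singular values of the whole matrix equal the singular values of the $K\times K$ core matrix $M = D_p - \tfrac1K \bbI_K \bbI_K^\top D_p = (I - \tfrac1K \bbI_K\bbI_K^\top)D_p$. This is a rank-one perturbation of a diagonal matrix; I would show its singular values are all $\Theta(1/K)$ and, crucially, cluster within a $1+O(\log K / K)$ factor of one another — the perturbation term has operator norm $O(1/K)$ times the max entry, so it cannot create large spectral gaps. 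Applying $\nor(\cdot)$ then yields $U_0\nor(\Sigma_0)V_0^\top = U_0 V_0^\top$, and since $M \approx (I-\tfrac1K\bbI_K\bbI_K^\top)D_p$ with $D_p$ entrywise $\Theta(1/K)$, the polar factor is $U_0V_0^\top \approx I - O(1/K)\bbI_K\bbI_K^\top$ in the $K\times K$ core coordinates, giving $G_{\muon}(W_0) = -\tilE E^\top + O(\tfrac1K \tilE J_{K,K} E^\top)$ as claimed. Then $W_\eta = \eta \tilE E^\top + O(\eta/K \cdot \tilE J_{K,K} E^\top)$, so in the $(\tilE,E)$ bases it is $\eta I$ up to a uniform $O(\eta/K)$ correction shared by all classes; feeding this into the softmax, every class has correct-class logit $\eta(1 - O(1/K))$ and off-diagonal logits $O(\eta/K)$, so once \emph{any} class reaches $1-\epsilon$ all classes reach $1 - \epsilon(1 + O(\log K/K))$, which is the Muon bound.

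**The Adam branch and the main obstacle.**
For Adam, $G_{\adam}(W_0) = \sign(\nabla_W\calL(W_0)) = \sign(\tilE M E^\top)$, and here the outcome genuinely depends on the embeddings because $\sign$ is applied in the \emph{standard} coordinate basis, not the $(\tilE,E)$ eigenbasis. For the favorable direction I would pick $E,\tilE$ (e.g.\ support-decoupled, so each $E_k$ and $\tilE_k$ is a scaled standard basis vector up to signs) such that $\tilE M E^\top$ is already essentially diagonal in coordinates; then $\sign(\cdot)$ preserves the diagonal structure and $\nor$-like balancing occurs automatically, pushing all correct-class probabilities together and yielding $\maxd_{\adam}^\epsilon \geq 1-\epsilon$. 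For the unfavorable direction I would exhibit an orthonormal $E',\tilE'$ (the support-coupled construction, with carefully chosen overlap pattern, e.g.\ built from a Hadamard-type or a $2\times 2$ rotation blocks design) for which $\sign(\tilE' M E'^\top)$ is far from the $(\tilE',E')$ eigenbasis: I would compute the resulting matrix's singular values explicitly and show the ratio $\sigma_{\min}/\sigma_{\max} \le 1/4$, then propagate this spectral imbalance through the softmax dynamics — the head-class direction, amplified by the large singular value, saturates while the tail direction lags — to get $\maxd_{\adam}^\epsilon = O(\epsilon^{-0.7}K^{-0.3})$. The hard part will be the unfavorable Adam construction: one must choose the coupled embeddings so that (i) they remain exactly orthonormal, (ii) the sign pattern of $\tilE' M E'^\top$ is explicitly computable for all $K$, and (iii) the induced singular-value ratio and the per-class saturation rates are both controlled tightly enough to produce the specific exponents $0.7$ and $0.3$; this likely requires a small, explicitly diagonalizable block structure (e.g.\ $2\times 2$ or $3\times 3$ blocks tied to the two-class split) whose sign pattern and spectrum can be read off by hand and then tensored/tiled up to dimension $K$.
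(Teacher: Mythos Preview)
Your overall strategy matches the paper's: compute the gradient at $W_0=0$ as $\tilE M E^\top$ with $M=D_p-\tfrac1K\bbI_K\bbI_K^\top D_p$, then analyze each optimizer's transformation of $M$ in the $(\tilE,E)$ basis and propagate through softmax. The GD branch is correct, and the Adam plan (identity embeddings for the balanced case, tiled $3\times 3$ rotation blocks for the imbalanced case) is exactly what the paper does, though be aware the exponents $0.7$ and $0.3$ arise from \emph{numerically chosen} Euler angles---there is no closed-form derivation, so you will have to exhibit specific angles and verify the sign pattern and the induced logit ratios by hand.

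There is, however, a genuine gap in your Muon argument. You claim the singular values of $M$ ``cluster within a $1+O(\log K/K)$ factor of one another'' and infer the polar factor is close to $I$ from this. That is false: under Assumption~\ref{assump:two_class}, the nonzero singular values of $M$ are $a=\alpha/(\beta K)$ with multiplicity $L-1$, $b=(1-\alpha)/((1-\beta)K)$ with multiplicity $K-L-1$, plus one more nonzero value and one exact zero. Since $\alpha\neq\beta$, the ratio $a/b=\alpha(1-\beta)/(\beta(1-\alpha))$ is a fixed constant bounded away from $1$, so the spectrum does \emph{not} cluster. Likewise, your backup argument (``$M$ is a small perturbation of $D_p$, so its polar factor is close to $I$'') fails quantitatively: the rank-one perturbation $-\tfrac1K\bbI_K\bbI_K^\top D_p$ has operator norm comparable to $\sigma_{\min}(D_p)$, so standard polar-factor perturbation bounds give only $O(1)$, not $O(1/K)$.

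The correct reason $U_0V_0^\top\approx I$ has nothing to do with spectral clustering---it is that the \emph{left and right singular vectors coincide} on two large invariant subspaces. Specifically, for any $v$ with $\bbI_L^\top v=0$ one has $Mv=av$ (acting as a scalar), so on the $(L-1)$-dimensional subspace $\calS_1=\{[v;0]:\bbI_L^\top v=0\}$ the matrix $M$ is $aI$ and thus $U=V$ there; similarly on the $(K-L-1)$-dimensional $\calS_2$. Hence $U_0V_0^\top$ restricts to the identity on $\calS_1\oplus\calS_2$, and only the remaining $2$-dimensional subspace $\mathrm{span}\{\bbI_L\oplus 0,\ 0\oplus\bbI_{K-L}\}$ contributes anything nontrivial---and that contribution is a rank-$\le 2$ block whose entries, once expanded in the canonical basis, are all $O(1/K)$ (this is the paper's Proposition~\ref{prop:svd_simp}). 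Once you have the polar factor in the form $I+O(1/K)\cdot(\text{block constants})$, your softmax analysis with logit ratio $r(K)=(K-2C)/(K+2C)$ correctly produces the $\log K/K$ factor.
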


\textbf{Interpretation of Theorem~\ref{thm:comp}.} The proof of Theorem~\ref{thm:comp} is provided in
Appendix~\ref{app:comp}. We now explain the results for the three optimizers separately.
For \ac{gd}, the quantity $r(\alpha,\beta) \leq 1$ measures the imbalance of the
data distribution: $r(\alpha,\beta) = 1$ corresponds to perfectly balanced data,
while $r(\alpha,\beta) \ll 1$ indicates severe imbalance. The results show that if
one set of $(s,r,o)$ triplets is learned with the correct-class probability $[f_{W}
(E_{k})]_{k}$ of at least $1-\epsilon$, then there exists another triplet whose
correct-class probability is
$O(\epsilon^{-r(\alpha,\beta)}K^{r(\alpha,\beta)-1})$. Thus, \ac{gd} is highly sensitive
to data imbalance: as the training distribution becomes more imbalanced, the dispersion
of correct-class probabilities across items increases, i.e., the maximal probability gap
$\Delta(W)$ grows and $\min_{k\in[K]} [f_{W}(E_{k})]_{k}$ decreases. This mirrors the message in Figure~\ref{fig:toy_one_step}, \ref{fig:toy_multi_step},
and Figure~\ref{fig:qa_sgd} in Section~\ref{sec:knowledge}.

In contrast, Muon learns in a balanced fashion, unaffected by data imbalance for
any embeddings $\tilE$ and $E$. Our results show that when the best-learned triplet
achieves a correct-class probability of at least $1-\epsilon$, the worst-learned
triplet has a comparable correct-class probability at least
$1-\epsilon(1 + O(\log K / K))$. This justifies Observation 3. Furthermore,
consistent with Observation 2, Muon’s update $G_{\muon}$ rule allocates equal strength
to all update directions; equivalently, the singular values of $G_{\muon}(W_{0})$
are nearly identical.

Our analysis shows that Adam’s performance is \emph{unstable} with respect to
the embeddings $\tilE$ and $E$, as reflected by the large error bars in Observations~2
and~3. Adam’s element-wise normalization disrupts the inherent matrix structure of
the gradient. When embeddings of different triplets have disjoint supports (e.g.,
$\tilE = E = I_{K,K}$), Adam can optimize parameters in a balanced manner.
However, when embeddings overlap, the sign operator in Adam can introduce
imbalance. In particular, the worst-optimized triplet may then have correct-class
probability $O(\epsilon^{-0.7}K^{-0.3})$. These exponents ($0.3,0.7$) are intrinsic
to Adam’s update under certain embeddings and are independent of $\alpha$ or $\beta$.
Moreover, the Adam update $G_{\adam}(W_{0})$ exhibits pronounced spectral decay—for
example, its smallest singular value can be less than $25\%$ of the
largest—unlike the nearly uniform singular values of Muon. This spectral decay explains
the poor isotropy reported in Observation~2.

In the following, we extend our techniques of one-step analysis to the multi-step
analysis of Muon. Parallel to \eqref{eq:inf_correct_prob}, we define the infimum
correct-class probability for the multi-step optimizer as 
\begin{align*}
\maxd_{\opt}^{\epsilon}
= \inf_{t}\Big\{ \min_{k\in[K]}[f_{W_t}(E_{k})]_{k}\,\Big|\, \max_{k\in[K]}[f_{W_t}(E_{k}
)]_{k}\geq 1-\epsilon, \text{ where }W_{t}=W_{t-1}-\eta_{t}\cdot G_{\opt}(W_{t-1}
) \Big\}.
\end{align*}
Here, we assume that the learning rates $\{\eta_{t}\}_{t\geq 1}$ are
determined by a fixed schedule prior to optimization. Although the quantity implicitly
depends on this schedule, we omit it from the notation for $\maxd_{\opt}^{\epsilon}$ for brevity. We emphasize
that different schedules may affect the value of $t$ that attains the infimum in
$\maxd_{\opt}^{\epsilon}$, but they do not influence the balance behavior that
we present.

\begin{theorem}
    \label{thm:multistep} If Assumptions~\ref{assump:ortho} and \ref{assump:two_class}
    hold, then multi-step Muon achieves 
    \begin{align*}
        \maxd_{\muon}^{\epsilon}\geq 1- \epsilon\bigg(1+O\bigg(\frac{\log K}{K}\bigg)\bigg),\text{ and }G_{\muon}(W_{t})= -\tilE E^{\top}+O\bigg(\frac{1}{K}\tilE J_{K,K}E^{\top}\bigg)\text{ for any }t\geq 0.
    \end{align*}
\end{theorem}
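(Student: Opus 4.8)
The plan is to pass to logit coordinates, where the Muon iteration becomes an autonomous, permutation-symmetric recursion, and then induct on $t$, re-running the one-step analysis of Theorem~\ref{thm:comp} at every iterate; the real work is to show that the iterate stays in the structural class that makes that analysis applicable. Concretely, writing $Z:=\tilE^{\top}WE$ (so that $f_{W}(E_{k})=\sm(Z_{:,k})$) and $P:=\mathrm{diag}(p_{1},\dots,p_{K})$, differentiating the cross-entropy loss gives $\nabla_{W}\mathcal{L}(W)=\tilE\,(\sm(Z)-I_{K,K})\,P\,E^{\top}$. Since $\tilE$ and $E$ have orthonormal columns (Assumption~\ref{assump:ortho}), the SVD of $\tilE M E^{\top}$ is $\tilE$ and $E$ conjugating the SVD of $M$; hence, writing $\Psi(M):=U\,\nor(\Sigma)\,V^{\top}$ for the normalized orthogonal factor of $M=U\Sigma V^{\top}$, the Muon update is $G_{\muon}(W_{t})=\tilE\,\Psi(M_{t})\,E^{\top}$ with $M_{t}:=(\sm(Z_{t})-I_{K,K})P$. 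It follows that $W_{t}=\tilE Z_{t}E^{\top}$ and that $Z_{t}$ obeys the autonomous recursion $Z_{t+1}=Z_{t}-\eta_{t+1}\Psi(M_{t})$, $Z_{0}=0$; in particular $[f_{W_{t}}(E_{k})]_{k}=[\sm((Z_{t})_{:,k})]_{k}$ depends only on $Z_{t}$, never on the particular embeddings, which already gives the asserted embedding-independence. I would also record the exact identity $\mathbf{1}^{\top}M_{t}=0$ (the columns of $\sm(Z_{t})-I_{K,K}$ sum to zero), so $\mathbf{1}$ is always an exact left-null vector of $M_{t}$ and hence of $\Psi(M_{t})$; this pins the form of the leading correction at the next step.

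Next I would induct on $t$. Under Assumption~\ref{assump:two_class}, permutation invariance within each of the two classes keeps every $Z_{t}$ in the six-dimensional space of $2\times 2$-block-constant matrices, and the invariant to maintain is $Z_{t}=A_{t}\,I_{K,K}+\tfrac1K R_{t}$, where $A_{t}:=\sum_{s\le t}\eta_{s}$ and $R_{t}$ is block-constant with $\|R_{t}\|_{\max}=O(A_{t})$ (a universal constant). The base case $t=0$ is handled exactly as in the Muon part of Theorem~\ref{thm:comp}. For the inductive step, the invariant lets a short softmax computation put $M_{t}$ into the same structural family as $M_{0}$: $M_{t}=-D_{t}+(\text{rank-one term of relative size }O(1/K))$ with $D_{t}$ a positive diagonal taking two distinct values and $\mathbf{1}^{\top}M_{t}=0$ exactly. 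I would then apply the orthogonal-factor computation behind Theorem~\ref{thm:comp} to $M_{t}$: diagonalizing under the class-permutation action reduces the SVD to a positive scalar on each of the two large isotypic subspaces together with an explicit rank-one $2\times2$ problem on $\mathrm{span}(\mathbf{1}_{[L]},\mathbf{1}_{[K]\setminus[L]})$, which is solved in closed form and expanded in $1/K$. The outcome is the key estimate $\Psi(M_{t})=-I_{K,K}+\tfrac1K Y_{t}$ with $Y_{t}$ block-constant and $\|Y_{t}\|_{\max}=O(1)$ \emph{uniformly in $t$} --- crucially, the $A_{t}$-dependence of $M_{t}$ sits only in its overall scale, which cancels in the (scale-invariant) orthogonal factor --- and the constraint $\mathbf{1}^{\top}\Psi(M_{t})=0$ fixes the leading part of $Y_{t}$, giving $G_{\muon}(W_{t})=\tilE\,\Psi(M_{t})\,E^{\top}=-\tilE E^{\top}+O\!\big(\tfrac1K\tilE J_{K,K}E^{\top}\big)$ for every $t$. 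Since $R_{t+1}=R_{t}-\eta_{t+1}Y_{t}$, we get $\|R_{t+1}\|_{\max}\le\|R_{t}\|_{\max}+O(\eta_{t+1})=O(A_{t+1})$, closing the induction, and $Z_{t+1}=A_{t+1}I_{K,K}+\tfrac1K R_{t+1}$.

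Finally I would extract the probability bound. From $Z_{t}=A_{t}I_{K,K}+\tfrac1K R_{t}$ with $\|R_{t}\|_{\max}=O(A_{t})$, every logit gap $(Z_{t})_{kk}-(Z_{t})_{jk}$ equals $A_{t}+O(A_{t}/K)$, so $[f_{W_{t}}(E_{k})]_{k}=\big(1+(K-1)e^{-A_{t}}(1+O(A_{t}/K))\big)^{-1}$ for all $k$, with only the $O(A_{t}/K)$ term depending on $k$. Whenever $\max_{k}[f_{W_{t}}(E_{k})]_{k}\ge 1-\epsilon$ we have $(K-1)e^{-A_{t}}=O(\epsilon)$, hence $A_{t}=\Theta(\log K)$ at the times that matter (for larger $A_{t}$ all correct-class probabilities are within a vanishing fraction of $1$); comparing the smallest correct-class probability to the largest then yields $1-\min_{k}[f_{W_{t}}(E_{k})]_{k}\le\big(1-\max_{k}[f_{W_{t}}(E_{k})]_{k}\big)\big(1+O(\log K/K)\big)\le\epsilon\big(1+O(\log K/K)\big)$, i.e.\ $\maxd_{\muon}^{\epsilon}\ge 1-\epsilon(1+O(\log K/K))$. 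The main obstacle is the orthogonal-factor estimate together with the bookkeeping that makes the induction self-sustaining: controlling the normalized orthogonal factor of a rank-deficient, rank-one--perturbed positive-diagonal matrix with entrywise (not merely spectral) precision, verifying that this $O(1/K)$ bound is \emph{uniform in $t$} despite $A_{t}\to\infty$, and checking that the accumulated per-step corrections remain negligible --- which works precisely because $A_{t}$ only ever needs to reach order $\log K$ for a fact to be well learned. The two-value structure of $P$ (hence of $D_{t}$) is exactly what makes the $2\times2$ reduction, and thus the closed-form orthogonal factor, available.
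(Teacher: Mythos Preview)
Your proposal is correct and follows essentially the same approach as the paper. Both arguments induct on $t$ to maintain the structural invariant that $W_t=\tilE X_tE^{\top}$ with $X_t=A_tI_{K,K}+(\text{block-constant of size }O(A_t/K))$, both reduce the orthogonal-factor computation at each step to the explicit SVD of a ``two-value diagonal plus $2\times2$-block-constant'' matrix (the paper packages this as Proposition~\ref{prop:svd}), and both read off the probability bound by reusing the one-step estimate. Your framing in logit coordinates $Z_t=\tilE^{\top}W_tE$ and the observation $\mathbf{1}^{\top}M_t=0$ are nice conceptual touches the paper leaves implicit, but the skeleton is identical.

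One small imprecision: your claim that ``the $A_t$-dependence of $M_t$ sits only in its overall scale'' is not literally true---the two column blocks of $M_t$ carry different $A_t$-dependent normalizers, so $M_t$ is not a scalar multiple of a fixed matrix. The uniform-in-$t$ bound $\|Y_t\|_{\max}=O(1)$ actually comes from the fact that in the $2\times2$ reduction the entries of the orthogonal factors $\tilU,\tilV$ are bounded by $1$ regardless of the specific values in $M_t$; this is exactly how the paper argues it (``the orthonormality of $\tilU$ and $\tilV$ implies $|\tilU_{i,j}|,|\tilV_{i,j}|\le1$''), and your plan to ``solve the $2\times2$ problem in closed form and expand in $1/K$'' would recover the same conclusion. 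So this is a wording issue, not a gap.
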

The proof is provided in Appendix~\ref{app:multistep}. We note that the multi-step analysis of Muon exhibits similar properties to the one-step case presented in Theorem~\ref{thm:comp}. Specifically, for any embedding, Muon achieves balanced learning across all items, and its update at each step remains nearly isotropic.
    \section{Conclusion}
Our work takes the first step toward unveiling why and how Muon outperforms Adam. Through ablations of Muon’s effect on different Transformer components and by relating these results to the balanced learning of associative memories, we conclude that the Muon update rule is aligned with the outer-product structure of linear associative memories, enabling more balanced and effective learning of tail classes in heavy-tailed distributions. Intuitively, this property of Muon may extend beyond outer products to higher-order tensor products, an exciting direction for future work.

    \clearpage
    \bibliography{ref}
    \bibliographystyle{iclr2026_conference}
    \newpage
    \appendix

    \section{Steepest Descent View Understanding Muon and Adam}\label{app:dgd_understand}

\citet{bernstein2024old} showed that many popular deep learning optimizers can be understood through the unifying framework of \emph{steepest descent}, once their exponential moving averages (EMAs) are disabled. This perspective shifts the focus from heuristic or second-order motivations to a more fundamental, geometric view: the choice of an optimizer is equivalent to choosing a specific \emph{norm} to measure the ``size'' of the weight update.

\textbf{The Steepest Descent Framework.} The core idea is to find a weight update, $\Delta w$, that minimizes a local quadratic approximation of the loss function. This is formulated as the following optimization problem:
\begin{equation*}
    \Delta w^* = \argmin_{\Delta w} \left[ g^\top \Delta w + \frac{\lambda}{2} \|\Delta w\|^2 \right],
    \label{eq:steepest_descent_problem}
\end{equation*}
where $g$ is the gradient of the loss, $\lambda > 0$ is a ``sharpness'' parameter that controls the step size, and $\|\cdot\|$ is a chosen norm.

The solution to this problem can be expressed as:
\begin{equation*}
    \Delta w^* = -\eta \cdot d,
    \label{eq:steepest_descent_solution}
\end{equation*}
where the step size $\eta = \frac{\|g\|_*}{\lambda}$ and the update direction $d = \arg\max_{\|t\|=1} g^\top t$. Here, $\|\cdot\|_*$ denotes the \emph{dual norm} of $\|\cdot\|$ (defined as $\|y\|_* = \sup_{\|x\|\le 1} y^\top x$). The key insight is that different choices of the norm $\|\cdot\|$ lead to different update directions $d$, recovering the update rules of well-known optimizers.

\textbf{Muon as Steepest Descent under Spectral Norm.} The update rule of the Muon optimizer is derived by applying the steepest descent framework to weight matrices equipped with the \emph{spectral norm}, denoted in the paper as the $\|\cdot\|_{\ell_2 \to \ell_2}$ operator norm (defined as its largest singular value, $\|A\|_{\ell_2 \to \ell_2} = \sigma_{\max}(A) = \sup_{\|x\|_2=1} \|Ax\|_2$). For a gradient matrix $G$, the problem is to find the update $\Delta W$ that solves:
\begin{equation*}
    \Delta W^* = \argmin_{\Delta W} \left[ \langle G, \Delta W\rangle_F + \frac{\lambda}{2} \|\Delta W\|_{\ell_2 \to \ell_2}^2 \right].
\end{equation*}

The solution to this problem is directly determined by the Singular Value Decomposition (SVD) of the gradient, $G = U\Sigma V^\top$. The resulting update direction, which maximizes alignment with the gradient under the spectral norm constraint, is shown to be $UV^\top$. The corresponding dual norm of the gradient, $\|G\|^*_{\ell_2 \to \ell_2}$, which scales the step size, is found to be $\text{tr}(\Sigma)$, the sum of the singular values. Combining these components yields the final steepest descent update rule:
\begin{equation*}
    \Delta W^* = -\frac{\text{tr}(\Sigma)}{\lambda} \cdot UV^\top.
\end{equation*}
This demonstrates that Muon's core operation is a principled descent step where the singular vectors of the gradient determine the direction, and the sum of its singular values scales the step size.

\textbf{Adam as Steepest Descent under $\ell_\infty$ Norm.} Adam can be understood as steepest descent on the flattened parameter vector $w$ when the space is equipped with the vector \emph{infinity norm} ($\ell_\infty$) (defined as the maximum absolute value of its elements, $\|x\|_\infty = \max_i |x_i|$). For a gradient vector $g$, the optimization problem is to find the update $\Delta w$ that solves:
\begin{equation*}
    \Delta w^* = \argmin_{\Delta w} \left[ g^\top \Delta w + \frac{\lambda}{2} \|\Delta w\|_\infty^2 \right].
\end{equation*}

The update direction that maximizes alignment with the gradient $g$ under the infinity norm constraint is the sign of the gradient, $\text{sign}(g)$. The corresponding dual norm of the gradient, $\|g\|^*_\infty$, which scales the step size, is the $\ell_1$ norm, $\|g\|_1$ (the sum of the absolute values of its elements, $\|x\|_1 = \sum_i |x_i|$). Combining these components yields the final steepest descent update rule:
\begin{equation*}
    \Delta w^* = -\frac{\|g\|_1}{\lambda} \cdot \text{sign}(g).
\end{equation*}
This reveals that Adam's fundamental operation corresponds to a descent step where each parameter moves with the same magnitude, determined only by its gradient's sign. 

    \section{Experimental Details}\label{app:exp_details}
\subsection{Experimental Details of Training on FineWeb}
When training 160M models on FineWeb, we disable weight decaying and Nesterov acceleration for both Adam and Muon. Thus, we only compare their performance along. To set the learning rate, we conduct a grid search on $1\times 10^{-1}, 5\times 10^{-2},2\times 10^{-2},1\times 10^{-2},5\times 10^{-3},2\times 10^{-3},1\times 10^{-3},5\times 10^{-4},2\times 10^{-4}$. When conducting the ``Independent Blocks '' and ``Combined Configuration '' experiments in Section~\ref{sec:am}, we just fix the learning rate of Muon. We set $\beta_1=0.8$, $\beta_2=0.95$ for Adam and set $\beta=0.95$ for Muon. When training 0.7B models on FineWeb, we conduct a grid search of learning rate on $2\times 10^{-3},1\times 10^{-3},5\times 10^{-4},2\times 10^{-4}$. We set $\beta_1=0.9$, $\beta_2=0.95$ for Adam and set $\beta=0.95$ for Muon. We do not adopt group query attention in the structure; thus, the parameter sizes of $W_Q$, $W_K$, $W_V$, and $W_O$ are the same. We conduct experiments on 8 A100 with 80 GB memory. 


\subsection{Dataset Details for the Heavy-Tail Knowledge Task}
\label{app:qadataset_details} Following \cite{allen2024physics}, the foundation
of our knowledge-intensive task is a set of question-answering (QA) pairs
derived from synthetically generated biographies. Each biography is constructed from
a combination of seven key attributes: name, birthdate, birthplace, educational institution,
major, employer, and workplace. The attribute values are sampled from predefined
lists, creating a diverse set of entities. Specifically, we use approximately
400 first names, 1000 surnames, 300 educational institutions, 100 majors, and
300 employers. Each synthetic individual is assigned a unique combination of these
attributes, forming a distinct biographical profile. For example, a generated
biography might look like this:
\begin{quote}
    \textit{\textbf{Ashton Hilda Older} has a birthday that falls on \textbf{February
    01, 2063}. \textbf{Miami, FL} is the birthplace of he. He is an alumnus of \textbf{Saddleback
    College}. He has a \textbf{General Literature} education. He works closely with
    \textbf{BlockFi}. For professional growth, he chose to relocate to \textbf{Jersey
    City}.}
\end{quote}
This text is generated by combining the \textbf{structured attributes} (name,
date, location, etc.) with a set of sentence templates.

A predefined set of QA templates is then used to generate the final training
data. These templates contain placeholders corresponding to the biographical attributes.
By formatting these templates with the information from each synthetic biography,
we generate a collection of concrete QA pairs for each entity. For example, for the
entity ``Ashton Hilda Older'', we can generate the following six QA pairs:

\vspace{0.3cm}
\noindent
\begin{minipage}[t]{0.5\textwidth}
    \begin{enumerate}
        \item What is the birth date of Ashton Hilda Older? \\
            \textbf{Answer: February 01, 2063.}

        \item What is the birth city of Ashton Hilda Older? \\
            \textbf{Answer: Miami, FL.}

        \item Which university did Ashton Hilda Older study? \\
            \textbf{Answer: Saddleback College.}
    \end{enumerate}
\end{minipage}%
\begin{minipage}[t]{0.5\textwidth}
    \begin{enumerate}
        \setcounter{enumi}{3}

        \item What major did Ashton Hilda Older study? \\
            \textbf{Answer: General Literature.}

        \item Which company did Ashton Hilda Older work for? \\
            \textbf{Answer: BlockFi.}

        \item Where did Ashton Hilda Older work? \\
            \textbf{Answer: Jersey City.}
    \end{enumerate}
\end{minipage}
\vspace{0.3cm}

To evaluate the optimizers on a knowledge-intensive task with data imbalance, we
constructed a synthetic dataset where the number of question-answering (QA) samples
per class follows a power-law distribution. This is designed to simulate real-world
scenarios where a few entities (the ``head'') are highly represented, while most
entities (the ``tail'') are rare.

The generation process is controlled by an integer parameter, $m$. The classes are
organized into $m+1$ groups, indexed from $g=0$ to $m$.
\begin{itemize}
    \item Group $g$ contains $N_{g}$ classes, where $N_{0}= 1$ and
        $N_{g}= 2^{g-1}$ for $g > 0$.

    \item Each class within group $g$ is allocated a specific number of ``selections,''
        $S_{g}= 2^{m-g}$.

    \item For each selection, we generate $n_{qa}$ unique QA pairs by formatting
        templates with biographical information corresponding to that class.
\end{itemize}
Thus, the total number of QA samples for any given class in group $g$ is
$S_{g}\times n_{qa}$. This structure ensures that the single class in group 0 has
the most samples, while the numerous classes in group $m$ have the fewest.

In our experiment, we set the parameters to $m=15$ and $n_{qa}=6$. This results
in a dataset with a total of $2^{15}= 32,768$ classes. The number of samples per
class ranges from $196,608$ for the head class (group 0) down to just $6$ for each
of the $16,384$ tail classes (group 15). The final distribution is visualized in
Figure~\ref{fig:heavy_tail_qa}(a) in the main text.

To evaluate the model's performance on this pure memory task, we measure the First
Token Accuracy (FTA) on the answers. This metric assesses the model's ability to
correctly recall information by checking if the first generated token of the
answer matches the ground truth. Furthermore, to understand how optimizers
handle data imbalance, we analyze the FTA across different data frequency groups,
from high-frequency (head) to low-frequency (tail) data.

\subsection{Experimental Details About Angles Between Associative Memories
Embeddings}
\label{app:exp_setting_angles} Following \cite{fang2024alphaedit}, we analyze
the associative memories in the \ac{ffn} modules. To
obtain $E_i$, we use the activations within the feed-forward modules, and for
$\tilE_i$, we take the corresponding module outputs. We evaluate knowledge items
from two widely used datasets: Counterfact~\citep{meng2022locating} and ZsRE~\citep{levy2017zero}.
Results on Counterfact are shown in Figure~\ref{fig:kv_angles_mcf}, while results
on ZsRE are provided in Figure~\ref{fig:kv_angles_zsre} in Appendix~\ref{app:angles}.

\subsection{Experimental Details of One-layer Models}\label{app:toy_details}

We set the hyperparameters as $K=d=999$, $\alpha=0.8$, $\beta=0.2$. For the support-decoupled setting, we set $E$ and $\tilE$ as identity matrices. For the support-coupled setting, we set $E$ and $\tilE$ according to the construction presented in the proof of Theorem~\ref{thm:comp} in Appendix~\ref{app:comp}.

\section{Additional Experimental Results}

\subsection{MaxLogit per Layer on the 160M NanoGPT model via Muon Optimizer}
\label{app:max_logit}

In this subsection, we present the MaxLogit values for each layer of the 160M NanoGPT
model trained using the Muon Optimizer. Following Gemma~3~\citep{team2025gemma},
we introduce RMSNorm to the attention mechanism. The attention mechanism in our model
is defined as follows:
\begin{align*}
    O = \text{softmax}(\tilde{Q}\tilde{K}^{T})V, \quad \tilde{Q}= \text{RMSNorm}(Q), \quad \tilde{K}= \text{RMSNorm}(K)
\end{align*}
where RMSNorm is defined as
$\text{RMSNorm}(x) = \frac{x}{\sqrt{\frac{1}{d}\sum_{i=1}^{d}x_{i}^{2}}}$, with $d$
being the dimension of $x$. MaxLogit is defined as:
\begin{align*}
    S_{\max}= \max_{i,j}\tilde{q}_{i}\cdot \tilde{k}_{j}
\end{align*}
representing the maximum value in the attention scores before softmax normalization.

The MaxLogit values for each layer are summarized in Table~\ref{tab:max_logit}.
\begin{table}[H]
    \centering
    \caption{MaxLogit values per layer on the 160M NanoGPT model via Muon
    Optimizer.}
    \label{tab:max_logit} \small
    \setlength{\tabcolsep}{4pt}
    \begin{tabular}{c|cccccccccccc}
        \hline
        \textbf{Layer}    & 1     & 2     & 3     & 4     & 5     & 6     & 7     & 8     & 9     & 10    & 11    & 12    \\
        \hline
        \textbf{MaxLogit} & 8.396 & 6.880 & 6.009 & 7.676 & 6.349 & 5.890 & 7.688 & 6.314 & 6.205 & 5.613 & 6.033 & 6.371 \\
        \hline
    \end{tabular}
\end{table}

Recent reports~\cite{team2025kimi} have shown a potential ``MaxLogit explosion''
phenomenon, where $S_{\max}$ grows steadily (often near-linearly) during
training, leading to overly peaked attention, gradient spikes, and degraded optimizer
comparisons. We included this measurement to rule out the possibility that Muon's
comparatively smaller impact on the QK blocks (relative to VO/FFN) is simply due
to suppressing such an instability. In our 160M setting, with RMSNorm applied to
both $Q$ and $K$ (following Gemma~3), the per-layer MaxLogit values remain moderate
and show no runaway growth. Thus, for this model size and normalization scheme,
differences in Muon's effectiveness across components cannot be attributed to
avoiding a MaxLogit explosion in attention.

\subsection{Scaling to the 0.7B NanoGPT Model}\label{app:scaleup}

To evaluate the scalability of our findings, we extend our experiments from the 160M
model to a larger 0.7B parameter model. This section presents the results of this
scaled-up analysis, examining whether the advantages of Muon observed in the
smaller model persist at a larger scale. 

\begin{figure}[H]
    \centering
    \subfigure[Non-gated FFN]{ \includegraphics[width=0.46\textwidth]{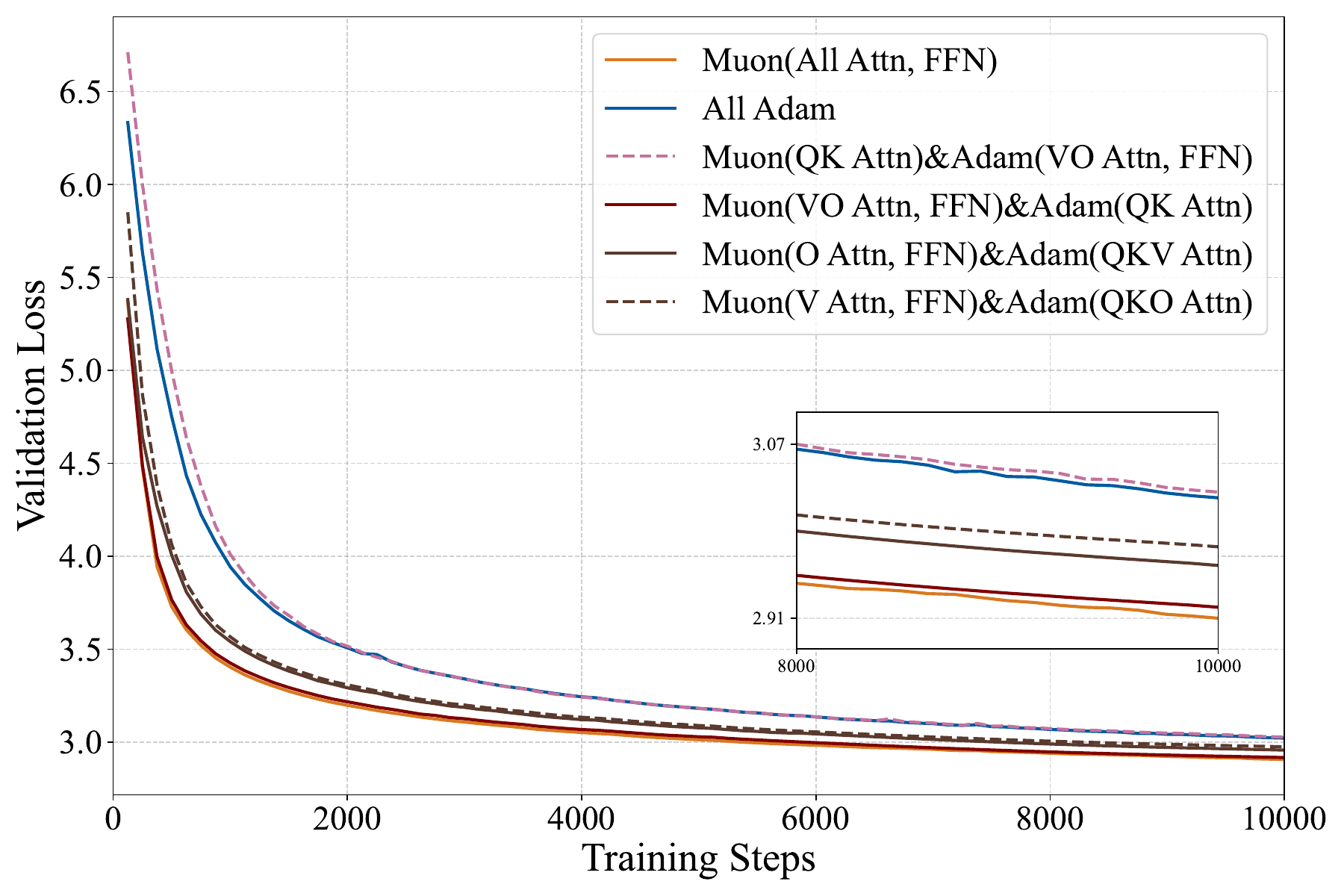}}
    \subfigure[Gated FFN]{ \includegraphics[width=0.46\textwidth]{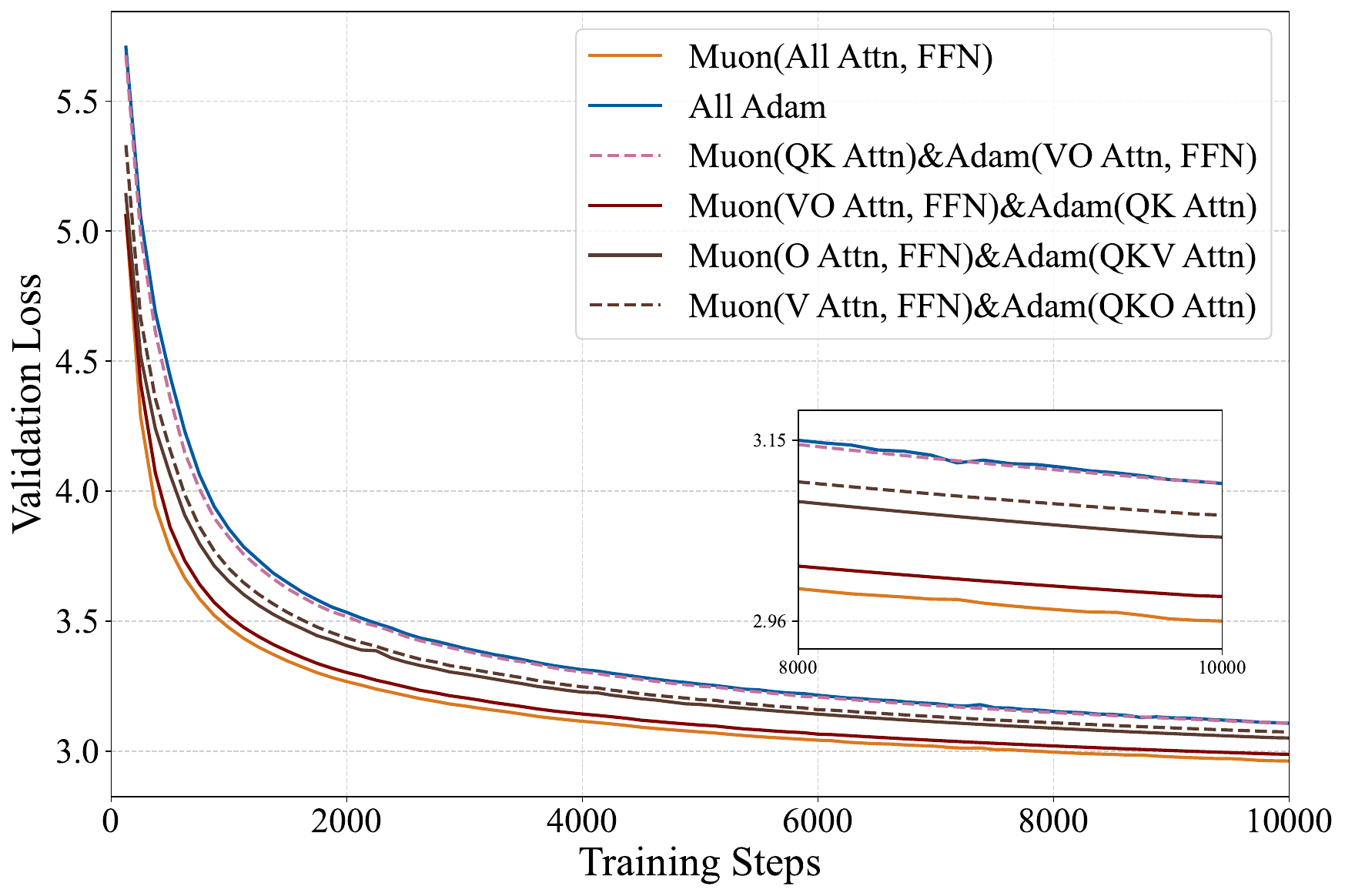}}
    \caption{Validation loss comparison on the 0.7B NanoGPT model. (a) Combined configuration with non-gated feed-forward networks.(b) Combined configuration with gated feed-forward networks.}
    \label{fig:valloss_large_nogated}
\end{figure}

\begin{figure}[H]
    \centering
     \includegraphics[width=1.0\textwidth]{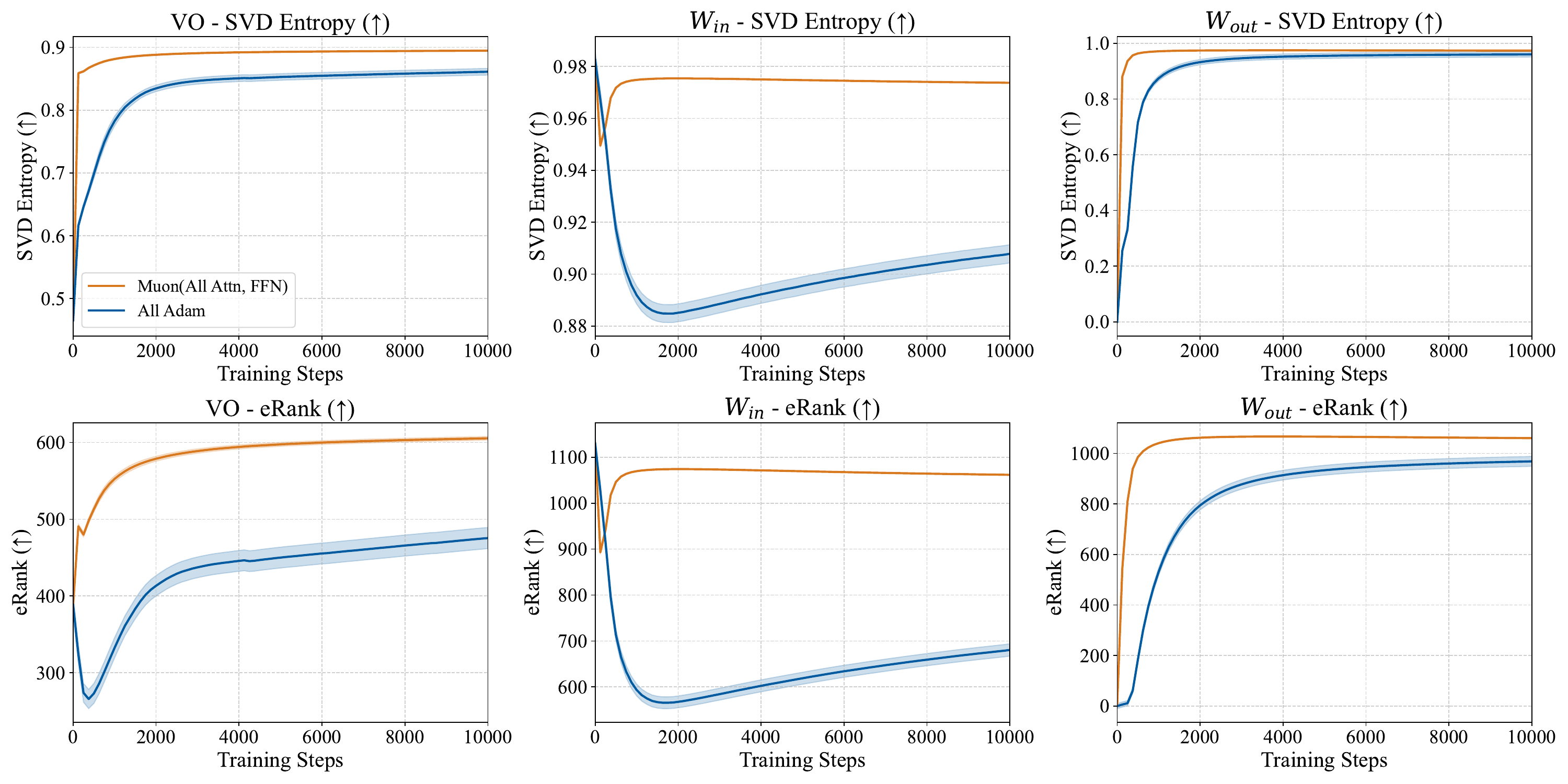}
    \caption{Spectral Dynamics of Weight Matrices During Training on the
    0.7B NanoGPT model.}
    \label{fig:svd_large_nogated}
\end{figure}

\begin{figure}[H]
    \centering
     \includegraphics[width=1.0\textwidth]{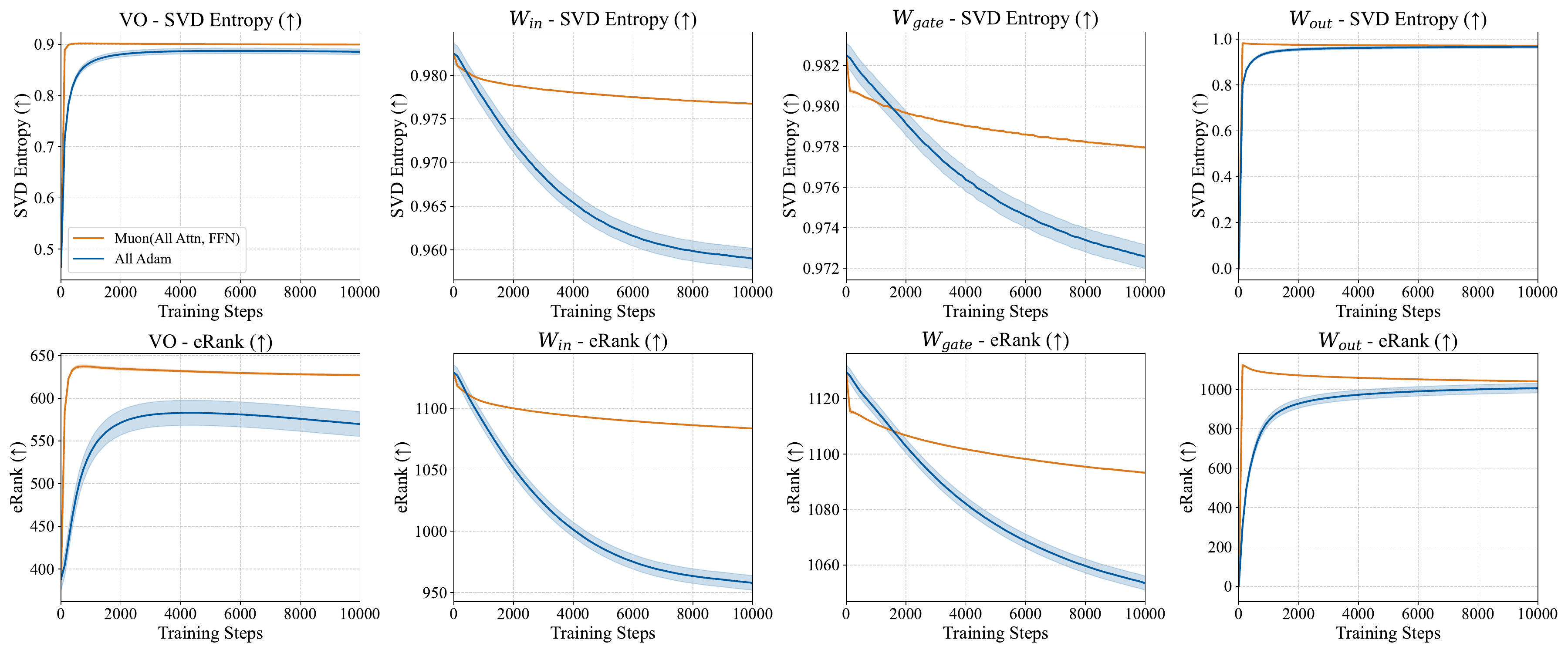}
    \caption{Spectral Dynamics of Weight Matrices During Training on the
    0.7B NanoGPT model with the Gated FFN.}
    \label{fig:svd_large_gated}
\end{figure}

Figure~\ref{fig:valloss_large_nogated}
shows the validation loss curves for various optimizer configurations. Consistent
with our findings on the 160M model, applying Muon to all components achieves the lowest validation loss, outperforming Adam
baseline. The hybrid experiments further reinforce our earlier conclusions: applying
Muon to only the VO and FFN components yields performance nearly identical to that
of the full Muon optimizer, whereas applying it only to the QK components offers
little advantage over Adam.

The spectral dynamics, shown in Figures~\ref{fig:svd_large_nogated} and~\ref{fig:svd_large_gated}, also align
with Observation~2. For the VO, $W_{\text{in}}$, $W_{\text{gate}}$ (in model with Gated FFN) and $W_{\text{out}}$ matrices,
Muon leads to higher SVD entropy and eRank compared to Adam, indicating
that it encourages the learning of more distributed, higher-dimensional representations. Overall, these results
demonstrate that the benefits of Muon and the underlying mechanisms scale to larger models.

\subsection{Additional Results about Spectral Dynamics of Transformer Weight
Matrices During Training}\label{app:svd}

To complement the main-text analysis (Fig.~\ref{fig:fineweb_160m_svd}), we also evaluate
spectral dynamics during training for the 160M NanoGPT model with both non-gated
and gated feed-forward networks (Fig.~\ref{fig:fineweb_160m_gated_svd}). The analysis
includes $W_{\inn}$ for both configurations, as well as the gate matrix
$W_{\gate}$ for the gated version. The conclusions are consistent across all three
matrices and mirror the non-gated setting: with Muon, SVD entropy and eRank increase,
while Top-$k$ energy and the $Q_{75/25}$ ratio decrease, consistent with Observation~2
in the main text.

\begin{figure}[H]
    \centering
    \subfigure[$W_{\inn}$ (Non-Gated FFN)]{ \includegraphics[width=0.32\textwidth]{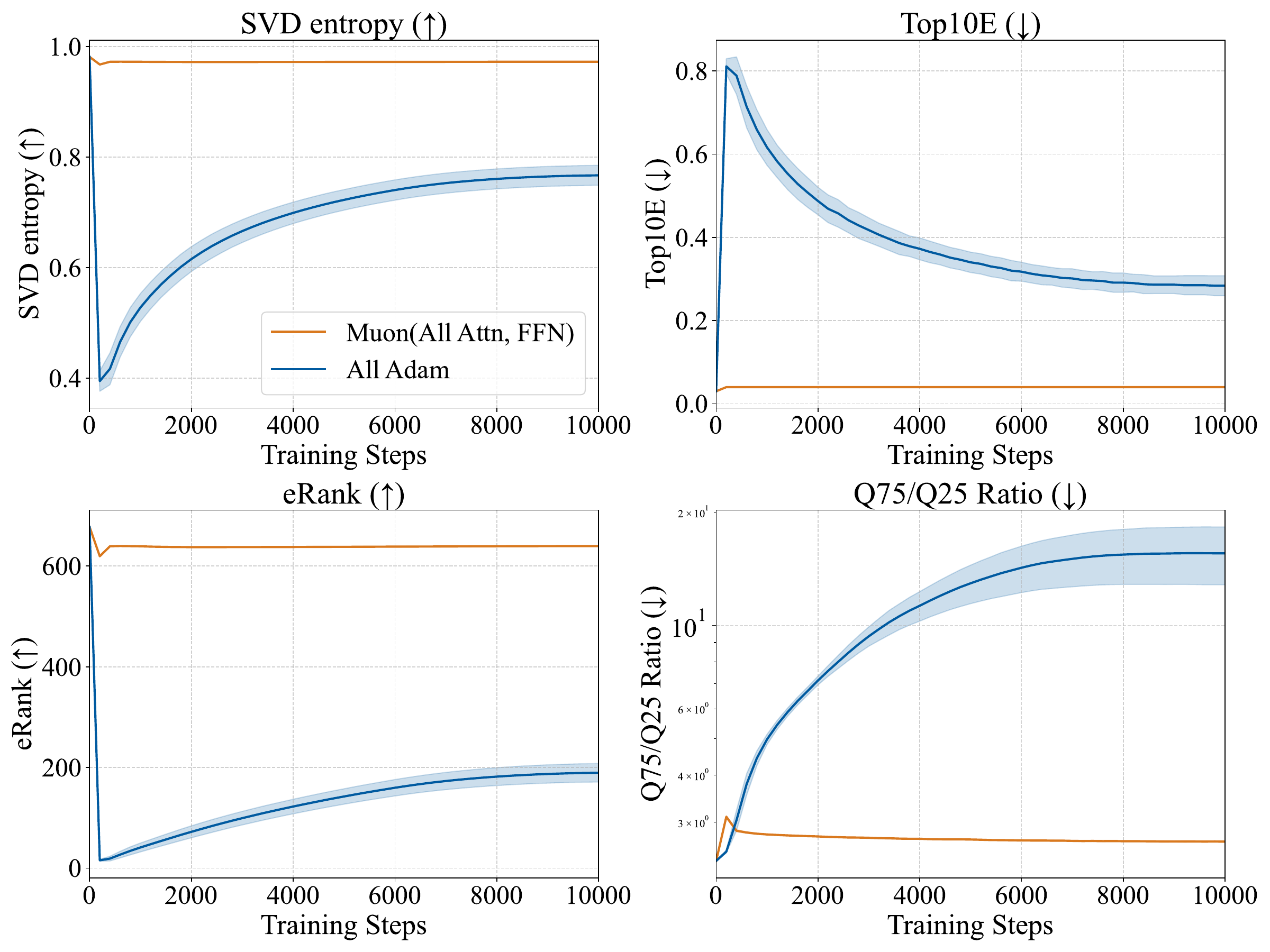}}
    \subfigure[$W_{\inn}$ (Gated FFN)]{ \includegraphics[width=0.32\textwidth]{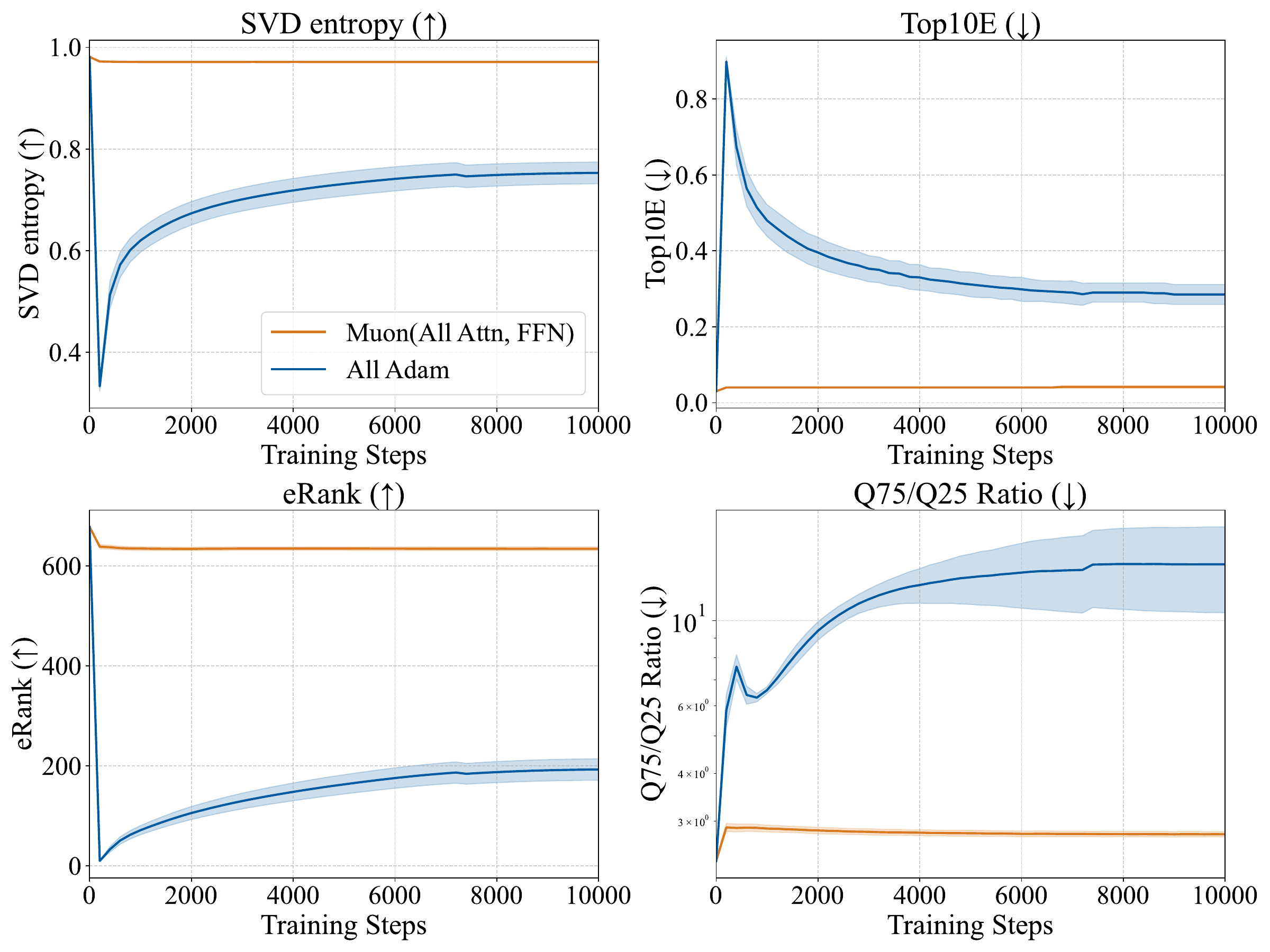}}
    \subfigure[$W_{\gate}$]{ \includegraphics[width=0.32\textwidth]{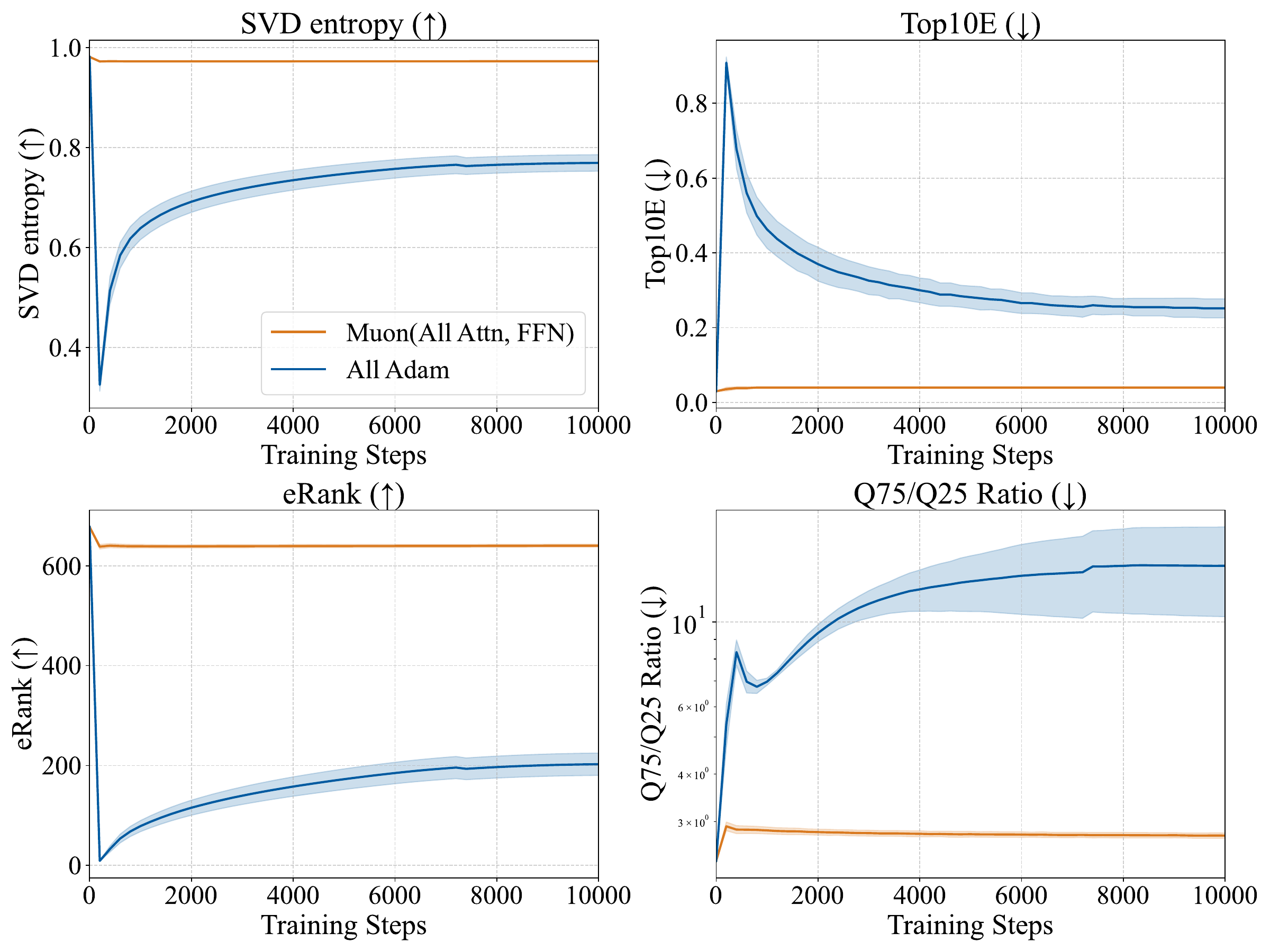}}
    \caption{Spectral Dynamics of FFN Weight Matrices During Training on the
    160M NanoGPT model. Each panel reports four metrics characterizing singular value
    distributions: SVD entropy, Top10E, eRank, and Q75/Q25 ratio. The subplots
    correspond to different weight matrices: (a) $W_{\text{in}}$ (non-gated), (b)
    $W_{\text{in}}$ (gated), and (c) $W_{\text{gate}}$ (gated).}
    \label{fig:fineweb_160m_gated_svd}
\end{figure}

\subsection{Detailed Experiment Results about Heavy-Tail Imbalance Knowledge
Task}

To complement the qualitative trends shown in Section~\ref{sec:knowledge} (Fig.~\ref{fig:heavy_tail_qa}),
we report the exact First Token Accuracy (FTA) for selected tail groups at three
training checkpoints (2k, 5k, 10k steps). We focus on groups $g=11,13,15$, which
represent increasingly rare (mid–tail, tail, extreme tail) frequency bands in the
power-law distribution (recall that larger $g$ implies fewer samples per class).
The tables contrast full Muon, Adam, SGD+Momentum, and two hybrid configurations
(Muon applied only to VO\&FFN or only to QK). The numbers highlight: (i) Muon's rapid
convergence on rare groups (already strong by 2k, near-saturated by 5k), (ii) Adam's
persistent head–tail gap, and (iii) the dominant contribution of applying Muon
to VO\&FFN for tail generalization (the VO\&FFN hybrid closely tracks full Muon,
whereas the QK-only hybrid lags). These quantitative results substantiate
Observation~3 that Muon delivers more balanced learning.

\begin{table}[H]
    \centering
    \caption{Heavy-tail knowledge task: Group performance by optimizer (2{,}000
    steps)}
    \label{tab:optimizer_2000}
    \small
    \begin{tabular}{c ccccc}
        \toprule \multirow{2}{*}{\textbf{Group}} & \multicolumn{5}{c}{\textbf{Optimizer}} \\
        \cmidrule(lr){2-6}                       & \textbf{Muon}                         & \textbf{Adam} & \textbf{SGD+Mom.} & \textbf{Muon(VO, FFN)} & \textbf{Muon(QK)} \\
        \midrule \textbf{11}                     & \textbf{0.854 ± 0.029}                & 0.312 ± 0.043 & 0.156 ± 0.037     & 0.814 ± 0.022         & 0.472 ± 0.041     \\
        \textbf{13}                              & \textbf{0.386 ± 0.029}                & 0.146 ± 0.015 & 0.120 ± 0.012     & 0.256 ± 0.030         & 0.154 ± 0.032     \\
        \textbf{15}                              & \textbf{0.140 ± 0.027}                & 0.090 ± 0.031 & 0.082 ± 0.013     & 0.114 ± 0.023         & 0.086 ± 0.037     \\
        \bottomrule
    \end{tabular}
\end{table}

\begin{table}[H]
    \centering
    \caption{Heavy-tail knowledge task: Group performance by optimizer (5{,}000
    steps)}
    \label{tab:optimizer_5000}
    \small
    \begin{tabular}{c ccccc}
        \toprule \multirow{2}{*}{\textbf{Group}} & \multicolumn{5}{c}{\textbf{Optimizer}} \\
        \cmidrule(lr){2-6}                       & \textbf{Muon}                         & \textbf{Adam} & \textbf{SGD+Mom.} & \textbf{Muon(VO, FFN)} & \textbf{Muon(QK)} \\
        \midrule \textbf{11}                     & \textbf{0.996 ± 0.006}                & 0.936 ± 0.039 & 0.314 ± 0.021     & 0.992 ± 0.005         & 0.970 ± 0.007     \\
        \textbf{13}                              & \textbf{0.964 ± 0.023}                & 0.298 ± 0.074 & 0.148 ± 0.013     & 0.934 ± 0.015         & 0.354 ± 0.032     \\
        \textbf{15}                              & \textbf{0.320 ± 0.028}                & 0.110 ± 0.027 & 0.084 ± 0.011     & 0.254 ± 0.026         & 0.118 ± 0.019     \\
        \bottomrule
    \end{tabular}
\end{table}

\begin{table}[H]
    \centering
    \caption{Heavy-tail knowledge task: Group performance by optimizer (10{,}000
    steps)}
    \label{tab:optimizer_10000}
    \small
    \begin{tabular}{c ccccc}
        \toprule \multirow{2}{*}{\textbf{Group}} & \multicolumn{5}{c}{\textbf{Optimizer}} \\
        \cmidrule(lr){2-6}                       & \textbf{Muon}                         & \textbf{Adam} & \textbf{SGD+Mom.} & \textbf{Muon(VO, FFN)} & \textbf{Muon(QK)} \\
        \midrule \textbf{11}                     & 1.000 ± 0.000                         & 1.000 ± 0.000 & 0.422 ± 0.023     & 1.000 ± 0.000         & 1.000 ± 0.000     \\
        \textbf{13}                              & \textbf{1.000 ± 0.000}                & 0.890 ± 0.042 & 0.294 ± 0.013     & 0.998 ± 0.002         & 0.940 ± 0.034     \\
        \textbf{15}                              & \textbf{0.976 ± 0.006}                & 0.264 ± 0.048 & 0.126 ± 0.021     & 0.954 ± 0.021         & 0.286 ± 0.039     \\
        \bottomrule
    \end{tabular}
\end{table}

\subsection{Additional Experiment Results about Heavy-Tail Imbalance Knowledge
Task with Gated Feed-Forward Networks}\label{app:qa_gated}

This subsection complements the main heavy-tail results in Section~\ref{sec:knowledge}
by studying the gated feed-forward networks (Gated FFN) variant. We follow the same
presentation order as in the main text: first an overview figure (sample distribution
and learning curves under different optimizers), then tables reporting the exact
First-Token Accuracy (FTA) for tail groups $g\in\{11,13,15\}$ at three training checkpoints
(2k, 5k, 10k steps). The findings mirror the non-gated setting: (i) full Muon consistently
outperforms Adam and SGD+Momentum on rare classes and reaches high accuracy earlier;
(ii) the VO\&FFN-hybrid (Muon applied to VO and FFN while Adam is used for QK)
closely tracks full Muon, indicating that VO\&FFN are the primary levers for tail
generalization; (iii) the QK-only hybrid offers limited gains. Overall, the
gated FFN does not change the qualitative conclusions about where Muon helps
most. See Fig.~\ref{fig:heavy_tail_qa_gated} and Tables~\ref{tab:gated_optimizer_2000}--\ref{tab:gated_optimizer_10000}
for details.

\begin{figure}[h]
    \centering
    \subfigure[Sample/class]{ \includegraphics[width=0.32\textwidth]{figures/qa/powerlaw_samples_class.pdf}}
    \subfigure[Muon]{ \includegraphics[width=0.32\textwidth]{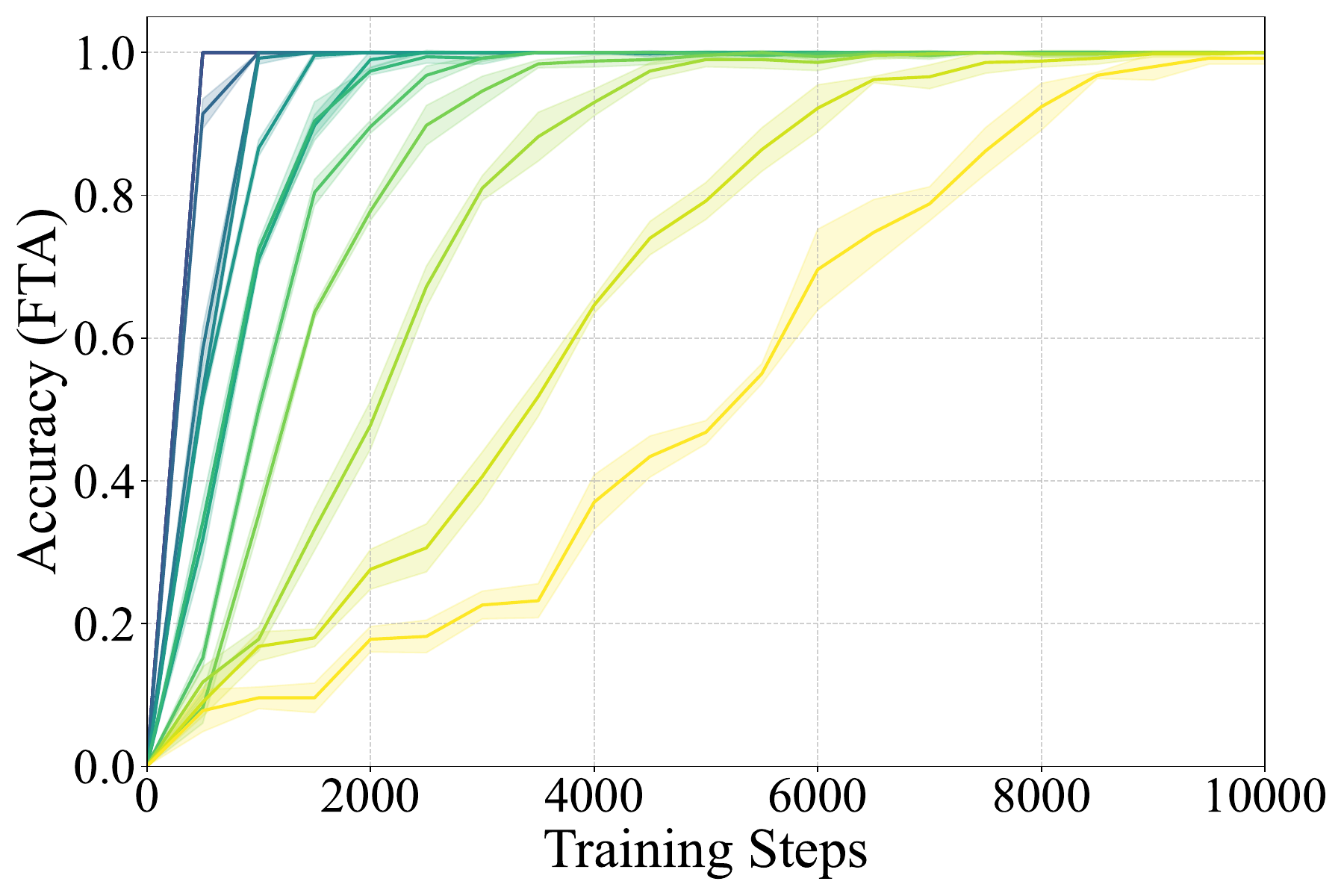}}
    \subfigure[Adam]{ \includegraphics[width=0.32\textwidth]{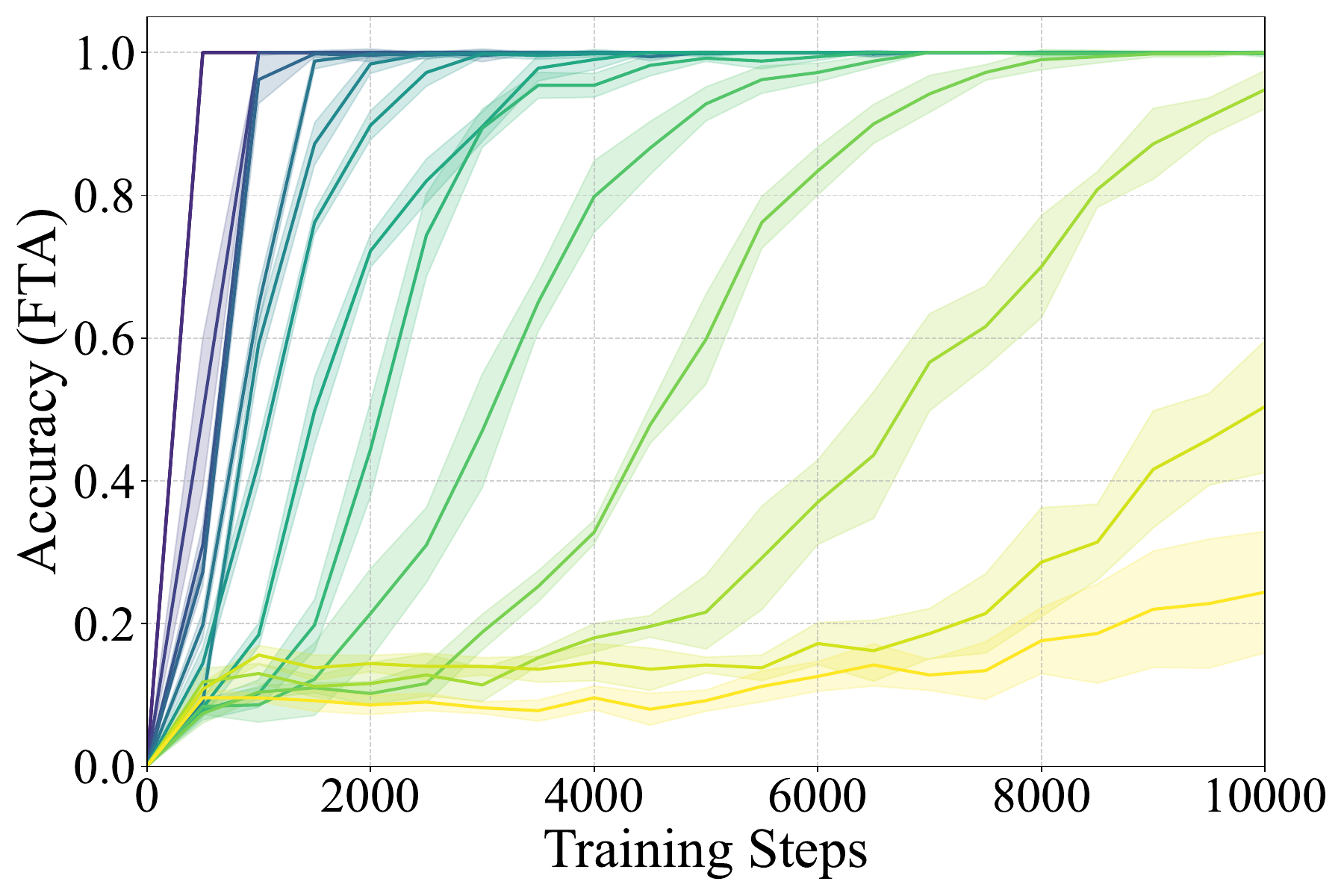}}
    \subfigure[SGD+Momentum]{ \includegraphics[width=0.32\textwidth]{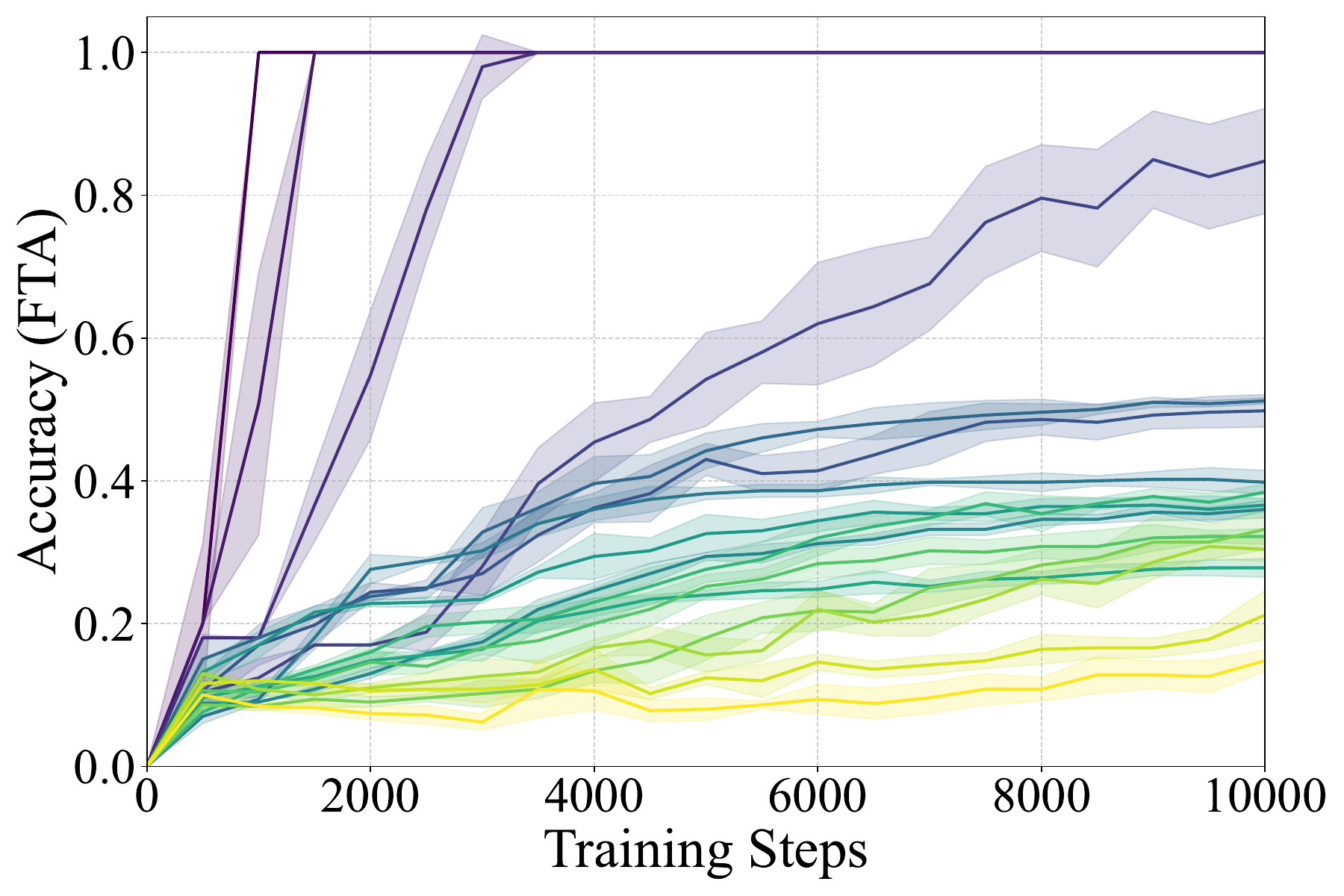}}
    \subfigure[Muon(VO, FFN)\&Adam(QK)]{ \includegraphics[width=0.32\textwidth]{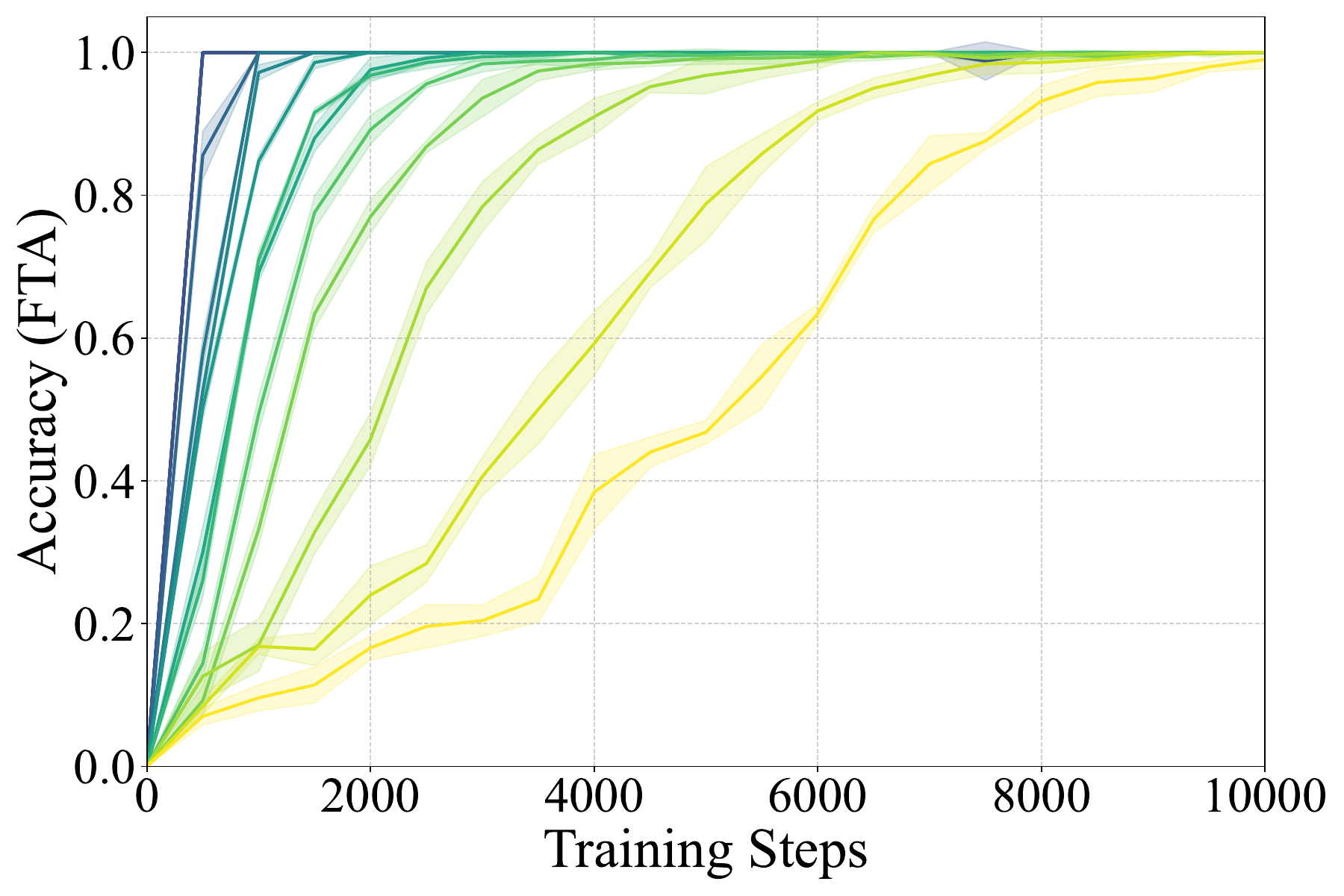}}
    \subfigure[Muon(QK)\&Adam(VO,FFN)]{ \includegraphics[width=0.32\textwidth]{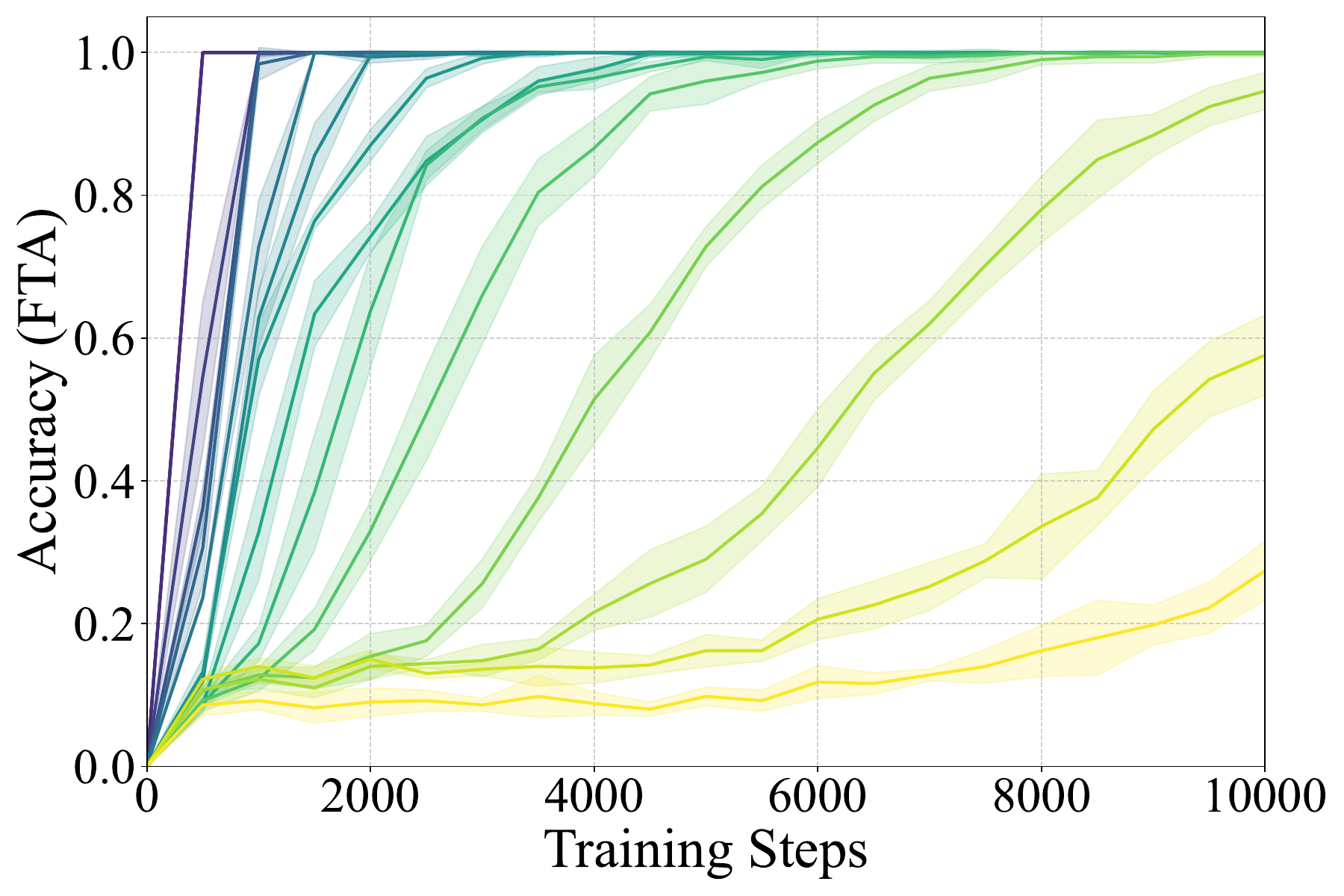}}
    \caption{Performance comparison of different optimizers on a heavy-tailed
    knowledge task with gated feed-forward networks. (a) The distribution of samples
    per class follows a power law. (b-d) Performance of Muon, Adam, and SGD+Momentum
    optimizers. (e) Muon (VO, FFN)/Adam (QK). (f) Muon (QK)/Adam (VO, FFN).}
    \label{fig:heavy_tail_qa_gated}
\end{figure}

\begin{table}[H]
    \centering
    \caption{Heavy-tail knowledge task with the Gated FFN: Group performance by optimizer
    (2{,}000 steps)}
    \label{tab:gated_optimizer_2000}
    \small
    \begin{tabular}{c ccccc}
        \toprule \multirow{2}{*}{\textbf{Group}} & \multicolumn{5}{c}{\textbf{Optimizer}} \\
        \cmidrule(lr){2-6}                       & \textbf{Muon}                         & \textbf{Adam} & \textbf{SGD+Mom.} & \textbf{Muon(VO, FFN)} & \textbf{Muon(QK)} \\
        \midrule \textbf{11}                     & \textbf{0.896 ± 0.009}                & 0.214 ± 0.063 & 0.146 ± 0.018     & 0.892 ± 0.021          & 0.330 ± 0.042     \\
        \textbf{13}                              & \textbf{0.478 ± 0.034}                & 0.116 ± 0.030 & 0.110 ± 0.007     & 0.458 ± 0.037          & 0.140 ± 0.019     \\
        \textbf{15}                              & \textbf{0.178 ± 0.018}                & 0.086 ± 0.013 & 0.074 ± 0.009     & 0.166 ± 0.017          & 0.090 ± 0.020     \\
        \bottomrule
    \end{tabular}
\end{table}

\begin{table}[H]
    \centering
    \caption{Heavy-tail knowledge task with the Gated FFN: Group performance by optimizer
    (5{,}000 steps)}
    \label{tab:gated_optimizer_5000}
    \small
    \begin{tabular}{c ccccc}
        \toprule \multirow{2}{*}{\textbf{Group}} & \multicolumn{5}{c}{\textbf{Optimizer}} \\
        \cmidrule(lr){2-6}                       & \textbf{Muon}                         & \textbf{Adam} & \textbf{SGD+Mom.} & \textbf{Muon(VO, FFN)} & \textbf{Muon(QK)} \\
        \midrule \textbf{11}                     & \textbf{0.998 ± 0.002}                & 0.928 ± 0.024 & 0.252 ± 0.016     & 0.990 ± 0.010          & 0.960 ± 0.032     \\
        \textbf{13}                              & \textbf{0.990 ± 0.010}                & 0.216 ± 0.052 & 0.156 ± 0.024     & 0.968 ± 0.028          & 0.290 ± 0.046     \\
        \textbf{15}                              & \textbf{0.510 ± 0.039}                & 0.092 ± 0.015 & 0.080 ± 0.016     & 0.468 ± 0.016          & 0.098 ± 0.013     \\
        \bottomrule
    \end{tabular}
\end{table}

\begin{table}[H]
    \centering
    \caption{Heavy-tail knowledge task with the Gated FFN: Group performance by optimizer
    (10{,}000 steps)}
    \label{tab:gated_optimizer_10000}
    \small
    \begin{tabular}{c ccccc}
        \toprule \multirow{2}{*}{\textbf{Group}} & \multicolumn{5}{c}{\textbf{Optimizer}} \\
        \cmidrule(lr){2-6}                       & \textbf{Muon}                         & \textbf{Adam} & \textbf{SGD+Mom.} & \textbf{Muon(VO, FFN)} & \textbf{Muon(QK)} \\
        \midrule \textbf{11}                     & 1.000 ± 0.000                         & 0.998 ± 0.002 & 0.322 ± 0.011     & 1.000 ± 0.000          & 1.000 ± 0.000     \\
        \textbf{13}                              & \textbf{1.000 ± 0.000}                & 0.948 ± 0.027 & 0.304 ± 0.017     & \textbf{1.000 ± 0.000} & 0.946 ± 0.026     \\
        \textbf{15}                              & \textbf{0.994 ± 0.006}                & 0.244 ± 0.085 & 0.148 ± 0.015     & 0.990 ± 0.010          & 0.274 ± 0.042     \\
        \bottomrule
    \end{tabular}
\end{table}

\subsection{Additional Results about Angles Between Associative Memories
Embeddings}
\label{app:angles}
\begin{figure}[H]
    \centering
    \includegraphics[width=0.5\linewidth]{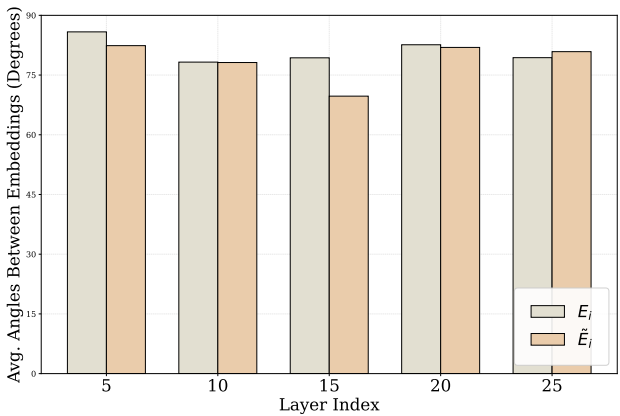}
    \caption{Average angles between $e_{s}$ or $e_{o}$ for items in ZsRE at
    layers $5$, $10$, $15$, $20$, $25$ of Llama3-8b-instruct.}
    \label{fig:kv_angles_zsre}
\end{figure}

\section{Proof of Theorem~\ref{thm:comp}}\label{app:comp}
We separately derive the results for GD, Muon, and Adam in the following proof. For all of them, we define
\begin{align}
    \eta_{\opt}^{\epsilon} = \inf \Big\{\eta\geq 0 \,\Big|\,1-\max_{k\in[K]}\big[f_W(E_k)\big]_{k}\leq\epsilon, \text{ where }W=W_0-\eta\cdot G_{\opt}(W_0) \Big\}.\label{eq:minimal_stepsize}
\end{align}
 The quantity $\eta_{\opt}^{\epsilon}$ represents the minimal step size for at least one triplet to be learned with error probability less than $\epsilon$. From the definition, we have that
 \begin{align*}
     \maxd_{\opt}^{\epsilon}\leq \min_{k\in[K]}\big[f_{-\eta_{\opt}^{\epsilon} G_{\opt}}(E_k)\big]_{k}.
 \end{align*}

\textbf{Step 1: Calculations of GD.}

We define the score of $k^{\prime}$-th object for the $k$-th subject-relation pair with the parameter $W$ as
\begin{align*}
    s(k^{\prime},k,W) = \frac{\exp(\tilE_{k^{\prime}}^{\top} W E_{k} )}{\sum_{k^{\prime\prime}=1}^{K}\exp(\tilE_{k^{\prime\prime}}^{\top} W E_{k} )}.
\end{align*}
At $W_0=0_{d_o,d_s}$, we have that
\begin{align*}
    s(k^{\prime},k,W_0)=\frac{1}{K} \text{ for all }k,k^{\prime}\in[K].
\end{align*}
Proposition~\ref{prop:grad} shows that the gradient is
\begin{align}
    -\nabla_{W} \mathcal{L}(W_0)= \frac{\alpha}{L}\tilE_{1:L}E_{1:L}^{\top}+\frac{1-\alpha}{K-L}\tilE_{L+1:K}E_{L+1:K}^{\top} - \frac{\alpha}{LK}\tilE J_{K,L}E_{1:L}^{\top}\nonumber\\-\frac{1-\alpha}{(K-L)K}\tilE J_{K,K-L}E_{L+1:K}^{\top}.\label{eq:gd}
\end{align}
From the gradient, it is easy to see that the first $L$ triplets $(s,r,o)$ share the same learning behavior, and the last $K-L$ triplets also share the same behavior. Thus, we calculate the results for $k=1$ and $k=L+1$. The calculation for $k=1$ depends on evaluating its score function, which takes the form $\eta \cdot \tilE_{k''}^\top [-\nabla_W\calL(W_0)] E_1$, for $k'' \in \{1, \dots, K\}$. Based on the gradient in \eqref{eq:gd} and the orthonormality of the embeddings, it evaluates to $\alpha/L$ for the case $k''=1$, and to $0$ for all $k'' \neq 1$.

This leads to a numerator in the softmax score of $\exp(\eta \cdot \alpha/L)$, while the denominator sum consists of one term $\exp(\eta \cdot \alpha/L)$ and $K-1$ terms of $\exp(0)=1$. A similar calculation for $k=L+1$ shows that the argument of the exponent for the correct object, $\eta \cdot \tilE_{L+1}^\top [-\nabla_W\calL(W_0)] E_{L+1}$, evaluates to $\eta \cdot (1-\alpha)/(K-L)$. By defining $\gamma_1 = \alpha/(\beta K)$ and $\gamma_2 = (1-\alpha)/((1-\beta)K)$ based on the problem setup ($L=\beta K$), we have that
\begin{align*}
    \big[f_{-\eta \nabla_W\calL}(E_1)\big]_{1}= \frac{\exp(\eta \gamma_1)}{\exp(\eta \gamma_1)+K-1},\quad \big[f_{-\eta \nabla_W\calL}(E_{L+1})\big]_{L+1}= \frac{\exp(\eta \gamma_2)}{\exp(\eta \gamma_2)+K-1},
\end{align*}
where $\gamma_1$ and $\gamma_2$ are defined as
\begin{align*}
    \gamma_1 = \frac{\alpha}{\beta K},\quad \gamma_2 = \frac{1-\alpha}{(1-\beta) K}.
\end{align*}
Then we derive that
\begin{align}
    \eta_{\gd}^{\epsilon} = \frac{1}{\max\{\gamma_1,\gamma_2\}} \log\big[(\epsilon^{-1} -1)(K-1)\big].\label{eq:gd_time}
\end{align}
To calculate the desired quantity, we define the quantity $r(\alpha,\beta)$ to evaluate the balance of data as
\begin{align*}
    r(\alpha,\beta) = \min\{\gamma_1/\gamma_2,\gamma_2/\gamma_1\} = \min \bigg\{\frac{\alpha(1-\beta)}{\beta(1-\alpha)},\frac{\beta(1-\alpha)}{\alpha(1-\beta)}\bigg\}.
\end{align*}
Some basic calculations show that
\begin{align}
    1-\min_{k\in[K]}\big[f_{-\eta_{\gd}^{\epsilon} G_{\gd}}(E_k)\big]_{k} = \frac{\epsilon}{\epsilon+(1-\epsilon)^{r(\alpha,\beta)}\epsilon^{1-r(\alpha,\beta)}(K-1)^{r(\alpha,\beta)-1}}.\label{eq:gd_result}
\end{align}
When $r<1$, with the fact that $\frac{1}{x+1}=1-x+O(x^2)$ as $x\rightarrow 0$, we have that
\begin{align*}
    \min_{k\in[K]}\big[f_{-\eta_{\gd}^{\epsilon} G_{\gd}}(E_k)\big]_{k} = O(\epsilon^{-r(\alpha,\beta)} K^{r(\alpha,\beta)-1}).
\end{align*}
Thus, the proof for the convergence of GD has been established.

\textbf{Step 2: Calculations of Muon.}

For Muon, we first calculate the \ac{svd} of the gradient. In fact, we can write the gradient in Eqn.~\eqref{eq:gd} as
\begin{align*}
    -\nabla_{W} \mathcal{L}(W_0)&= \tilE\bigg\{\diag\bigg(\frac{\alpha}{L}\bbI_{L},\frac{1-\alpha}{K-L}\bbI_{K-L}\bigg)-\frac{1}{K}\bbI_{K}\cdot\bigg[\frac{\alpha}{L}\bbI_{L}^{\top},\frac{1-\alpha}{K-L}\bbI_{K-L}^{\top}\bigg]^{\top}\bigg\} E^{\top}\\
    & = \tilE X E^{\top}.
\end{align*}
The \ac{svd} calculation of $X=U\Sigma V^{\top}$ can be directly derived from Proposition~\ref{prop:svd_simp}. Thus, the \ac{svd} of the gradient is $-\nabla_{W} \mathcal{L}(W_0)= (\tilE \cdot U)\Sigma (E\cdot V)^{\top}$. The update quantity $G_{\muon}(W_0)= U_0 \nor(\Sigma_0)V_0^{\top}$ of Muon is 
\begin{align}
    &- G_{\muon}(W_0) \nonumber\\
    &\quad = \tilE_{1:L}R_{L,L-1}R_{L,L-1}^{\top}E_{1:L}^{\top}+\tilE_{L+1:K}R_{K-L,K-L-1}R_{K-L,K-L-1}^{\top}E_{L+1:K}^{\top}\nonumber\\
    &\quad\qquad  +\frac{1}{\sqrt{K\big[\alpha^2(K-L)^3+(1-\alpha)^2L^3\big]}}\big((K-L)\tilE_{1:L}\bbI_{L}-L\tilE_{L+1:K}\bbI_{K-L}\big)\nonumber\\
    &\qquad\qquad\qquad\qquad\qquad\qquad\qquad\qquad\qquad\cdot\bigg(\frac{(K-L)\alpha}{L}\bbI_{L}^{\top}E_{1:L}^{\top}-\frac{L(1-\alpha)}{K-L}\bbI_{K-L}^{\top}E_{L+1:K}^{\top}\bigg)\nonumber\\
    &\quad = \tilE_{1:L}E_{1:L}^{\top}+\tilE_{L+1:K}E_{L+1:K}^{\top}\nonumber\\
    &\qquad+\frac{1}{K}\bigg\{\frac{1}{\beta}\bigg(\frac{(1-\beta)^2\alpha}{\lambda}-1\bigg)\tilE_{1:L}J_{L,L}E_{1:L}^{\top}\nonumber\\
    &\qquad\qquad+\frac{1}{1-\beta}\bigg(\frac{\beta^2(1-\alpha)}{\lambda}-1\bigg)\tilE_{L+1:K}J_{K-L,K-L}E_{L+1:K}^{\top}\nonumber\\
    &\qquad\qquad\qquad\qquad -\beta(1-\alpha)\tilE_{1:L}J_{L,K-L}E_{L+1:K}^{\top}-\alpha(1-\beta)\tilE_{L+1:K}J_{K-L,L}E_{1:L}^{\top}\bigg\},\label{eq:muon}
\end{align}
where   $\lambda = \sqrt{\alpha^2(1-\beta)^3+(1-\alpha)^2\beta^3}$, the matrices $R_{L,L-1}$ and $R_{K-L,K-L-1}$ are defined in Proposition~\ref{prop:svd_simp}, and the second equality results from the following facts
\begin{align*}
    R_{L,L-1}R_{L,L-1}^{\top} &= I_{L,L}-\frac{1}{L}\bbI_{L}\bbI_{L}^{\top}, \\ R_{K-L,K-L-1}R_{K-L,K-L-1}^{\top} &= I_{K-L,K-L}-\frac{1}{K-L}\bbI_{K-L}\bbI_{K-L}^{\top}.
\end{align*}
Although the gradient is composed of heterogeneous components from $\tilE_{1:L}, E_{1:L}$ and $\tilE_{L+1:K}, E_{L+1:K}$, we can bound the convergence rate of $[f_{-\eta G_{\muon}}(E_k)]_k$ for any $k$: an upper (resp. lower) bound is obtained by increasing (resp. decreasing) the coefficient of $\tilE_k E_k^{\top}$ while decreasing (resp. increasing) that of $\tilE_{k^{\prime}} E_{k}^{\top}$ for $k^{\prime} \neq k$. In fact, Eqn.~\eqref{eq:muon} implies that there exists a constant $C > 0$ such that the dynamics of the fastest- and slowest-learning triplets are bounded by those along the following two update directions.
\begin{align*}
    - G_{\muon}^{+}(W_0) & = \bigg(1+\frac{2C}{K}\bigg) (\tilE_{1:L}E_{1:L}^{\top}+\tilE_{L+1:K}E_{L+1:K}^{\top})-\frac{C}{K}\cdot \tilE J_{K,K}E^{\top}\nonumber\\
    - G_{\muon}^{-}(W_0) & = \bigg(1-\frac{2C}{K}\bigg) (\tilE_{1:L}E_{1:L}^{\top}+\tilE_{L+1:K}E_{L+1:K}^{\top})+\frac{C}{K}\cdot\tilE J_{K,K}E^{\top}.
\end{align*}
Concretely, the rate of score increase for the correct object of the $k$-th triplet, which is given by the term $\tilE_{k}^\top [-G_{\muon}(W_0)] E_k$ in the exponent of the softmax score, is bounded. The rate for the fastest-learning triplet is lower-bounded by the corresponding rate derived from $-G_{\muon}^{+}(W_0)$, while the rate for the slowest-learning triplet is upper-bounded by that from $-G_{\muon}^{-}(W_0)$. Thus, we only need to focus on $G_{\muon}^{+}(W_0)$ and $G_{\muon}^{-}(W_0)$ to calculate the desired quantity. Following the similar procedures for GD to derive Eqn.~\eqref{eq:gd_result}, we have that for any $\eta$ such that $\max_{k\in[K]}\big[f_{W_\eta}(E_k)\big]_{k}\geq 1- \epsilon$ (where $W_\eta=W_0-\eta\cdot G_{\muon}(W_0)$), the following holds
\begin{align}
    1-\min_{k\in[K]}\big[f_{W_\eta}(E_k)\big]_{k} \leq  \frac{\epsilon}{\epsilon+(1-\epsilon)^{r(K)}\epsilon^{1-r(K)}(K-1)^{r(K)-1}},\label{ieq:muon_result}
\end{align}
where $r(K)= (K-2C)/(K+2C)$. We further have that
\begin{align}
    &(1-\epsilon)^{r(K)}\epsilon^{1-r(K)}(K-1)^{r(K)-1}\nonumber\\
    &\quad = (1-\epsilon)\exp\bigg(\frac{4C}{K+2C}\bigg(\log\frac{\epsilon}{1-\epsilon}-\log (K-1)\bigg)\bigg)\nonumber\\
    &\quad = (1-\epsilon)\bigg[1+\frac{4C}{K+2C}\bigg(\log\frac{\epsilon}{1-\epsilon}-\log (K-1)\bigg)+O\bigg(\frac{(\log K)^2}{K^2}\bigg)\bigg]\nonumber\\
    &\quad = (1-\epsilon)+O\bigg(\frac{\log K}{K}\bigg),\label{eq:largeK}
\end{align}
where the first equality results from the basic calculations, the second equality results from that $\exp(x)=1+x+O(x^2)$ when $x\rightarrow 0$. Combining Eqn.~\eqref{ieq:muon_result} and \eqref{eq:largeK}, we have that
\begin{align*}
    \maxd_{\muon}^{\epsilon} \geq 1-  \epsilon\bigg(1+O\bigg(\frac{\log K}{K}\bigg)\bigg).
\end{align*}
Thus, we prove the desired results for Muon.

\textbf{Step 3: Calculations of Adam.}

The proof of the results for Adam is conducted under two cases. We will construct different embeddings $\tilE$ and $E$ in these two cases. In the first case, we set $\tilE=E=I_{K,K}$. With such embedding and sufficiently large $K$, we have that
\begin{align*}
    -G_{\adam}(W_0) = -\sign(\nabla_{W}\calL(W_0)) = 2I_{K,K} - J_{K,K}.
\end{align*}
Under such a setting, all triplets share the same dynamic. Thus, we have that
\begin{align*}
     \maxd_{\adam}^{\epsilon} = 1-\epsilon.
\end{align*}

In the second case, we set $\tilE$ and $E$ as block-wise diagonal matrices. Here we set the block size as $3$, i.e., requiring that $K\, \text{mod} \, 3=0$. Such a requirement can be satisfied  infinitely often  when $K\rightarrow\infty$. Then the sufficient and necessary condition of Assumption~\ref{assump:ortho} is that each $3\times 3$ block contains an orthonormal basis. To achieve this, we define the following matrix.
\begin{align*}
    R(a,b,c) = \begin{bmatrix}
          \cos a\cos b\cos c-\sin a\sin c      & - \cos a\cos b\sin c-\sin a \cos c &\cos a\sin b \\
          \sin a\cos b\cos c+\cos a\sin c      & - \sin a\cos b\sin c+ \cos a \cos c &\sin a\sin b \\
          -\sin b\cos c& \sin b\sin c& \cos b
        \end{bmatrix}.
\end{align*}
It is obvious that $R(a,b,c)^{\top}R(a,b,c)=I_{3,3}$. Then we set $\tilE$ and $E$ as
\begin{align*}
    \tilE= I_{ K/3, K/3}\otimes R(3.638, 2.949, 5.218),\quad E= I_{K/3, K/3}\otimes R(1.715, 0.876, 3.098),
\end{align*}
where $\otimes$ is the Kronecker product. With these specifications and sufficiently large $K$, the Adam update matrix is
\begin{align*}
    -G_{\adam}(W_0) = I_{K/3,K/3}\otimes A + J_{K/3,K/3}\otimes B,
\end{align*}
where $A$ and $B$ are specified as
\begin{align*}
    A = \begin{bmatrix}
          2      & 0 & 0\\
          2      & 0 & 2 \\
          -2 & -2& -2
        \end{bmatrix},\quad
    B = \begin{bmatrix}
          -1     & -1 & -1 \\
          -1     & -1 & -1 \\
          1     & 1 & 1 \\
        \end{bmatrix}.
\end{align*}
These show that the diagonal block of $-G_{\adam}(W_0)$ is
\begin{align*}
    A+B = \begin{bmatrix}
          1     & -1 & -1 \\
          1     & -1 & 1 \\
          -1     & -1 & -1 \\
        \end{bmatrix}.
\end{align*}
Since the $k$-th and $(k+3)$-th triplets share the same learning dynamics for all $k\in[K-3]$, we focus on the learning dynamics of $k=1,2,3$. We have that
\begin{align*}
    &R(3.638, 2.949, 5.218)^{\top}\cdot(A+B)\cdot R(1.715, 0.876, 3.098) \\
    &\qquad\qquad\qquad\qquad= \begin{bmatrix}
        1.46552253   & 1.0132908 & -0.11179563 \\
        -0.0732561 &  1.00709257& -1.26935805 \\
        0.0544114  & 0.89611102 & 1.54147329 
        \end{bmatrix},\\
    &R(3.638, 2.949, 5.218)^{\top}\cdot B\cdot R(1.715, 0.876, 3.098) \\
    &\qquad\qquad\qquad\qquad= \begin{bmatrix}
          -0.19288146 & -1.24460331 & -1.4058011 \\
 -0.20112175 & -1.2977753 & -1.46585978\\
 -0.12780259 & -0.82466989 & -0.93147899
        \end{bmatrix}.
\end{align*}
From the last columns of these two matrices, following the similar procedures for GD to derive Eqn.~\eqref{eq:gd_time}, we have that
\begin{align*}
    \eta_{\adam}^{\epsilon}\leq \frac{1}{1.541+0.930}\log\big[(\epsilon^{-1} -1)(K-1)\big]= \frac{1}{2.471}\log\big[(\epsilon^{-1} -1)(K-1)\big].
\end{align*}
Then, from the first columns of these matrices, we have that
\begin{align*}
    1-\min_{k\in[K]}\big[f_{-\eta_{\adam}^{\epsilon} G_{\adam}}(E_k)\big]_{k} \geq  \frac{\epsilon}{\epsilon+(1-\epsilon)^{r}\epsilon^{1-r}(K-1)^{r-1}}, 
\end{align*}
where $r=\frac{1.466+0.202}{2.471}=\frac{1.668}{2.471}$.

Thus, we have that
\begin{align*}
     \maxd_{\adam}^{\epsilon}\leq O(\epsilon^{-r}K^{r-1})\leq O(\epsilon^{-0.7}K^{-0.3}).
\end{align*}
Then we calculate the singular values of $-G_{\adam}(W_0)$. We define the eigen vectors of $I_{K,K}$ as $\tilU$, i.e., $\tilU^{\top}I_{K/3,K/3}\tilU=\diag(K/3,0\cdots,0)$. Using the orthogonal invariance of singular values, $-G_{\adam}(W_0)$ shares the singular values with the following matrix
\begin{align*}
    &(\tilU^{\top}\otimes I_{3,3})\big(-G_{\adam}(W_0)\big)(\tilU\otimes I_{3,3}) \\
    &= I_{K/3,K/3}\otimes A+(\tilU^{\top}I_{K/3,K/3}\tilU)\otimes B \\
    &= \diag(A-KB/3,A,\cdots,A).
\end{align*}
Thus, the singular values of $A$ are also the singular values of $G_{\adam}(W_0)$. We have that
\begin{align*}
    \frac{\sigma_{\min}\big(G_{\adam}(W_0)\big)}{\sigma_{\max}\big(G_{\adam}(W_0)\big)}\leq \frac{\sigma_{\min}(A)}{\sigma_{\max}(A)}\leq 25\%.
\end{align*}
Thus, we conclude the proof of Theorem~\ref{thm:comp}.

\section{Proof of Theorem~\ref{thm:multistep}}\label{app:multistep}

The proof of Theorem~\ref{thm:multistep} takes two steps. In the first step, we derive the share form of $W_t$ along the whole optimization trajectory. In the second step, we build the desired results on the basis of step 1. Throughout the proof, we will write $W_{t}^{\muon}$ as $W_t$ for the ease of presentation.

\textbf{Step 1: Derive the shared forms of $W_t$ and $G_\muon$.}

We will derive the forms of $W_t$ along the optimization trajectory via the induction method. We first state our hypothesis and then prove it.
\begin{hypothesis}
    For any optimization step index $t\in[T]$, the parameters $W_t$ can be expressed as
        \begin{align*}
            W_t = \tilE X_t E,\quad X_t = \Lambda_t +C_t,
        \end{align*}
        where $\Lambda_t$ and $C_t$ are
        \begin{align*}
            \Lambda_t = \diag(a_t \cdot \bbI_{L},b_t\cdot \bbI_{K-L}),\quad C_t = \begin{bmatrix}
          c_{t}^{11}\cdot J_{L,L}      & c_{t}^{12}\cdot J_{L,K-L}\\
          c_{t}^{21}\cdot J_{K-L,L}      & c_{t}^{22}\cdot J_{K-L,K-L}
        \end{bmatrix},
        \end{align*}
        where $a_t,b_t,c_{t}^{11},c_{t}^{12},c_{t}^{21},c_{t}^{22}\in\bbR$ are real numbers such that (1) $a_t = b_t\geq 0$, and (2) $c_{t}^{ij}=O(a_t/K)$ for $i,j\in[2]$.

\end{hypothesis}
When $t=0$, it is obvious to verify that $W_0=0_{d_o, d_s}$ satisfying this hypothesis with $a_t=b_t=c_{t}^{11}=c_{t}^{12}=c_{t}^{21}=c_{t}^{22}=0$. Then we assume that this hypothesis holds for $\{1,\cdots,t\}$, and we will prove that it holds for $t+1$. Since $W_{t+1}=W_{t}-\eta_{t+1}U_t \nor(\Sigma_t)V_t^{\top}$, we need to show that $-\eta_{t+1}U_t \nor(\Sigma_t)V_t^{\top}$ satisfies the hypothesis. We define the score of $k^{\prime}$-th object for the $k$-th subject-relation pair with the parameter $W$ as
\begin{align*}
    s(k^{\prime},k,W) = \frac{\exp(\tilE_{k^{\prime}}^{\top} W E_{k} )}{\sum_{k^{\prime\prime}=1}^{K}\exp(\tilE_{k^{\prime\prime}}^{\top} W E_{k} )}.
\end{align*}
According to the symmetry of $W_t$, we have that
\begin{itemize}
    \item $s(k,k,W_t)=s(1,1,W_t)\text{ for all }k\leq L$.
    \item $s(k,k,W_t)=s(K,K,W_t)\text{ for all }k>L$.
    \item $s(k^{\prime},k,W_t)=s(2,1,W_t)\text{ for all }k,k^{\prime}\leq L, k^{\prime}\neq k$.
    \item $s(k^{\prime},k,W_t)=s(K,1,W_t)\text{ for all }k\leq L, k^{\prime}>L$.
    \item $s(k^{\prime},k,W_t)=s(K-1,K,W_t)\text{ for all }k,k^{\prime}>L, k^{\prime}\neq k$.
    \item $s(k^{\prime},k,W_t)=s(1,K,W_t)\text{ for all }k> L, k^{\prime}\leq L$.
\end{itemize}
Thus, Proposition~\ref{prop:grad} shows that the gradient of $W_t$ is
\begin{align*}
    -\nabla_{W} \mathcal{L}(W_t) = \tilE(\Gamma_t+B_t) E^{\top},
\end{align*}
where $\Gamma_t$ and $B_t$ are defined as
\begin{align*}
    \Gamma_t &= \diag\bigg(\frac{\alpha}{L}\big(1+s(2,1,W_t)-s(1,1,W_t)\big)\bbI_{L},\\
    &\qquad\qquad\qquad\qquad\frac{1-\alpha}{K-L}\big(1+s(K-1,K,W_t)-s(K,K,W_t)\big)\bbI_{K-L}\bigg),\\
    B_t &= \begin{bmatrix}
          -\frac{\alpha}{L}s(2,1,W_t)\cdot J_{L,L}      & -\frac{1-\alpha}{K-L}s(1,K,W_t)\cdot J_{L,K-L}\\
          -\frac{\alpha}{L}s(K,1,W_t)\cdot J_{K-L,L}      & -\frac{1-\alpha}{K-L}s(K-1,K,W_t)\cdot J_{K-L,K-L}
        \end{bmatrix}.
\end{align*}
Thus, Proposition~\ref{prop:svd} shows that
\begin{align*}
    -G_\muon(W_t) = \tilE\bigg(\diag(\bbI_{K})+\begin{bmatrix}
         C_{11}\cdot J_{L,L}      & C_{12}\cdot J_{L,K-L}\\
         C_{21}\cdot J_{K-L,L}      & C_{22}\cdot J_{K-L,K-L}
        \end{bmatrix}\bigg) E^{\top},
\end{align*}
where
\begin{align*}
    C_{11} &= \frac{\tilU_{1,1}\tilV_{1,1}+\tilU_{1,2}\tilV_{1,2}-1}{\beta K}, & C_{12} &= \frac{\tilU_{1,1}\tilV_{2,1}+\tilU_{1,2}\tilV_{2,2}}{\sqrt{\beta(1-\beta)} K}, \\
    C_{21} &= \frac{\tilU_{2,1}\tilV_{1,1}+\tilU_{2,2}\tilV_{1,2}}{\sqrt{\beta(1-\beta)} K}, & C_{22} &= \frac{\tilU_{2,1}\tilV_{2,1}+\tilU_{2,2}\tilV_{2,2}-1}{(1-\beta) K}.
\end{align*}
where $\tilU,\tilV\in\bbR^{2\times 2}$ are the orthonormal matrices defined in Proposition~\ref{prop:svd}. Since $W_{t+1}=W_{t}-\eta_{t+1}G_\muon(W_t)$, it is obvious that $a_{t+1}=b_{t+1}$. The orthonormality of $\tilU$ and $\tilV$ implies that $|\tilU_{i,j}|,|\tilV_{i,j}|\leq 1$. Thus, we have 
\begin{align*}
    \frac{\tilU_{1,1}\tilV_{1,1}+\tilU_{1,2}\tilV_{1,2}-1}{\beta K} = O\bigg(\frac{1}{K}\bigg).
\end{align*}
This further implies that $c_{t+1}^{1,1}=O(a_{t+1}/K)$. The proofs for other $c_{t+1}^{ij}$ are similar. This completes the proof.

\textbf{Step 2: Establish the convergence results.}

We note that this analysis is very similar to the proof of Muon in Theorem~\ref{thm:comp}. Concretely, for $W_t$, the coefficients $a_t,b_t,c_{t}^{11},c_{t}^{12},c_{t}^{21},c_{t}^{22}$ from multiple-step optimization share the same property with those of the one-step results. It means that there exists a constant $C > 0$ such that the dynamics of the fastest- and slowest-learning triplets are bounded by those along the following two update directions in only one step.
\begin{align*}
    - G_{\muon}^{+} & = \bigg(1+\frac{2C}{K}\bigg) (\tilE_{1:L}E_{1:L}^{\top}+\tilE_{L+1:K}E_{L+1:K}^{\top})-\frac{C}{K}\cdot \tilE J_{K,K}E^{\top}\nonumber\\
    - G_{\muon}^{-} & = \bigg(1-\frac{2C}{K}\bigg) (\tilE_{1:L}E_{1:L}^{\top}+\tilE_{L+1:K}E_{L+1:K}^{\top})+\frac{C}{K}\cdot\tilE J_{K,K}E^{\top}.
\end{align*}
The remaining analysis is then exactly the same as that of Theorem~\ref{thm:comp}. Thus, we conclude the proof of Theorem~\ref{thm:multistep}.

    \section{Supporting Propositions}\label{app:supp_prop}

\begin{proposition}\label{prop:grad}
    We define the score of $k^{\prime}$-th object for the $k$-th subject-relation pair with the parameter $W$ as
    \begin{align*}
        s(k^{\prime},k,W) = \frac{\exp(\tilE_{k^{\prime}}^{\top} W E_{k} )}{\sum_{k^{\prime\prime}=1}^{K}\exp(\tilE_{k^{\prime\prime}}^{\top} W E_{k} )}.
    \end{align*}
    When the parameter $W$ is trained with loss
    \begin{align*}
        \mathcal{L}(W) = -\sum_{k=1}^{K}p_{k} \cdot \log \big[f_{W}(E_{k})\big]_{k},
    \end{align*}
    the gradient of $W$ is 
    \begin{align*}
        \nabla_{W} \mathcal{L}(W)= -\sum_{k=1}^{K}p_k \Big\{\big[1-s(k,k,W)\big]\tilE_{k}E_{k}^{\top} - \sum_{k^{\prime}\neq k} s(k^{\prime},k,W)\tilE_{k^{\prime}}E_{k}^{\top}\Big\}.
    \end{align*}
\end{proposition}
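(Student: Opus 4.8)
This is the standard gradient of a softmax cross-entropy loss, so the plan is mostly careful bookkeeping of indices. First I would rewrite the objective in terms of the score: since $[f_W(E_k)]_k = s(k,k,W)$ by definition of $\sm$, the loss becomes $\mathcal{L}(W) = -\sum_{k=1}^K p_k \log s(k,k,W)$, and I would differentiate it term by term in $k$.

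Next, fixing a query index $k$, I would introduce the logit vector $z^{(k)} \in \mathbb{R}^K$ with entries $z^{(k)}_{k'} = \tilE_{k'}^\top W E_k$. The key observation is that each logit is \emph{linear} in $W$: writing $\tilE_{k'}^\top W E_k = \langle \tilE_{k'} E_k^\top, W\rangle_F$, we get $\nabla_W z^{(k)}_{k'} = \tilE_{k'} E_k^\top$. I would then apply the classical softmax--cross-entropy identity $\partial\big({-}\log \mathrm{softmax}(z)_k\big)/\partial z_{k'} = \mathrm{softmax}(z)_{k'} - \delta_{k',k}$, which in the present notation is $s(k',k,W) - \delta_{k',k}$; composing with the linear map $W \mapsto z^{(k)}$ via the chain rule gives $\nabla_W\big[{-}\log s(k,k,W)\big] = \sum_{k'=1}^K \big(s(k',k,W) - \delta_{k',k}\big)\,\tilE_{k'} E_k^\top$.

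Finally I would take the $p_k$-weighted sum over $k$ and split the inner sum over $k'$ into the diagonal term $k'=k$, contributing $\big(s(k,k,W)-1\big)\tilE_k E_k^\top$, and the off-diagonal terms, contributing $\sum_{k'\neq k} s(k',k,W)\,\tilE_{k'} E_k^\top$. Factoring out an overall minus sign then yields exactly
\begin{equation*}
  \nabla_W \mathcal{L}(W) = -\sum_{k=1}^K p_k \Big\{ \big[1 - s(k,k,W)\big]\tilE_k E_k^\top - \sum_{k'\neq k} s(k',k,W)\,\tilE_{k'} E_k^\top \Big\},
\end{equation*}
which is the claimed form.

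There is no genuine obstacle here; the only nontrivial input is the softmax Jacobian identity, which itself follows in one line by differentiating $\log\!\big(\sum_{k''}\exp(z_{k''})\big)$ and the linear term $z_k$ separately. The one point to be careful about is keeping straight the two roles played by indices in $[K]$ --- the query index $k$, fixed within each loss term, versus the candidate-object index $k'$, summed over in the softmax denominator --- since conflating them is the easiest way to get a sign or the outer-product order wrong.
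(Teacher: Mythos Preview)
Your proposal is correct and is precisely the ``basic calculus'' the paper alludes to; in fact the paper's own proof of this proposition is simply the sentence ``The proof just follows from the basic calculus. Thus, we omit them here.'' Your write-up fills in exactly those omitted details via the standard softmax--cross-entropy Jacobian and the linearity of the logits in $W$, so there is nothing to add or correct.
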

\begin{proof}[Proof of Proposition~\ref{prop:grad}]
    The proof just follows from the basic calculus. Thus, we omit them here.
\end{proof}
\begin{proposition}\label{prop:svd}
    Let $X = \Lambda + C \in \bbR^{K\times K}$. The matrix $\Lambda = \diag(a \cdot \bbI_L, b\cdot \bbI_{K-L}) $ is a diagonal matrix whose first $L$ diagonal elements are $a$ and the last $K-L$ elements are $b$ with $a,b>0$. The matrix $C$ is a block-wise constant matrix defined as
    \begin{align*}
        C = \begin{bmatrix}
          c_{11}\cdot J_{L,L}      & c_{12}\cdot J_{L,K-L}\\
          c_{21}\cdot J_{K-L,L}      & c_{22}\cdot J_{K-L,K-L}
        \end{bmatrix}.
    \end{align*}
    Then $X=U\Sigma V^{\top}$. Here $\Sigma,V,U$ are defined as follows. All of them can be decomposed into three blocks, each corresponding to a subspace. The first subspace is 
    \begin{align*}
        \mathcal{S}_1 = \left\{ \begin{bmatrix} x \\ 0_{K-L} \end{bmatrix} \bigg|\, x^\top \bbI_L = 0,\text{ and }x\in\bbR^{L} \right\}.
    \end{align*}
    The dimension of this space is $L-1$. The singular value of $X$ corresponding to this subspace is $a$. The block of columns in both $U$ and $V$ that forms an orthonormal basis for this subspace is given by
    \begin{align*}
        \begin{bmatrix} R_{L,L-1} \\ 0_{K-L,L-1} \end{bmatrix},
    \end{align*}
    where the columns of the matrix $R_{L,L-1} \in \mathbb{R}^{L \times (L-1)}$ form an orthonormal basis for the subspace $\{x \in \mathbb{R}^L | x^\top \bbI_L = 0\}$. The second subspace is  
    \begin{align*}
        \mathcal{S}_2 = \left\{ \begin{bmatrix} 0_{L} \\ y \end{bmatrix} \bigg|\, y^\top \bbI_{K-L} = 0,\text{ and }y\in\bbR^{K-L} \right\}.
    \end{align*}
    The dimension of this space is $K-L-1$. The singular value of $X$ corresponding to this subspace is $b$. The block of columns in both $U$ and $V$ that forms an orthonormal basis for this subspace is given by
    \begin{align*}
        \begin{bmatrix} 0_{L,K-L-1} \\ R_{K-L,K-L-1} \end{bmatrix},
    \end{align*}
    where the columns of the matrix $R_{K-L,K-L-1} \in \mathbb{R}^{(K-L) \times (K-L-1)}$ form an orthonormal basis for the subspace $\{y \in \mathbb{R}^{K-L} | y^\top \bbI_{K-L} = 0\}$. The remaining $2$-dimensional  subspace is induced by a $2\times 2$ matrix $M$ defined as
    \begin{align*}
        M = \begin{bmatrix}
          \alpha       & \beta\\
          \gamma      & \delta
        \end{bmatrix}=\tilU\diag(s_1,s_2)\tilV^{\top},
    \end{align*}
    where the elements of $M$ are defined as
    \begin{align*}
        \alpha = a + L c_{11}, \quad
        \beta = \sqrt{L(K-L)} \, c_{12}, \quad
        \gamma = \sqrt{L(K-L)} \, c_{21}, \quad
        \delta = b + (K-L) c_{22}.
    \end{align*}
    The singular values $s_1,s_2$ are 
    \begin{align*}
        s_{1,2} = \sqrt{\frac{T \pm \sqrt{T^2 - 4\Delta}}{2}}, \quad T = \alpha^2 + \beta^2 + \gamma^2 + \delta^2, \quad \Delta = (\alpha \delta - \beta \gamma)^2.
    \end{align*}
    The singular values of $X$ in this subspace are $s_1$ and $s_2$. The corresponding right singular vectors ($v_i$) and left singular vectors ($u_i$), which form columns of $V$ and $U$ respectively, are given by:
    \begin{align*}
        v_i = \tilV_{1,i}e_1 + \tilV_{2,i}e_2, u_i = \tilU_{1,i}e_1 + \tilU_{2,i}e_2 \text{ for }i=1,2,
    \end{align*}
    where the vectors $e_1$ and $e_2$ are defined as
    \begin{align*}
        e_1 = \begin{bmatrix} \frac{1}{\sqrt{L}}\bbI_{L} \\ 0_{K-L} \end{bmatrix},\quad e_2 = \begin{bmatrix} 0_{L} \\ \frac{1}{\sqrt{K-L}}\bbI_{K-L} \end{bmatrix}.
    \end{align*}
    In summary, the \ac{svd} of $X$ is 
    \begin{align*}
        \Sigma &= \diag(a \cdot \bbI_{L-1}, b\cdot \bbI_{K-L-1}, s_1, s_2),\\
        V & = \bigg[\begin{bmatrix} R_{L,L-1} \\ 0_{K-L,L-1} \end{bmatrix}, \begin{bmatrix} 0_{L,K-L-1} \\ R_{K-L,K-L-1} \end{bmatrix}, v_1,v_2\bigg],\\
        U & = \bigg[\begin{bmatrix} R_{L,L-1} \\ 0_{K-L,L-1} \end{bmatrix}, \begin{bmatrix} 0_{L,K-L-1} \\ R_{K-L,K-L-1} \end{bmatrix}, u_1,u_2\bigg].\\
    \end{align*}
\end{proposition}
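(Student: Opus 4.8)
The plan is to read off the SVD of $X$ from a block decomposition adapted to the orthogonal direct sum $\bbR^{K} = \mathcal{S}_{1}\oplus\mathcal{S}_{2}\oplus\mathrm{span}(e_{1},e_{2})$, whose dimensions sum to $(L-1)+(K-L-1)+2 = K$ and hence exhaust the whole space. The key point is that $X$ leaves each of $\mathcal{S}_{1}$, $\mathcal{S}_{2}$, and $\mathrm{span}(e_{1},e_{2})$ invariant; consequently, in any orthonormal basis respecting this decomposition $X$ is block diagonal, so its SVD is obtained by taking the SVD of each block separately and concatenating.

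First I would handle the two large blocks. For $v = [x;\,0_{K-L}]$ with $x^{\top}\bbI_{L}=0$ we have $\Lambda v = [ax;\,0]$, while every block of $C$ acts on $x$ only through the inner product $\bbI_{L}^{\top}x = 0$, so $Cv=0$ and $Xv = av$. Since $a>0$, this produces $L-1$ singular values equal to $a$ whose left and right singular vectors coincide; taking the columns of $R_{L,L-1}$ (padded with zeros) as an orthonormal basis of $\mathcal{S}_{1}$ gives the block $[R_{L,L-1};\,0_{K-L,L-1}]$ appearing in both $U$ and $V$. The argument for $\mathcal{S}_{2}$ is verbatim the same with $a$ replaced by $b$, yielding $K-L-1$ singular values $b$ with singular-vector block $[0_{L,K-L-1};\,R_{K-L,K-L-1}]$.

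Next I would compute the action of $X$ on $\mathrm{span}(e_{1},e_{2})$, using $J_{m,n}\bbI_{n}=n\bbI_{m}$ and the normalizations $\|e_{1}\|=\|e_{2}\|=1$. This gives
\begin{align*}
    Xe_{1} = (a+Lc_{11})\,e_{1} + \sqrt{L(K-L)}\,c_{21}\,e_{2}, \qquad Xe_{2} = \sqrt{L(K-L)}\,c_{12}\,e_{1} + (b+(K-L)c_{22})\,e_{2},
\end{align*}
that is, in the orthonormal basis $\{e_{1},e_{2}\}$ the restriction $X|_{\mathrm{span}(e_{1},e_{2})}$ is precisely the matrix $M$. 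This confirms that $\mathrm{span}(e_{1},e_{2})$ is $X$-invariant; combined with the previous step (which shows its orthogonal complement $\mathcal{S}_{1}\oplus\mathcal{S}_{2}$ is also $X$-invariant), it follows that $X$ has no cross-block terms, so $X = a\,\Pi_{1} + b\,\Pi_{2} + [e_{1},e_{2}]\,M\,[e_{1},e_{2}]^{\top}$, where $\Pi_{1},\Pi_{2}$ denote the orthogonal projectors onto $\mathcal{S}_{1},\mathcal{S}_{2}$.

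Finally, I would invoke the $2\times 2$ SVD $M = \tilU\,\diag(s_{1},s_{2})\,\tilV^{\top}$: setting $v_{i}=\tilV_{1,i}e_{1}+\tilV_{2,i}e_{2}$ and $u_{i}=\tilU_{1,i}e_{1}+\tilU_{2,i}e_{2}$ rewrites the last term as $s_{1}u_{1}v_{1}^{\top}+s_{2}u_{2}v_{2}^{\top}$, and concatenating the three orthonormal families (for $V$: the two $R$-blocks together with $v_{1},v_{2}$; for $U$: the same $R$-blocks together with $u_{1},u_{2}$) with $\Sigma=\diag(a\bbI_{L-1},\,b\bbI_{K-L-1},\,s_{1},\,s_{2})$ produces exactly the asserted decomposition; orthogonality of $U$ and $V$ is immediate since the three families live in mutually orthogonal subspaces and $\tilU,\tilV$ are orthonormal. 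The closed form $s_{1,2}=\sqrt{(T\pm\sqrt{T^{2}-4\Delta})/2}$ then follows from the characteristic polynomial of $M^{\top}M$, whose trace and determinant are $T=\alpha^{2}+\beta^{2}+\gamma^{2}+\delta^{2}$ and $\det(M)^{2}=(\alpha\delta-\beta\gamma)^{2}=\Delta$. There is no serious obstacle here: the computations are routine, and the only things requiring care are tracking the $\sqrt{L}$ and $\sqrt{K-L}$ factors when switching between $\bbI_{L},\bbI_{K-L}$ and the unit vectors $e_{1},e_{2}$, and verifying that both $\mathrm{span}(e_{1},e_{2})$ and its complement are $X$-invariant so that the block-diagonal assembly of the SVD is legitimate (and implicitly using $a,b>0$ so those blocks contribute singular vectors with $u=v$, not $u=-v$).
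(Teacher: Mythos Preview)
Your proposal is correct and follows essentially the same strategy as the paper: verify that $Xv=av$ on $\mathcal{S}_1$ (and $Xv=bv$ on $\mathcal{S}_2$) so that left and right singular vectors coincide there, and then reduce the remaining two-dimensional piece $\mathrm{span}(e_1,e_2)$ to the SVD of the $2\times 2$ matrix $M$. Your write-up is somewhat more explicit about the block-diagonal/invariance structure and the computation of $Xe_1,Xe_2$, but the argument is the same as the paper's.
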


\begin{proof}[Proof of Proposition~\ref{prop:svd}]
    We first prove the results for $\calS_1$. For any vector $v$ in $\calS_1$, it is direct to verify that
    \begin{align*}
        X^{\top}X \begin{bmatrix} v \\ 0_{K-L} \end{bmatrix} = a^2 \begin{bmatrix} v \\ 0_{K-L} \end{bmatrix}.
    \end{align*}
    Thus, the singular value of $X$ corresponding to the subspace spanned by the vector $[ v^{\top}, 0_{K-L}^{\top} ]^{\top}$ is $a$, and the corresponding columns of $V$ form an orthonormal basis for $\calS_1$. For the $U$ calculation, we have that
    \begin{align*}
        X \begin{bmatrix} v \\ 0_{K-L} \end{bmatrix} = a \begin{bmatrix} v \\ 0_{K-L} \end{bmatrix}.
    \end{align*}
    Thus, the corresponding left singular vectors (columns of U) are identical to the right singular vectors for this subspace. A similar calculation can be done for $\calS_2$. The remaining vectors are orthogonal to both $\calS_1$ and $\calS_2$ and thus take the form of 
    \begin{align*}
        v_i = p_1 e_1 + p_2 e_2,\quad  u_i = p_3 e_1 + p_4 e_2 \text{ for }i=1,2 \text{ with }p_1, p_2, p_3, p_4\in\bbR.
    \end{align*}
    By solving the equation $X^{\top}X v_i = \lambda v_i$, we can show that the corresponding singular values and coefficients $p_1, p_2, p_3, p_4$ coincide with those in the \ac{svd} of $M$, as can be verified by simple calculations. Thus, we conclude the proof of Proposition~\ref{prop:svd}.
\end{proof}

\begin{proposition}\label{prop:svd_simp}
    Let $x = [a\cdot\bbI_{L}^{\top},b\cdot\bbI_{K-L}^{\top}]^{\top}\in\bbR^{K}$, and $X = \diag(x)- K^{-1}\bbI_{K}\cdot x^{\top}\in\bbR^{K\times K}$, where $a,b>0$. Then the \ac{svd} of $X=U\Sigma V^{T}$ is that
    \begin{align*}
        \Sigma &= \diag\bigg(a \cdot \bbI_{L-1}, b\cdot \bbI_{K-L-1}, \sqrt{\frac{a^2\cdot (K-L)+b^2\cdot L}{K}}, 0\bigg),\\
        V & = \bigg[\begin{bmatrix} R_{L,L-1} \\ 0_{K-L,L-1} \end{bmatrix}, \begin{bmatrix} 0_{L,K-L-1} \\ R_{K-L,K-L-1} \end{bmatrix}, v_1,v_2\bigg],\\
        U & = \bigg[\begin{bmatrix} R_{L,L-1} \\ 0_{K-L,L-1} \end{bmatrix}, \begin{bmatrix} 0_{L,K-L-1} \\ R_{K-L,K-L-1} \end{bmatrix}, u_1,u_2\bigg].\\
    \end{align*}
    Here, the columns of the matrix $R_{L,L-1} \in \mathbb{R}^{L \times (L-1)}$ form an orthonormal basis for the subspace of vectors in $\mathbb{R}^L$ orthogonal to $\bbI_L$. Similarly, the columns of $R_{K-L,K-L-1} \in \mathbb{R}^{(K-L) \times (K-L-1)}$ form an orthonormal basis for the subspace of vectors in $\mathbb{R}^{K-L}$ orthogonal to $\bbI_{K-L}$. These correspond to the subspaces $\mathcal{S}_1$ and $\mathcal{S}_2$ defined as:
    \begin{align*}
        \mathcal{S}_1 = \left\{ \begin{bmatrix} x \\ 0_{K-L} \end{bmatrix} \bigg|\, x^\top \bbI_L = 0,\text{ and }x\in\bbR^{L} \right\},\quad \mathcal{S}_2 = \left\{ \begin{bmatrix} 0_{L} \\ y \end{bmatrix} \bigg|\, y^\top \bbI_{K-L} = 0,\text{ and }y\in\bbR^{K-L} \right\}.
    \end{align*}
    The vectors $v_1,v_2,u_1,u_2$ are
    \begin{align*}
        v_1 &= \frac{1}{\sqrt{a^2(K-L)+b^2 L}}\bigg(\frac{a\sqrt{K-L}}{\sqrt{L}}\begin{bmatrix} \bbI_{L} \\ 0_{K-L} \end{bmatrix}-\frac{b\sqrt{L}}{\sqrt{K-L}}\begin{bmatrix} 0_{L} \\ \bbI_{K-L} \end{bmatrix}\bigg)\\
        v_2 &= \frac{1}{\sqrt{a^2(K-L)+b^2 L}}\bigg(b\begin{bmatrix} \bbI_{L} \\ 0_{K-L} \end{bmatrix}+a\begin{bmatrix} 0_{L} \\ \bbI_{K-L} \end{bmatrix}\bigg)\\
        u_1 &= \frac{1}{\sqrt{KL(K-L)}}\bigg((K-L)\begin{bmatrix} \bbI_{L} \\ 0_{K-L} \end{bmatrix}-L\begin{bmatrix} 0_{L} \\ \bbI_{K-L} \end{bmatrix}\bigg)\\
        u_2 &= \frac{1}{\sqrt{K}}\bbI_{K}.
    \end{align*}
\end{proposition}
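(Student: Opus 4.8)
The plan is to exhibit an orthogonal decomposition $\bbR^{K}=\mathcal{S}_{1}\oplus\mathcal{S}_{2}\oplus\operatorname{span}\{e_{1},e_{2}\}$ that is invariant under both $X$ and $X^{\top}$, diagonalize the action of $X$ on each block, and then concatenate the per-block singular systems into the full \ac{svd}. This is the same strategy used in the proof of Proposition~\ref{prop:svd}: indeed $X=\Lambda+C$ with $\Lambda=\diag(a\cdot\bbI_{L},b\cdot\bbI_{K-L})$ and block-constant $C$ given by $c_{11}=c_{21}=-a/K$, $c_{12}=c_{22}=-b/K$, so Proposition~\ref{prop:svd_simp} is formally a special case. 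The distinguishing feature is that the rank-one term $K^{-1}\bbI_{K}x^{\top}$ forces the induced $2\times2$ block to be singular, which is exactly what produces the extra zero singular value, so it is cleanest to verify the claimed decomposition directly.

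First I would handle $\mathcal{S}_{1}$ and $\mathcal{S}_{2}$. Take any $v=[w^{\top},0_{K-L}^{\top}]^{\top}$ with $\bbI_{L}^{\top}w=0$. Since $\diag(x)v=av$, $x^{\top}v=a\,\bbI_{L}^{\top}w=0$, and $\bbI_{K}^{\top}v=\bbI_{L}^{\top}w=0$, we get $Xv=av$ and $X^{\top}v=av$ simultaneously; hence the columns of $[R_{L,L-1}^{\top},0_{K-L,L-1}^{\top}]^{\top}$ are at once left and right singular vectors of $X$ with singular value $a$ (using $a>0$), and the corresponding rank-$(L-1)$ piece of $X$ is $a\,(I_{L}-L^{-1}\bbI_{L}\bbI_{L}^{\top})$ on the first $L$ coordinates. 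The identical argument on the last $K-L$ coordinates yields the $\mathcal{S}_{2}$ block with singular value $b$. These two subspaces are mutually orthogonal (disjoint supports) and orthogonal to $e_{1},e_{2}$ (an $\mathcal{S}_{1}$-vector sums to zero where $e_{1}$ is constant, etc.), and together with $e_{1},e_{2}$ they account for $(L-1)+(K-L-1)+2=K$ orthonormal directions.

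Next I would compute the action of $X$ on the remaining two-dimensional space. Using $\bbI_{K}=\sqrt{L}\,e_{1}+\sqrt{K-L}\,e_{2}$ together with $x^{\top}e_{1}=a\sqrt{L}$ and $x^{\top}e_{2}=b\sqrt{K-L}$, a short computation gives $Xe_{1}=\tfrac{a(K-L)}{K}e_{1}-\tfrac{a\sqrt{L(K-L)}}{K}e_{2}$ and $Xe_{2}=-\tfrac{b\sqrt{L(K-L)}}{K}e_{1}+\tfrac{bL}{K}e_{2}$, i.e. $X$ acts on $\operatorname{span}\{e_{1},e_{2}\}$ as
\[
M=\frac{1}{K}\begin{bmatrix}a(K-L)&-b\sqrt{L(K-L)}\\[2pt]-a\sqrt{L(K-L)}&bL\end{bmatrix},
\]
which is precisely the matrix $M$ of Proposition~\ref{prop:svd} under this substitution. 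The key observation is $\det M=\tfrac{1}{K^{2}}\bigl(a(K-L)\cdot bL-ab\,L(K-L)\bigr)=0$, so one singular value vanishes; its right and left null vectors are $v_{2}\propto(b\sqrt{L},a\sqrt{K-L})$ and $u_{2}\propto(\sqrt{L},\sqrt{K-L})$ in the $(e_{1},e_{2})$ basis (so that $u_{2}$ is proportional to $\bbI_{K}$), while the remaining singular value equals $s_{1}=\sqrt{\operatorname{tr}(M^{\top}M)}=\sqrt{(a^{2}(K-L)+b^{2}L)/K}$, with $v_{1},u_{1}$ the respective orthogonal complements inside the plane; the signs are pinned down by checking directly that $Mv_{1}=s_{1}u_{1}$.

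Finally I would assemble the pieces: since $\bbR^{K}$ splits as the orthogonal direct sum of the three $X$- and $X^{\top}$-invariant subspaces, on which $X$ acts as $aI$, $bI$ and $M$, the full \ac{svd} of $X$ is the direct sum of the three per-block singular systems, yielding the stated $\Sigma$, $U$, $V$ once the $(e_{1},e_{2})$-coordinate vectors $v_{1},v_{2},u_{1},u_{2}$ are translated back into $\bbR^{K}$ via $e_{1}=L^{-1/2}[\bbI_{L}^{\top},0_{K-L}^{\top}]^{\top}$ and $e_{2}=(K-L)^{-1/2}[0_{L}^{\top},\bbI_{K-L}^{\top}]^{\top}$; this reproduces the displayed formulas, including the normalizing constant $1/\sqrt{a^{2}(K-L)+b^{2}L}$. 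There is no substantive obstacle; the only thing requiring care is the bookkeeping of the change of basis between $\{e_{1},e_{2}\}$ and the ambient coordinates together with the sign convention pairing $u_{i}$ with $v_{i}$, both of which are settled by the explicit $2\times2$ verification above.
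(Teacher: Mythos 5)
Your proposal is correct and follows essentially the same route as the paper's proof: identify $X$ as the special case of Proposition~\ref{prop:svd} with $c_{11}=c_{21}=-a/K$, $c_{12}=c_{22}=-b/K$, handle $\mathcal{S}_1,\mathcal{S}_2$ as invariant blocks with singular values $a,b$, and reduce the rest to the $2\times 2$ matrix $M$ on $\operatorname{span}\{e_1,e_2\}$, whose vanishing determinant gives the zero singular value and whose Frobenius norm gives $s_1=\sqrt{(a^2(K-L)+b^2L)/K}$. Your explicit computation of the null vectors and the check $Mv_1=s_1u_1$ actually fills in the singular-vector details that the paper leaves as "can be derived by performing the SVD on $M$," but the underlying argument is the same.
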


\begin{proof}[Proof of Proposition~\ref{prop:svd_simp}]
    This proposition is a direct corollary of Proposition~\ref{prop:svd}. The matrix $X = \diag(x)- K^{-1}\bbI_{K}\cdot x^{\top}$ is an instance of the general form $\Lambda+C$ from Proposition~\ref{prop:svd}.

    The diagonal part is $\Lambda = \diag(x) = \diag(a \cdot \bbI_L, b\cdot \bbI_{K-L})$. The off-diagonal part is $C = -K^{-1}\bbI_{K}\cdot x^{\top}$. We can write $C$ in block form:
    \begin{align*}
        C = -\frac{1}{K} \begin{bmatrix} \bbI_L \\ \bbI_{K-L} \end{bmatrix} \begin{bmatrix} a\bbI_L^\top & b\bbI_{K-L}^\top \end{bmatrix} = -\frac{1}{K} \begin{bmatrix} a J_{L,L} & b J_{L,K-L} \\ a J_{K-L,L} & b J_{K-L,K-L} \end{bmatrix}.
    \end{align*}
    This corresponds to setting the block-wise constants in Proposition~\ref{prop:svd} to:
    \begin{align*}
        c_{11} = -a/K, \quad c_{12} = -b/K, \quad c_{21} = -a/K, \quad c_{22} = -b/K.
    \end{align*}
    Substituting these into the formulas for $\alpha, \beta, \gamma, \delta$ from Proposition~\ref{prop:svd} gives:
    \begin{align*}
        \alpha &= a + L(-a/K) = a(K-L)/K \\
        \beta &= \sqrt{L(K-L)}(-b/K) \\
        \gamma &= \sqrt{L(K-L)}(-a/K) \\
        \delta &= b + (K-L)(-b/K) = bL/K
    \end{align*}
    These coefficients define the $2\times 2$ matrix $M$ from Proposition~\ref{prop:svd} for this specific case. We now analyze this matrix $M$. A key observation is that its determinant is zero:
    \begin{align*}
        \det(M) = \alpha\delta - \beta\gamma = \frac{a(K-L)}{K}\frac{bL}{K} - \left(\frac{L(K-L)}{K^2}\right)(-b)(-a) = 0.
    \end{align*}
    Since the determinant is zero, one of its singular values must be zero. The other singular value, $s_1$, can be calculated from the squared Frobenius norm (sum of squares of elements), which is also the sum of squared singular values ($s_1^2+s_2^2$):
    \begin{align*}
        s_1^2 + 0^2 = \alpha^2+\beta^2+\gamma^2+\delta^2 &= \frac{a^2(K-L)^2}{K^2} + \frac{L(K-L)b^2}{K^2} + \frac{L(K-L)a^2}{K^2} + \frac{b^2L^2}{K^2}\\
        & = \frac{a^2(K-L)+b^2L}{K}.
    \end{align*}
    This confirms the singular values stated in the proposition. The singular vectors $v_1, v_2, u_1, u_2$ can be derived by performing the SVD on this specific $2\times 2$ matrix $M$.
\end{proof}
\end{document}